\newtheorem{thm}{Theorem}[]
\newtheorem{prop}{Property}[]
\newtheorem{ass}{Assumption}[]
\newtheorem{rem}{Remark}[]
\definecolor{dark-red}{rgb}{0.4,0.15,0.15}
\definecolor{dark-blue}{rgb}{0,0,0.7}
\newcommand{\topic}[1]{\textcolor{violet}{}}
\let\oldnl\nl% Store \nl in \oldnl
\newcommand{\nonl}{\renewcommand{\nl}{\let\nl\oldnl}}% Remove line number for one line
\title{My Publication Title --- Single Author}
\author {
    % Author
    Author Name \\
}
\title{High-Confidence Off-Policy (or Counterfactual) Variance Estimation}
\author {
    % Authors
        Yash Chandak \,\,\,\,
        % \textsuperscript{\rm 1} 
        Shiv Shankar \,\,\,\, Philip S. Thomas\\
}
\begin{document}

% \linenumbers  %
\maketitle

\begin{abstract}
Many sequential decision-making systems leverage data collected using prior policies to propose a new policy.
For critical applications, it is important that high-confidence guarantees on the new policy's behavior are provided before deployment, to ensure that the policy will behave as desired.
Prior works have studied high-confidence off-policy estimation of the \emph{expected} return, however, high-confidence off-policy estimation of the \emph{variance} of returns can be  equally critical for high-risk applications. 
In this paper we tackle the previously open problem of estimating and bounding, with high confidence, the variance of returns from off-policy data. 
% *** I couldn't get around how redundant this starts sounding when it is clarified. I now think we should cut the second part of the sentence. All that is lost is the precision of saying the estimator is unbiased and consistent.
% by constructing both (a) an unbiased and consistent estimator of, and (b) confidence intervals for, the variance of returns.
%
\end{abstract}

\section{Introduction}

\textit{Reinforcement learning} (RL) has emerged as a promising method for solving sequential decision-making problems \cite{SuttonBarto2}.
Deploying RL to real-world applications, however, requires additional consideration of reliability, which has been relatively understudied.
Specifically, it is often desirable to provide high-confidence guarantees on the behavior of a given policy, \textit{before} deployment, to ensure that the policy will behave as desired.
%
% Such guarantees are typically required  and thus only existing data that can be leveraged for it.

Prior works in RL have studied the problem of providing high-confidence guarantees on the \textit{expected} return of an evaluation policy, $\pi$, using only data collected from a currently deployed policy called the \textit{behavior policy}, $\beta$ \cite{thomas2015high,Hanna2017Bootstrapping,kuzborskij2020confident}.
Analogously, researchers have also studied the problem of \textit{counter-factually} estimating and bounding the average treatment effect, with high confidence, using data from past treatments \cite{bottou2013counterfactual}.   
While these methods present important contributions towards developing practical algorithms, real-world problems may require additional consideration of the \textit{variance} of returns (effect) under any new policy (treatment) before it can be deployed responsibly.
%
%
% I don't think the sentence below is important enough to be this high up in the intro. Perhaps add it later if you want. Most readers will already know this. Plus, it doesn't really fit with the topic / story of this paragraph (perhaps some transition could be added to tie it in).
%Even if a policy (treatment) is deterministic, the application problem might be stochastic which can lead to variability in returns (effect). 

For applications that have high stakes in the terms of financial cost or public well-being, only providing guarantees on the mean outcome might \textit{not} be sufficient.
\textit{Analysis of variance} (ANOVA) has therefore become a \textit{de-facto} standard for many industrial and medical applications \cite{tabachnick2007experimental}.
Similarly, analysis of variance can inform numerous real-world applications of RL.
For example, (a) analysing the variance of outcomes in a robotics application \citep{kuindersma2013variable}, 
%sensitivity of a robot controller to stochasticity in the environment, 
(b) ensuring that the variance of outcomes for a medical treatment is not high, (c) characterizing the variance of customer experiences for a recommendation system \cite{teevan2009using}, or (d) limiting the variability of the performance of an autonomous driving system \cite{montgomery2007introduction}.

More generally, variance estimation can be used to account for risk in decision-making by designing objectives that maximize the mean of returns but minimize the variance of returns \citep{sato2001td,di2012policy, la2013actor}. 
Variance estimates have also been shown to be useful for automatically adapting hyper-parameters, like the exploration rate \citep{sakaguchi2004reliability} or  $\lambda$ for eligibility-traces \citep{white2016greedy}, and might also inform other methods that depend on the entire distribution of returns \citep{bellemare2017distributional,dabney2017distributional}. 
%quality control for performance variability of autonomous driving systems \cite{montgomery2007introduction}, to name a few.

\begin{figure}
    \centering
    \includegraphics[width=0.375\textwidth]{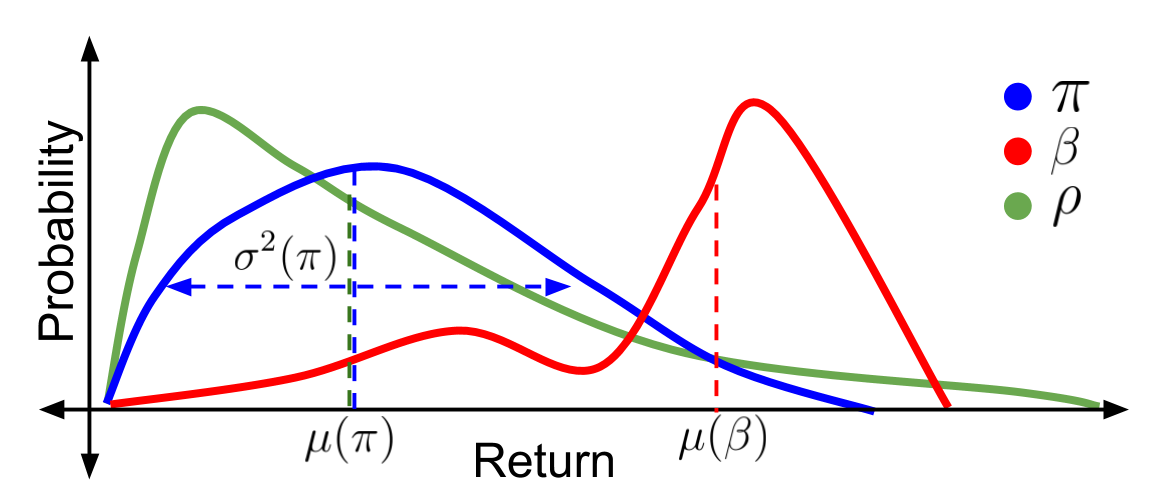}
    \caption{Illustrative example of the distributions of returns from a behavior policy $\beta$, and evaluation policy $\pi$, along with the \emph{importance weighted returns} $\rho$, discussed later. 
    Given trajectories from the behavior policy $\beta$, we aim to estimate and bound the variance, $\sigma^2(\pi)$, of returns under an evaluation policy $\pi$, with high confidence. 
    Note that the distribution of \textit{importance-weighted} returns $\rho$ has the mean value $\mu(\pi)$, but might have variance not equal to $\sigma^2(\pi)$.  }
    \label{fig:OVE}
\end{figure}

Despite the wide applicability of variance analysis, estimating and bounding the variance of returns with high confidence, using only off-policy data, has remained an understudied problem.  
%
% I'm blending this para with the next, and moving the sentence below later.
%We show that the typical use of IS in RL only corrects for the mean, and so it does not directly provide unbiased off-policy estimates of variance.
%
% The sentence below was awkward. It's also not on-topic for this paragraph. Plus, next para you say that we propose an OPVE and HCOPVE method - you haven't said that yet, so it's weird to say now that we *also* lower the variance of these methods.
%Further, addressing the above problems open up new questions regarding how to mitigate the variance of the off-policy variance estimator itself?
%
%
In this paper, we first formalize the problem statement; 
an illustration of which is provided in Figure \ref{fig:OVE}.
We show that the typical use of \textit{importance sampling} (IS) in RL only corrects for the mean, and so it does not directly provide unbiased off-policy estimates of variance. 
We then present an off-policy estimator of the variance of returns that uses %by using
IS twice, together with a simple double-sampling technique.
To reduce the variance of the estimator, %resulting due to IS, 
we extend the per-decision IS technique \cite{precup2000eligibility} to off-policy variance estimation.
Building upon this estimator, we provide confidence intervals for the variance using (a) concentration inequalities, and (b) statistical bootstrapping.\\
%
% Finally. we present empirical results on three simulated domains inspired by real-world applications.

% In the following, we first introduce the notation and setup the problem statement formally.
% %
% We then discuss two potential intuitive approaches and show that only one of them is consistent and both of them are biased estimators.   

\noindent\textbf{Advantages: }
The proposed variance estimator has several advantages: (a) it is a model-free estimator and can thus be used irrespective of the environment complexity, (b) it requires only off-policy data and can therefore be used \textit{before} actual policy deployment, (c) it is unbiased and  consistent.
For high-confidence guarantees, (d) we provide both upper and lower confidence intervals for the variance that have guaranteed coverage (that is, they hold with any desired confidence level and without requiring false assumptions), %hold %with certainty 
%for any desired confidence level, 
and (e) we also provide bootstrap confidence intervals, which are approximate but often more practical.\\

\noindent \textbf{Limitations: } The proposed off-policy estimator of the variance relies upon IS and thus inherits its limitations. Namely,  (a)  it requires knowledge of  the action probabilities from the behavior policy $\beta$, (b)  it requires that the support of the trajectories under the evaluation policy $\pi$ is a subset of the support under the behavior policy $\beta$, and (c) the variance of the estimator scales exponentially with the length of the trajectory \cite{guo2017using,liu2018breaking}.
%
% I would cut the sentence below. See comment.
% Further, (d) the proposed estimator of the variance can result in a \textit{negative} value and thus caution is warranted during practical use.

\section{Background and Problem Statement}
A \textit{Markov decision process} (MDP) is a tuple $(\mathcal S, \mathcal A, \mathcal P, \mathcal R, \gamma, d_0)$, where $\mathcal S$ is the set of states, $\mathcal A$ is the set of actions, $\mathcal P$ is the transition function, $\mathcal R$ is the reward function, $\gamma \in [0,1)$ is the discount factor, and $d_0$ is the starting state distribution.\footnote{
We formulate the problem in terms of MDPs, but it can analogously be formulated in terms of \textit{structural causal models.} \cite{pearl2009causality}.
%
% Specifically, we adopt the notation for MDPs.
%
For simplicity, we consider finite  states and actions, but our results extend to POMDPs (by replacing states with \textit{observations}) and to continuous states and actions (by appropriately replacing summations with integrals), and to infinite horizons ($T \coloneqq \infty$).}
%
% Let $\mathcal R(s,a)$ denote the expected reward of taking action $a$ in state $s$.
%
A policy $\pi$ is a distribution over the actions  conditioned on the state, i.e., $\pi(a|s)$ represents the probability of taking action $a$ in state $s$.
We assume that the MDP has finite horizon $T$, after which any action leads to an absorbing state $S_{(\infty)}$. 
In general, we will use subscripts with parentheses for the timestep and subscript without parentheses to indicate the episode number.
Let $R_{i(j)} \in [R_\texttt{min}, R_\texttt{max}]$ represent the reward observed at timestep $j$ of the episode $i$. 
Let the random variable $G_i \coloneqq \sum_{j=0}^T \gamma^j R_{i(j)}$ be the \textit{return} for episode $i$.
Let $c \coloneqq (1 - \gamma^T)/(1-\gamma)$ so that the minimum and the maximum returns possible are $G_\texttt{min} \coloneqq c R_\texttt{min}$ and $G_\texttt{max} \coloneqq c R_\texttt{max}$, respectively.
Let $\mu(\pi) \coloneqq \mathbb{E}_\pi[G]$ be the expected return, and $\sigma^2(\pi) \coloneqq \mathbb{V}_\pi[G]$ be the variance of returns, where the subscript $\pi$ denotes that the trajectories are generated using policy $\pi$.
Let $\mathcal H^\pi_{(i):(j)}$ be the set of all possible trajectories for a policy $\pi$, from timestep $i$ to timestep $j$.
Let $H$ denote a complete trajectory: $(S_{(0)}, A_{(0)}, \Pr(A_{(0)}|S_{(0)}), R_{(0)}, S_{(1)}, ..., S_{(\infty)})$, where $T$ is the horizon length, and $S_{(0)}$ is sampled from $d_0$.
Let $\mathcal D$ be a set of $n$ trajectories $\{H_i\}_{i=1}^n$ generated using \textit{behavior} policies $\{\beta_i\}_{i=1}^n$, respectively.
Let $\rho_i(0,T) \coloneqq \prod_{j=0}^{T} \frac{\pi(A_{i(j)}|S_{i(j)})}{\beta_i(A_{i(j)}|S_{i(j)})}$ denote the product of \textit{importance ratios} from timestep $0$ to $T$.
For brevity, when the range of timesteps is not necessary, we write $\rho_i \coloneqq \rho_i(0,T)$. 
Similarly, when referring to $\rho_i$ for an arbitrary $i \in \{1,\dotsc,n\}$, we often write $\rho$.
With this notation, we now formalize the \textit{off-policy variance estimation} (OVE) and the \textit{high-confidence off-policy variance estimation} (HCOVE) problems.

\noindent
\paragraph{OVE Problem:}  Given a set of trajectories $\mathcal D$ and an \textit{evaluation} policy $\pi$, we aim to find an estimator $\hat \sigma_n^2$ that  is both an unbiased and consistent estimator of $\sigma^2(\pi)$, i.e.,
\begin{align}
    \mathbb{E}[\hat \sigma_n^2] &= \sigma^2(\pi), &
    %\underset{n\rightarrow\infty}{\text{lim}} \,\, 
    \hat \sigma_n^2 &\overset{\text{a.s.}}{\longrightarrow} \sigma^2(\pi).
\end{align}

\noindent
\paragraph{HCOVE Problem:}  Given a set of trajectories $\mathcal D$, an \textit{evaluation} policy $\pi$, and a confidence level $1 - \delta$, we aim to find a confidence interval $\mathcal C \coloneqq [v^{\texttt{lb}}, v^{\texttt{ub}}]$, such that
\begin{align}
    \Pr \left(\sigma^2(\pi) \in \mathcal C\right) \geq 1 - \delta.
\end{align}

\begin{rem}
It is worth emphasizing that the OVE problem is about estimating the variance of returns, and \textit{not} the variance of the estimator of the mean of returns.
\end{rem}

These problems would not be possible to solve if the trajectories in $\mathcal D$ are not informative about the trajectories that are possible under $\pi$.
For example, if $\mathcal D$ has no trajectory that could be observed if policy $\pi$ were to be executed, then $\mathcal D$ provides little or no information about the possible outcomes under $\pi$.
To avoid this case, %sideline this issue, 
% let $\mathcal H^\pi_{(0):(T)}$ be the set of \textit{all} trajectories possible when executing any policy $\pi$.
%
we make the following  common assumption %, which is typical for IS based methods 
\cite{precup2000eligibility}, which is satisfied if $(\beta_i(a|s)=0)\implies(\pi(a|s) = 0)$ for all $s \in \mathcal S, a \in \mathcal A,$ and $i \in \{1,\dotsc,n\}$. 
\begin{ass} The set $\mathcal D$ contains independent trajectories generated using behavior policies $\{\beta_i\}_{i=1}^n$, such that 
$$\forall i, \,\, \mathcal H^\pi_{(0):(T)} \subseteq \mathcal H^{\beta_i}_{(0):(T)}.$$
\thlabel{ass:supp}
\end{ass} 
%
%

% This sentence below is not referencing Assumtpion 1. Knowing action probabilities for the data is not enough - you need to know all states and actions possible under each policy. I'm rewording to point out this is a property of the final methods, not related to Assumption 1.
The methods that we derive, and IS methods in general, do not require %Notice that the 
complete knowledge of $\{\beta_i\}_{i=1}^n$ (which might be parameterized using deep neural networks and might be hard to store). % is not required.
Only the probabilities, $\beta_i(a|s)$, for states $s$ and actions $a$ present in $\mathcal D$ are required. %available in $H_i$ for the action taken for all the states contained in $\mathcal D$ suffice.
For simplicity, we restrict our notation to a single behavior policy $\beta$, such that $\forall i, \,\, \beta_i = \beta$.

\section{Na\"{i}ve Methods}
In the \textit{on-policy} setting, computing an estimate of $\mu(\pi)$ or $\sigma^2(\pi)$ is trivial---sample $n$ trajectories using $\pi$ and compute the sample mean or variance of the observed returns, $\{G_i\}_{i=1}^n$.
In the \textit{off-policy} setting, under \thref{ass:supp},  the sample mean  $\hat \mu \coloneqq \frac{1}{n}\sum_{i=1}^n \rho_i G_i$ of the \emph{importance weighted returns} $\{\rho_i G_i\}_{i=1}^n$, is an unbiased estimator of $\mu(\pi)$ \cite{precup2000eligibility}, i.e., % known that 
$\mathbb{E}_\beta [\hat \mu]=\mu(\pi)$. % is an unbiased and a consistent estimator of $\mu(\pi)$ \cite{precup2000eligibility}.
Similarly, one natural way to estimate $\sigma^2(\pi)$ in the off-policy setting might be to compute the sample variance (with Bessel's correction) of the importance sampled returns $\{\rho_i G_i\}_{i=1}^n$,
\begin{align}
    \hat \sigma_n^{2!!} \coloneqq \frac{1}{n-1}\sum_{i=1}^n \left (\rho_i G_i - \frac{1}{n} \sum_{j=1}^n \rho_j G_j \right)^2. \label{eqn:naive1}
\end{align}

Unfortunately, $\hat \sigma_n^{2!!}$ is neither an unbiased nor consistent estimator of $\sigma^2(\pi)$, in general, as shown in the following properties. 
These properties also reveal that $\rho_i G_i$ only corrects the distribution for the mean and not for the variance, as depicted in Figure \ref{fig:OVE}. 
Also, note that all proofs are deferred to the appendix.
%
%We formalize this in the following properties,

%

\begin{prop}
    Under \thref{ass:supp}, $\hat \sigma_n^{2!!}$ may be a biased estimator of $\sigma^2(\pi)$.
    That is, it is possible that $\mathbb{E}_\beta[\hat \sigma_n^{2!!}] \neq \sigma^2(\pi)$. \thlabel{prop:naive1biased}
\end{prop}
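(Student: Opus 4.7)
The plan is to reduce the expectation of $\hat\sigma_n^{2!!}$ to a familiar form and then exhibit a specific counterexample witnessing a nonzero bias. Since $\{\rho_i G_i\}_{i=1}^n$ are i.i.d.\ under $\beta$ (trajectories in $\mathcal D$ are independent by \thref{ass:supp}), the sample variance with Bessel's correction is an unbiased estimator of the variance of the underlying random variable. Hence
\begin{equation}
\mathbb{E}_\beta\!\left[\hat\sigma_n^{2!!}\right] \;=\; \mathbb{V}_\beta[\rho G] \;=\; \mathbb{E}_\beta[\rho^2 G^2] - \mathbb{E}_\beta[\rho G]^2.
\end{equation}

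Next I would use the standard importance sampling identities. Under \thref{ass:supp}, $\mathbb{E}_\beta[\rho G] = \mu(\pi)$, and $\mathbb{E}_\pi[G^2] = \mathbb{E}_\beta[\rho G^2]$. Therefore
\begin{equation}
\mathbb{E}_\beta\!\left[\hat\sigma_n^{2!!}\right] - \sigma^2(\pi) \;=\; \mathbb{E}_\beta[\rho^2 G^2] - \mathbb{E}_\beta[\rho G^2] \;=\; \mathbb{E}_\beta\!\left[\rho(\rho-1)G^2\right].
\end{equation}
This clean expression makes transparent why $\rho G$ only corrects the \emph{first} moment: only after an extra factor of $\rho$ inside the expectation do we obtain $\mathbb{E}_\pi[G^2]$.

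To finish the ``may be'' claim I would construct a minimal counterexample in which this residual term is nonzero. The simplest is a single-step MDP (a two-armed bandit): one state, two actions $a_1, a_2$ with deterministic, distinct, nonzero rewards, horizon $T=0$, and $\beta \neq \pi$ with both giving strictly positive probability to both actions. Then $\rho \in \{\pi(a_1)/\beta(a_1),\,\pi(a_2)/\beta(a_2)\}$ takes at most two values and $\mathbb{E}_\beta[\rho(\rho-1)G^2]$ reduces to a short closed-form expression in four free parameters; it is easy to pick numbers (e.g.\ $\pi(a_1)=0.5$, $\beta(a_1)=0.1$, rewards $0$ and $1$) so the bias is strictly positive. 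This simultaneously establishes the property and illustrates the intuition shown in Figure~\ref{fig:OVE}.

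The step I expect to be mildly delicate is the bookkeeping that confirms $\hat\sigma_n^{2!!}$ is genuinely the Bessel-corrected sample variance of the i.i.d.\ sequence $\{\rho_i G_i\}$, justifying the one-line reduction to $\mathbb{V}_\beta[\rho G]$; once that is in place, the remainder is just algebra and a numerical example. No concentration or asymptotic machinery is needed for this property, which keeps the argument short.
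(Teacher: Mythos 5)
Your proof is correct, but it takes a genuinely different route from the paper's. The paper proves the property by brute force: it fixes a tiny three-state MDP with two actions (rewards $1$ and $0$), a deterministic $\pi$, a uniform $\beta$, and $n=2$, then enumerates all four equally likely action pairs and computes $\mathbb{E}_\beta[\hat\sigma_n^{2!!}]=1\neq 0=\sigma^2(\pi)$. You instead observe that $\{\rho_i G_i\}_{i=1}^n$ are i.i.d.\ under $\beta$, so the Bessel-corrected sample variance is exactly unbiased for $\mathbb{V}_\beta[\rho G]$, which yields the closed-form bias
\begin{equation}
\mathbb{E}_\beta\bigl[\hat\sigma_n^{2!!}\bigr]-\sigma^2(\pi)=\mathbb{E}_\beta\bigl[\rho(\rho-1)G^2\bigr]
\end{equation}
valid for \emph{every} $n\geq 2$, and then exhibit a one-step bandit where this is nonzero. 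Your approach buys more: it identifies precisely where the bias comes from (the uncorrected second power of $\rho$, echoing the paper's later informal discussion of self-coupled terms), shows the result is not an artifact of $n=2$, and immediately characterizes when the bias vanishes. The paper's enumeration is more elementary and doubles as a reusable counterexample for its subsequent properties, but proves strictly less. One small nit: you announce ``deterministic, distinct, nonzero rewards'' and then choose rewards $0$ and $1$; the zero reward is harmless (it just kills one term in the bias expression, and the remaining term $\pi(a_1)(\pi(a_1)/\beta(a_1)-1)=2>0$), but you should drop the word ``nonzero'' for consistency. You should also state explicitly that $\beta$ having full support guarantees the support assumption holds, which you do implicitly by requiring both actions to have strictly positive probability under $\beta$.
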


\begin{prop}
    Under \thref{ass:supp}, $\hat \sigma_n^{2!!}$ may not be a consistent estimator of $\sigma^2(\pi)$.
    That is, it is not always the case that %as  $n \rightarrow \infty,\,\, 
    $\hat \sigma_n^{2!!} \overset{\text{a.s.}}{\longrightarrow} \sigma^2(\pi)$. \thlabel{prop:naive1inconsistent}
\end{prop}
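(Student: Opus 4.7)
The plan is to show that $\hat\sigma_n^{2!!}$ almost surely converges to a quantity that can differ from $\sigma^2(\pi)$, and then exhibit a concrete MDP realizing this gap.

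First, I would expand the sample-variance expression algebraically as
\begin{align}
    \hat\sigma_n^{2!!} = \frac{n}{n-1}\left(\frac{1}{n}\sum_{i=1}^n (\rho_i G_i)^2 - \left(\frac{1}{n}\sum_{i=1}^n \rho_i G_i\right)^{\!2}\right).
\end{align}
Under \thref{ass:supp} together with the paper's finite-state/finite-action setup, $\rho_i$ is bounded (since the support condition ensures $\beta(a|s)>0$ whenever $\pi(a|s)>0$), and $G_i$ lies in $[G_\texttt{min}, G_\texttt{max}]$. Hence $\rho_i G_i$ has finite second moment, so the strong law of large numbers applies to both sample averages. I would conclude
\begin{align}
    \hat\sigma_n^{2!!} \overset{\text{a.s.}}{\longrightarrow} \mathbb{E}_\beta\!\left[(\rho G)^2\right] - \left(\mathbb{E}_\beta[\rho G]\right)^{\!2} = \mathbb{V}_\beta[\rho G],
\end{align}
using that $\mathbb{E}_\beta[\rho G]=\mu(\pi)$ and the factor $n/(n-1)\to 1$.

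Next I would argue that $\mathbb{V}_\beta[\rho G]\neq \sigma^2(\pi)$ in general, so the almost-sure limit is not $\sigma^2(\pi)$, which rules out consistency. For this I would use (or re-use) the same minimal counterexample that supports \thref{prop:naive1biased}: a one-step MDP with two actions and a single state, $\pi(a_1)=1$, $\beta(a_1)=\beta(a_2)=1/2$, and deterministic rewards $R(a_1)=r\neq 0$, $R(a_2)=0$. Then $\sigma^2(\pi)=0$, whereas $\rho G$ equals $2r$ with probability $1/2$ and $0$ with probability $1/2$, giving $\mathbb{V}_\beta[\rho G]=r^2>0$. Thus $\hat\sigma_n^{2!!}\xrightarrow{\text{a.s.}} r^2 \neq \sigma^2(\pi)$.

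The only step requiring any care is justifying SLLN cleanly; the boundedness of $\rho$ (or at least the finite second moment of $\rho G$) must be explicit, otherwise one could worry about unbounded importance weights causing pathological behavior. With that justified, the counterexample immediately delivers inconsistency. I do not anticipate a genuine obstacle here; the main conceptual point is simply that the limit of $\hat\sigma_n^{2!!}$ is the variance of $\rho G$ under $\beta$, not the variance of $G$ under $\pi$, and importance weighting corrects only the first moment.
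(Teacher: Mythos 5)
Your proposal is correct and follows essentially the same route as the paper: expand the sample variance, apply the strong law of large numbers (plus Slutsky) to identify the almost-sure limit as $\mathbb{E}_\beta[(\rho G)^2]-\mathbb{E}_\beta[\rho G]^2=\mathbb{V}_\beta[\rho G]$, and observe that this need not equal $\sigma^2(\pi)$. Your version is in fact slightly more complete, since you instantiate a concrete two-action counterexample where the limit is $r^2\neq 0=\sigma^2(\pi)$, whereas the paper's proof of this property only asserts that the limit ``may not be equal'' to $\sigma^2(\pi)$ and implicitly relies on the counterexample given for \thref{prop:naive1biased}.
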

%
% \begin{proof}
%     See Appendix A for both the proofs.
% \end{proof}
% %

%
%
% I think I butchered this line below - it should have a \not in it. Either way, it's not adding anything. This is now exactly what property two says. I propose removing this line. below.
%%That is, %in the limit as $n \rightarrow \infty$, 
%$\hat \sigma_n^{2!!} \overset{\text{a.s.}}{\longrightarrow} \mathbb{E}_\beta[(\rho G - \mathbb{E}_\beta[\rho G])^2 ]\neq \sigma^2(\pi)$.
%
%
% , where $\rho$ only corrects for the off-policy distribution in the inner expectation.
%

Since the on-policy variance is $\mathbb{E}_\pi[(G - \mathbb{E}_\pi[G])^2]$, 
a natural alternative might be to construct an estimator that corrects the off-policy distribution for both the mean and the variance.
That is, using the equivalence
\begin{align}
   \mathbb{V}_\pi(G)  = \mathbb{E}_\pi[(G - \mathbb{E}_\pi[G])^2] =  \mathbb{E}_\beta[\rho (G - \mathbb{E}_\beta[\rho G])^2], \label{eqn:doublecorrection}
\end{align}
an alternative might be to use a plug-in estimator for  $\mathbb{E}_\beta[\rho (G - \mathbb{E}_\beta[\rho G])^2]$ (with Bessel's correction) as,%

\begin{align}
    \hat \sigma_n^{2!} \coloneqq \frac{1}{n-1}\sum_{i=1}^n  \rho_i \left ( G_i - \frac{1}{n} \sum_{j=1}^n \rho_j G_j \right)^2.
    \label{eqn:naive2}
\end{align}
While $\hat \sigma_n^{2!}$ turns out to be a consistent estimator,  it is still not an unbiased estimator of $\sigma^2(\pi)$.
We formalize this in the following properties.

\begin{prop}
    Under \thref{ass:supp}, $\hat \sigma_n^{2!}$ may be a biased estimator of $\sigma^2(\pi)$.
    That is, it is possible that $\mathbb{E}_\beta[\hat \sigma_n^{2!}] \neq \sigma^2(\pi)$. \thlabel{prop:naive2biased}
\end{prop}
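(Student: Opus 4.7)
The plan is to prove non-unbiasedness by exhibiting an explicit counterexample rather than attempting a general analysis, since only one concrete MDP, policy pair, and sample size is needed where $\mathbb{E}_\beta[\hat\sigma_n^{2!}] \neq \sigma^2(\pi)$. The simplest setting suffices: a one-step ``bandit'' MDP (single state, horizon $T=1$) with two actions and deterministic rewards, so that each trajectory reduces to a single action--reward pair and the importance weight is simply the ratio of action probabilities. This keeps the expectation a finite sum that can be verified by hand.

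Concretely, I would take actions $\{a_1,a_2\}$ with rewards $r_1 = 1$, $r_2 = 0$; a uniform behavior policy $\beta(a_1)=\beta(a_2)=1/2$; and an asymmetric evaluation policy $\pi(a_1)=1/4$, $\pi(a_2)=3/4$, so that $\rho(a_1)=1/2$ and $\rho(a_2)=3/2$. This choice guarantees $\pi\neq\beta$, which is important, because when $\pi=\beta$ the estimator $\hat\sigma_n^{2!}$ collapses to the ordinary Bessel-corrected sample variance and is trivially unbiased. The true target value is $\sigma^2(\pi) = (1/4)(3/4)(1-0)^2 = 3/16$.

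Next, I would fix $n=2$ and enumerate the four equiprobable outcomes $(G_1,G_2)\in\{(1,1),(1,0),(0,1),(0,0)\}$. For each outcome I compute $(\rho_1,\rho_2)$, the plug-in mean $\bar X = (\rho_1 G_1 + \rho_2 G_2)/2$, and then $\hat\sigma_2^{2!} = \rho_1(G_1-\bar X)^2 + \rho_2(G_2-\bar X)^2$. Averaging the four values (each weighted by $1/4$) yields the expectation $\mathbb{E}_\beta[\hat\sigma_2^{2!}]$. The arithmetic produces a value strictly greater than $3/16$ (my rough computation gives $7/32$), so $\mathbb{E}_\beta[\hat\sigma_2^{2!}]\neq\sigma^2(\pi)$, establishing the claim.

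The main obstacle is essentially a modeling choice rather than a mathematical one: the formula \eqref{eqn:doublecorrection} justifies the estimator at the \emph{population} level, but the fact that the outer $\rho_i$ multiplies the squared deviation means that the Bessel factor $n/(n-1)$, which was tailored to cancel the bias of the ordinary sample variance around the sample mean, no longer cancels the analogous bias here (because $\sum_i\rho_i$ is random and not equal to $n$). So the only real ``work'' in the proof is to be careful in choosing a setting where this residual bias is nonzero and verifiable by direct enumeration; the asymmetric $\pi$ above accomplishes this, while symmetric or degenerate choices would mask the effect.
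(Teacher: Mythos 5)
Your proposal takes essentially the same route as the paper: a two-action, single-decision counterexample with a uniform behavior policy and $n=2$, verified by enumerating the four equiprobable outcomes (the paper simply uses a deterministic evaluation policy so that the target variance is $0$, which makes the mismatch even more immediate). One small correction: for your choice of $\pi=(1/4,3/4)$ the four outcomes give estimator values $1/4$, $3/8$, $3/8$, and $0$, so $\mathbb{E}_\beta[\hat\sigma_2^{2!}]=1/4=8/32$ rather than $7/32$, but since $1/4\neq 3/16$ the counterexample still establishes the claim.
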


\begin{prop}
    Under \thref{ass:supp}, $\hat \sigma_n^{2!}$ is a consistent estimator of $\sigma^2(\pi)$.
    That is, $\hat \sigma_n^{2!} \overset{\text{a.s.}}{\longrightarrow} \sigma^2(\pi)$. %as  $n \rightarrow \infty,\,\, \hat \sigma_n^{2!} \overset{\text{a.s.}}{\longrightarrow} \sigma^2(\pi)$.
    \thlabel{prop:naive2consistent}
\end{prop}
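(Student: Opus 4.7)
The plan is to expand the quadratic inside the sum so that $\hat\sigma_n^{2!}$ decomposes into a fixed combination of three i.i.d.\ sample averages, and then apply the strong law of large numbers (SLLN) to each piece together with the continuous mapping theorem.

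Concretely, I would first write
\begin{align}
\hat\sigma_n^{2!} \;=\; \frac{n}{n-1}\!\left[\,\frac{1}{n}\sum_{i=1}^n \rho_i G_i^2 \;-\; 2\hat\mu\cdot\frac{1}{n}\sum_{i=1}^n \rho_i G_i \;+\; \hat\mu^{\,2}\cdot\frac{1}{n}\sum_{i=1}^n \rho_i\,\right],
\end{align}
where $\hat\mu \coloneqq \tfrac{1}{n}\sum_j \rho_j G_j$. The terms $\{\rho_i\}$, $\{\rho_i G_i\}$, and $\{\rho_i G_i^2\}$ are each i.i.d.\ under $\beta$ (the trajectories in $\mathcal D$ are independent by \thref{ass:supp}). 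Bounded rewards give $|G_i| \le G_\texttt{max}$, so each of these random variables has finite mean: $\mathbb E_\beta[\rho]=1$, $\mathbb E_\beta[\rho G] = \mu(\pi)$, and $\mathbb E_\beta[\rho G^2] = \mathbb E_\pi[G^2]$, where the last two equalities use the standard importance-sampling identity justified by \thref{ass:supp}. Kolmogorov's SLLN then gives almost-sure convergence of each sample average to its expectation.

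Next I would combine these limits. Since $\hat\mu \xrightarrow{\text{a.s.}} \mu(\pi)$, $\tfrac{1}{n}\sum_i\rho_i G_i^2 \xrightarrow{\text{a.s.}} \mathbb E_\pi[G^2]$, $\tfrac{1}{n}\sum_i \rho_i \xrightarrow{\text{a.s.}} 1$, and $\tfrac{n}{n-1}\to 1$, the continuous mapping theorem (applied to the polynomial map $(a,b,c,d)\mapsto d(a - 2b^2 + b^2 c)$, with a union bound over the probability-one events) yields
\begin{align}
\hat\sigma_n^{2!} \;\xrightarrow{\text{a.s.}}\; \mathbb E_\pi[G^2] \;-\; 2\mu(\pi)^2 \;+\; \mu(\pi)^2 \;=\; \sigma^2(\pi),
\end{align}
which is precisely the claimed consistency.

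There is no real obstacle: the only subtlety is making sure each of the three sample averages is built from i.i.d.\ terms with finite mean, which is where the independence in \thref{ass:supp} and the boundedness of returns (ensuring $\mathbb E_\beta[\rho G^2]\le G_\texttt{max}^2 < \infty$) are used. The Bessel correction $n/(n-1)$ is asymptotically negligible and contributes nothing to the limit, which also makes transparent why \thref{prop:naive2biased} can still fail: finite-sample unbiasedness would require a different, combinatorially weighted estimator, whereas consistency only needs the limiting identity above.
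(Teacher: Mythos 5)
Your proof is correct and follows essentially the same route as the paper: expand the square, reduce $\hat\sigma_n^{2!}$ to the three sample averages $\tfrac{1}{n}\sum_i\rho_i G_i^2$, $\tfrac{1}{n}\sum_i\rho_i G_i$, and $\tfrac{1}{n}\sum_i\rho_i$, apply Kolmogorov's SLLN to each, and combine the almost-sure limits (the paper invokes Slutsky's theorem where you invoke the continuous mapping theorem, which is the same step). The algebra and the final limit $\mathbb{E}_\pi[G^2]-\mu(\pi)^2=\sigma^2(\pi)$ match the paper's derivation exactly.
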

%
% \begin{proof}
%     See Appendix B for both the proofs.
% \end{proof}
% %

% Typically, confidence bounds require unbiased estimates of the desired statistic.\footnote{    \cite{csaba} provide an interesting technique to obtain confidence intervals for  $\mu(\pi)$ using  off-policy, \textbf{biased}, estimates of $\mu(\pi)$.
    %
    % It is not immediate whether such a technique can be extended to create confidence intervals for $\sigma^2(\pi)$ using biased estimators of $\sigma^2(\pi)$.}
%
Before even considering confidence intervals for $\sigma^2(\pi)$, the lack of unbiased estimates from these na\"{i}ve methods leads to a basic question: \textit{How can we construct unbiased estimates of $\sigma^2(\pi)$?}
We answer this question in the following section.

\section{Off-Policy Variance Estimation}
Before constructing an unbiased estimator for $\sigma^2(\pi)$, we first  discuss one root cause for the bias of % cause that contributes towards making 
$\hat \sigma_n^{2!}$ and $\hat \sigma_n^{2!!}$. % biased.
Notice that an expansion of \eqref{eqn:naive1} and \eqref{eqn:naive2} produces self-coupled importance ratio terms.
That is, terms consisting of $\rho_i^2$ and $\rho_i^3$.
While $\rho_i$ helps in correcting the distribution, its higher powers, $\rho_i^2$ and $\rho_i^3$, do not.
%%

% \todo{needs re-write}
Expansion of \eqref{eqn:naive1} and \eqref{eqn:naive2} also results in cross-coupled importance ratio terms, $\rho_i\rho_j$, where $i \neq j$.
However, because $\mathbb E_\beta[\rho_i]=1$ for all $i \in \{1,\dotsc,n\}$ and because $\rho_i$ and $\rho_j$ are independent when $i \neq j$, these terms factor out in expectation. 
Hence, these terms do not create the troublesome higher powers of importance ratios. 
%However, in expectation these terms factor out due to the independence of the trajectories, thereby avoiding the creation of higher powers of the importance ratios within a single expectation. %not creating any powers of importance ratio under a single expectation.
%

Based on these observations, we create an estimator that does not have any self-coupled importance ratio terms like $\rho_i^2$, but which may have $\rho_i\rho_j$ terms, where $i \neq j$.
To do so, we consider the alternate formulation of variance,
\begin{align}
   \mathbb{V}_\pi(G)  = \mathbb{E}_\pi[G^2] - \mathbb{E}_\pi[G]^2 =  \mathbb{E}_\beta[\rho G^2] - \mathbb{E}_\beta[\rho G]^2. \label{eqn:reform}
\end{align}
% To do so, we first reformulate   $\mathbb{E}_\beta\left[ \rho (G - \mathbb{E}_\beta[\rho G  ])^2   \right]$ as follows. 
%
% \begin{align}
%     %  \mathbb{\widetilde V}[G]
%      & \mathbb{E}_\beta\left[ \rho (G - \mathbb{E}_\beta[\rho G  ])^2   \right] 
%     \\
%     %
%     % &= \mathbb{E}_\beta\left[ \rho (G^2 - 2X \mathbb{E}_\beta[\rho G] + \mathbb{E}_\beta[\rho G]^2)  \right] 
%     % \\
%     %
%     &= \mathbb{E}_\beta\left[ \rho G^2 \right] - 2 \mathbb{E}_\beta\left[\rho G  \mathbb{E}_\beta[\rho G ]\right] + \mathbb{E}_\beta\left[\rho \mathbb{E}_\beta[\rho G ]^2  \right]
%     %
%     \\
%     %
%     &= \mathbb{E}_\beta\left[ \rho G^2 \right] - 2 \mathbb{E}_\beta\left[\rho G \right] \mathbb{E}_\beta[\rho G] + \mathbb{E}_\beta\left[\rho \right] \mathbb{E}_\beta[\rho G]^2 
%     %
%     \\
%     %
%     % &= \mathbb{E}_\beta\left[ \rho G^2 \right] - 2 \mathbb{E}_\beta\left[\rho G \right] \mathbb{E}_\beta[\rho G] +  \mathbb{E}_\beta[\rho G]^2 
%     % \\
%     %
%     &\overset{(a)}{=} \mathbb{E}_\beta\left[ \rho G^2 \right] - \mathbb{E}_\beta\left[\rho G \right]^2, \label{eqn:reform}
%     % \\
%     % %
%     % &= \mathbb{E}_\pi\left[ X^2 \right] - \mathbb{E}_\pi\left[X \right]^2
%     % \\
%     % &= \sigma^2(\pi).
% \end{align}
%
% where (a) was possible because $\mathbb{E}_\beta[\rho] = 1$.
%
In \eqref{eqn:reform}, while a plug-in estimator of $\mathbb{E}_\beta[\rho G^2]$  would be unbiased and free of any self-coupled importance ratio terms, a plug-in estimator for $\mathbb{E}_\beta[\rho G]^2$ would neither be unbiased nor would it be free of $\rho^2_i$ terms.
To remedy this problem, we explicitly split the set of sampled trajectories into two mutually exclusive sets, $\mathcal D_1$ and $\mathcal D_2$, of equal sizes, and re-express $\mathbb{E}_\beta[\rho G]^2$ as $\mathbb{E}_\beta[\rho G]\mathbb{E}_\beta[\rho G]$, where the first expectation is estimated using samples from $\mathcal D_1$ and the second expectation is estimated using samples from $\mathcal D_2$.
Based on this \emph{double sampling} approach, we propose the following off-policy variance estimator, 

\begin{align}
    \hat \sigma_n^2 \coloneqq \frac{1}{n}\sum_{i=1}^{n} \rho_i G_i^2  - \left(\frac{1}{|\mathcal D_1|}\sum_{i=1}^{|\mathcal D_1|}\rho_i G_i \right) \left(\frac{1}{|\mathcal D_2|}\sum_{i=1}^{|\mathcal D_2|}\rho_i G_i \right). \label{eqn:proposed}
\end{align}
This simple data-splitting trick suffices to create, $\hat \sigma_n^2$, an off-policy variance estimator that is both unbiased and consistent.
We formalize this in the following theorems.

\begin{thm}
    Under \thref{ass:supp}, $\hat \sigma_n^2$ is an unbiased estimator of $\sigma^2(\pi)$.
    That is, $\mathbb{E}_\beta[\hat \sigma_n^2] = \sigma^2(\pi)$. \thlabel{thm:unbiased}
\end{thm}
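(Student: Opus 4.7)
The plan is to verify unbiasedness directly by taking expectations of the two terms in \eqref{eqn:proposed} separately and using the alternate variance decomposition \eqref{eqn:reform}, namely $\sigma^2(\pi) = \mathbb{E}_\beta[\rho G^2] - \mathbb{E}_\beta[\rho G]^2$. The first term in $\hat\sigma_n^2$ is a straightforward sample-mean plug-in estimator for $\mathbb{E}_\beta[\rho G^2]$, while the second (the product of two means) is where the data splitting does the work.

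First I would handle the term $\frac{1}{n}\sum_{i=1}^n \rho_i G_i^2$. By the standard importance sampling identity applied to the function $G^2$ (and invoking \thref{ass:supp} to ensure the ratio $\pi/\beta$ is well defined wherever $\pi$ has support), one has $\mathbb{E}_\beta[\rho_i G_i^2] = \mathbb{E}_\pi[G_i^2]$. Since the trajectories $H_i$ are i.i.d.\ under $\beta$, linearity of expectation gives
\begin{align}
\mathbb{E}_\beta\!\left[\frac{1}{n}\sum_{i=1}^n \rho_i G_i^2\right] = \mathbb{E}_\pi[G^2].
\end{align}

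Next I would handle the product term. The key observation is that $\mathcal D_1$ and $\mathcal D_2$ are disjoint subsets of $\mathcal D$, so the two averages $\hat m_1 \coloneqq \frac{1}{|\mathcal D_1|}\sum_{i\in\mathcal D_1}\rho_i G_i$ and $\hat m_2 \coloneqq \frac{1}{|\mathcal D_2|}\sum_{i\in\mathcal D_2}\rho_i G_i$ are functions of disjoint collections of independent trajectories and are therefore independent random variables. Hence $\mathbb{E}_\beta[\hat m_1 \hat m_2] = \mathbb{E}_\beta[\hat m_1]\,\mathbb{E}_\beta[\hat m_2]$, and each factor equals $\mathbb{E}_\beta[\rho G] = \mu(\pi)$ by the standard IS identity. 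Thus the expected value of the product term is $\mu(\pi)^2$.

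Combining the two pieces yields $\mathbb{E}_\beta[\hat\sigma_n^2] = \mathbb{E}_\pi[G^2] - \mu(\pi)^2 = \sigma^2(\pi)$, as claimed. The only subtlety, and arguably the main step worth emphasizing, is the independence of $\hat m_1$ and $\hat m_2$ — this is precisely why the data-splitting trick removes the self-coupled $\rho_i^2$ terms that spoiled the na\"ive estimators $\hat\sigma_n^{2!}$ and $\hat\sigma_n^{2!!}$. Everything else is linearity of expectation and the one-step importance sampling identity, so the proof should be quite short.
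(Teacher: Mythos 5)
Your proof is correct and follows essentially the same route as the paper's: linearity of expectation for the first term, independence of the averages over the disjoint sets $\mathcal D_1$ and $\mathcal D_2$ to factor the expectation of the product, and the importance sampling identity to convert $\mathbb{E}_\beta[\rho G^2]$ and $\mathbb{E}_\beta[\rho G]$ into $\mathbb{E}_\pi[G^2]$ and $\mathbb{E}_\pi[G]$. You correctly identify the independence of the two half-sample means as the crux, which the paper uses implicitly when it factors the expectation of the product.
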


\begin{thm}
    Under \thref{ass:supp}, $\hat \sigma_n^2$ is a consistent estimator of $\sigma^2(\pi)$.
    That is, %as  $n \rightarrow \infty,\,\, 
    $\hat \sigma_n^2 \overset{\text{a.s.}}{\longrightarrow} \sigma^2(\pi)$. \thlabel{thm:consistent}
\end{thm}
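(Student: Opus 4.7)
The plan is to reduce the consistency claim to three independent applications of the Strong Law of Large Numbers (SLLN), one for each empirical average appearing in the definition of $\hat\sigma_n^2$, and then glue the pieces together via continuity of multiplication and subtraction with respect to almost sure convergence.

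First, I would rewrite $\hat\sigma_n^2$ as
\begin{align}
\hat\sigma_n^2 = \hat A_n - \hat B_{n_1}\,\hat C_{n_2},
\end{align}
where $\hat A_n := \tfrac{1}{n}\sum_{i=1}^{n}\rho_i G_i^2$, $\hat B_{n_1} := \tfrac{1}{n_1}\sum_{i\in\mathcal D_1}\rho_i G_i$, $\hat C_{n_2} := \tfrac{1}{n_2}\sum_{i\in\mathcal D_2}\rho_i G_i$, and $n_1=|\mathcal D_1|=n_2=|\mathcal D_2|=n/2$. Under \thref{ass:supp}, the trajectories in $\mathcal D$ are i.i.d., so the random variables $\{\rho_i G_i^2\}_i$ and $\{\rho_i G_i\}_i$ are each i.i.d.\ sequences. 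Integrability follows from the boundedness of returns and $\mathbb{E}_\beta[\rho]=1$: since $G\in[G_\texttt{min},G_\texttt{max}]$ and $\rho\ge 0$, we have $\mathbb{E}_\beta[|\rho G^2|]\le G_\texttt{max}^2\,\mathbb{E}_\beta[\rho]=G_\texttt{max}^2<\infty$ and similarly $\mathbb{E}_\beta[|\rho G|]\le \max(|G_\texttt{min}|,|G_\texttt{max}|)<\infty$.

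Next, by Kolmogorov's SLLN, together with the standard off-policy identities $\mathbb{E}_\beta[\rho G^2]=\mathbb{E}_\pi[G^2]$ and $\mathbb{E}_\beta[\rho G]=\mathbb{E}_\pi[G]=\mu(\pi)$, I get $\hat A_n\overset{\text{a.s.}}{\longrightarrow}\mathbb{E}_\pi[G^2]$, $\hat B_{n_1}\overset{\text{a.s.}}{\longrightarrow}\mu(\pi)$, and $\hat C_{n_2}\overset{\text{a.s.}}{\longrightarrow}\mu(\pi)$, noting that $n_1,n_2\to\infty$ as $n\to\infty$. Since almost sure convergence is preserved under continuous maps (and, in particular, products and differences), $\hat B_{n_1}\hat C_{n_2}\overset{\text{a.s.}}{\longrightarrow}\mu(\pi)^2$ and hence
\begin{align}
\hat\sigma_n^2 \;\overset{\text{a.s.}}{\longrightarrow}\; \mathbb{E}_\pi[G^2]-\mu(\pi)^2 \;=\; \sigma^2(\pi),
\end{align}
via the identity \eqref{eqn:reform}.

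There is no real obstacle here: the argument is essentially a bookkeeping exercise. The only subtlety worth flagging is the justification of integrability (to invoke SLLN legitimately) and the fact that $\mathcal D_1$ and $\mathcal D_2$ must each grow unboundedly, which is ensured by the $n/2$ split. Everything else is standard once unbiasedness of the individual means (\thref{thm:unbiased}'s ingredients) has been established.
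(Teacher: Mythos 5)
Your proof is correct and follows essentially the same route as the paper's: decompose $\hat\sigma_n^2$ into the three empirical averages, apply Kolmogorov's SLLN to each, and combine the almost-sure limits via continuity of products and differences (the paper cites Slutsky's theorem for this last step; your appeal to the continuous mapping theorem for a.s.\ convergence is the more precise justification). The only nitpick is that your integrability bound should read $\mathbb{E}_\beta[|\rho G^2|]\le \max(G_\texttt{min}^2,G_\texttt{max}^2)$ rather than $G_\texttt{max}^2$, since $G_\texttt{min}$ may be negative with larger magnitude; this does not affect the argument.
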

% \begin{proof}
%     See Appendix C for both the proofs.
% \end{proof}

\begin{rem}
  It is possible that $\hat \sigma_n^2$  results in negative values (see Appendix C for an example).
   One practical solution to avoid negative values for variance can be to define $\hat \sigma_n^{2+} \coloneqq \texttt{clip}(\hat \sigma_n^2, \texttt{min}=0, \texttt{max}=\infty)$.
  However, this may make $\hat \sigma_n^{2+}$ a biased estimator, i.e., $\mathbb{E}_\beta[\hat \sigma_n^{2+}] \neq \sigma^2(\pi)$. 
  Notice that this is the expected behavior of IS based estimators. 
  For example, the IS estimates of expected return can be smaller or larger than the smallest and largest possible returns when $\rho > 1$. 
  We refer the reader to the works by \citet{mchugh1968negative}, \citet{ anderson1965negative}, and \citet{ nelder1954interpretation} 
for other occurences of negative variance and its interpretations. 
\end{rem}

% The text below is redundant with the remark above. I moved in the parts that weren't redundant.
% %
% It is known that IS based estimators can result in values which are outside the range of the estimand.
% %
% For example, IS based estimate of the mean return can result in values more than the maximum possible return.
% %
% Similarly, in our case, IS based estimate of variance can be less than zero.
% %
% Further, it is not uncommon to obtain a negative estimate of variance for models that are more involved.
% %
% We refer the reader to the works by \citet{mchugh1968negative}, \citet{ anderson1965negative}, and \citet{ nelder1954interpretation} 
% for other occurences of negative variance and its interpretations. 

\section{Variance-Reduced Estimation of Variance}
Despite $\hat \sigma_n^2$ being both an unbiased and a consistent estimator of variance, the use of IS can make its variance high.
Specifically, the importance ratio $\rho$ %is the product of terms from the \textit{entire} trajectory, and hence 
may become unstable when its denominator, $\prod_{i=0}^{T} \beta(A_{(i)}|S_{(i)})$, is small. %a small value.

To mitigate variance, it is common in off-policy mean estimation to use \textit{per-decision importance sampling} (PDIS), instead of the full-trajectory IS, to reduce variance without incurring any bias \cite{precup2000eligibility}.
It is therefore natural to ask: Is it also possible to have something like PDIS for off-policy variance estimation?

Recall from \eqref{eqn:proposed} that the expectation of the terms inside the parentheses correspond to $\mathbb{E}_\beta[\rho G] = \mu(\pi)$, a term for which we can directly leverage the existing PDIS estimator,
% \begin{align}
    $\mathbb{E}_\beta[\rho G] =\mathbb{E}_\beta \left[ \sum_{i=0}^T  \rho(0, i)\gamma^i R_{(i)} \right]$.
% \end{align}
%
Intuitively, PDIS leverages the fact that the probability of observing an individual reward at timestep $i$ only depends upon the probability of the trajectory up to timestep $i$.

However, the first term in the \emph{right hand side} (RHS) of \eqref{eqn:proposed} contains $G^2 = (\sum_{i=o}^T \gamma^i R_{(i)})^2$. 
Expanding this expression, we obtain %which on expansion can be seen to have both
self-coupled and cross-coupled \textit{reward} terms, $R_{(i)}^2$ and $R_{(i)}R_{(j)}$,
which makes PDIS not directly applicable.
%
%PDIS was not designed for this setting and therefore does not provide unbiased estimates of $\mathbb{E}[\rho G^2]$. %was only designed to handle individual reward terms, and so it is not directly applicable. 
%Therefore, PDIS cannot be used to get lower variance estimates of this first term,  \textit{per-decision} IS cannot be used to get unbiased and a lower variance estimate of $\mathbb{E}[\rho G^2]$.
%%
In the following theorem we present a new estimator, \textit{coupled-decision importance sampling} (CDIS), which extends PDIS to handle these coupled rewards.
%To tackle this problem, we construct an estimator we call \textit{coupled-decision importance sampling} (CDIS) in the following theorem.

\begin{thm} Under \thref{ass:supp}, 
\begin{align}
    \mathbb{E}_\beta\left[\rho G^2 \right] = \mathbb{E}_\beta\left[\sum_{i=0}^{T} \sum_{j=0}^T \rho\left(0,\texttt{max}(i,j)\right) \gamma^{i+j} R_{(i)} R_{(j)}\right]. \label{eqn:cdis}
\end{align}
\thlabel{thm:cdis}
\end{thm}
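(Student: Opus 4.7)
The plan is to start from the identity $G^2 = \left(\sum_{i=0}^T \gamma^i R_{(i)}\right)^2 = \sum_{i=0}^T \sum_{j=0}^T \gamma^{i+j} R_{(i)} R_{(j)}$ and push the expectation inside the double sum by linearity. This reduces the claim to showing, for every fixed pair $(i,j)$, that
\begin{align}
\mathbb{E}_\beta\!\left[\rho\, R_{(i)} R_{(j)}\right] = \mathbb{E}_\beta\!\left[\rho\bigl(0,\texttt{max}(i,j)\bigr)\, R_{(i)} R_{(j)}\right].
\end{align}

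The key structural observation is that the reward product $R_{(i)} R_{(j)}$ is a measurable function of the trajectory prefix up to timestep $k \coloneqq \texttt{max}(i,j)$, whereas $\rho = \rho(0,T)$ factors as $\rho(0,k)\cdot \rho(k+1,T)$. I would condition on the $\sigma$-algebra generated by $(S_{(0)},A_{(0)},R_{(0)},\ldots,S_{(k)},A_{(k)},R_{(k)})$ and use the tower property: the factor $\rho(0,k) R_{(i)} R_{(j)}$ is measurable with respect to this prefix, so it pulls out of the inner expectation, leaving $\mathbb{E}_\beta[\rho(k+1,T)\mid \text{prefix}]$.

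The crux of the argument is then to establish that this conditional expectation equals $1$. I would do this step by step, working from the last index down: the innermost factor satisfies
\begin{align}
\mathbb{E}_\beta\!\left[\tfrac{\pi(A_{(T)}\mid S_{(T)})}{\beta(A_{(T)}\mid S_{(T)})}\,\Big|\, S_{(T)}\right] = \sum_a \beta(a\mid S_{(T)})\tfrac{\pi(a\mid S_{(T)})}{\beta(a\mid S_{(T)})} = \sum_a \pi(a\mid S_{(T)}) = 1,
\end{align}
where the cancellation is valid under \thref{ass:supp} (terms with $\beta(a\mid S_{(T)})=0$ also have $\pi(a\mid S_{(T)})=0$ and are discarded). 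An inductive peeling, together with the Markov property of the MDP under $\beta$, propagates this backwards to give $\mathbb{E}_\beta[\rho(k+1,T)\mid \text{prefix}] = 1$.

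Combining these pieces yields $\mathbb{E}_\beta[\rho R_{(i)} R_{(j)}] = \mathbb{E}_\beta[\rho(0,k)R_{(i)} R_{(j)}]$ for each $(i,j)$, and summing over $i,j$ with the $\gamma^{i+j}$ weights recovers \eqref{eqn:cdis}. The main obstacle I anticipate is notational rather than conceptual: carefully specifying the conditioning $\sigma$-algebra so that $R_{(i)}R_{(j)}$ and $\rho(0,k)$ are both measurable with respect to it while the future importance factors $\rho(k+1,T)$ are not, and handling the boundary case $k=T$ (where the telescoping collapses and the identity is trivial). Everything else is a routine application of iterated expectations and \thref{ass:supp}.
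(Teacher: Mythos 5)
Your proposal is correct and follows essentially the same route as the paper's proof: expand $G^2$ into pairwise reward products $\gamma^{i+j}R_{(i)}R_{(j)}$, observe that each product is determined by the trajectory prefix up to $\max(i,j)$, and discard the importance ratios beyond that timestep because they contribute a factor of one in expectation. The only difference is bookkeeping---the paper executes this via explicit sums over trajectories with a change of measure to $\pi$ and back (marginalizing the suffix probabilities to one), whereas you condition on the prefix $\sigma$-algebra and show $\mathbb{E}_\beta[\rho(k+1,T)\mid \text{prefix}]=1$ by backward peeling; the two computations are equivalent, and your handling of the $\beta(a\mid s)=0$ case via \thref{ass:supp} matches the role that assumption plays in the paper's steps (a) and (b).
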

Borrowing intuition from PDIS, CDIS leverages the fact that the probability of observing a coupled-reward, $R_{(i)}R_{(j)}$, only depends on the probability of the trajectory up to the $i$ or $j$th timestep, whichever is larger. %the timestep where the latter of the coupled-reward was observed.
Importance ratios beyond that timestep can therefore be discarded without incurring bias.
In Algorithm \ref{apx:Alg:2}, we combine both per-decision and coupled-decision IS to construct a variance-reduced estimator of $\sigma^2(\pi)$. 

% 	\IncMargin{1em}
	\begin{algorithm2e}[h]
		% \SetAlgoLined
		\textbf{Input:} {Set of trajectories $\mathcal D$} 
		\\
    		$\mathcal D_1, \mathcal D_2 \leftarrow \texttt{equal\_split}(\mathcal D) $
    		\\
    % 		\vspace{8pt}
    % 		 \nonl \textcolor[rgb]{0.5,0.5,0.5}{\# Use trajectories from both $\mathcal D_1$ and $\mathcal D_2$}
    % 		 \\
    	   %$X \leftarrow \frac{1}{|\mathcal D|}\sum\limits_{i=0}^{|\mathcal D|} \sum\limits_{j=0}^T  \rho_i(0, j)\gamma^{2j} R_{i(j)}^2 $
    % 		\\
    		$X = \frac{1}{|\mathcal D|} \sum\limits_{i=1}^{|\mathcal D|}\sum\limits_{j=0}^{T} \sum\limits_{k=0}^T \rho_i(0,\texttt{max}(j,k)) \gamma^{j+k} R_{i(j)} R_{i(k)}$ 
    % 		\Comment{\eqref{eqn:pdis2}}
    		\\
    % 		\vspace{8pt}
    % 		\nonl \textcolor[rgb]{0.5,0.5,0.5}{\# Use trajectories only from $\mathcal D_1$}
    % 		\\
    		$Y = \frac{1}{|\mathcal D_1|} \sum\limits_{i=1}^{|\mathcal D_1|} \sum\limits_{j=0}^T  \rho(0, j)\gamma^{j} R_{i(j)}$ %\Comment{Samples from $\mathcal D_1$}
    % 		\Comment{\eqref{eqn:pdis1}}
    		\\
    % 		\vspace{8pt}
    % 		\nonl \textcolor[rgb]{0.5,0.5,0.5}{\# Use trajectories only from $\mathcal D_2$}
    % % 		\Comment{\eqref{eqn:pdis1}}
    % 		\\
    		$Y' = \frac{1}{|\mathcal D_2|} \sum\limits_{i=1}^{|\mathcal D_2|} \sum\limits_{j=0}^T  \rho(0, j)\gamma^{j} R_{i(j)}$
    		\\
    		\textbf{Return} $X - YY'$ 
		\caption{Variance-Reduced Off-Policy Variance Estimator}
		\label{apx:Alg:2}  
	\end{algorithm2e}
% 	\DecMargin{1em}   

\section{HCOVE using Concentration Inequalities}

In the previous section we found that the reformulation presented in  \eqref{eqn:reform} was helpful for creating a variance reduced off-policy variance estimator $\hat \sigma_n^2$.
In this section, we will again build upon \eqref{eqn:reform} to obtain a \textit{confidence interval} (CI) for $\sigma^2(\pi)$.
One specific advantage of \eqref{eqn:reform} is that it allows us to build upon existing concentration inequalities, which were developed for obtaining CIs for $\mu(\pi)$, to obtain a CI for $\sigma^2(\pi)$.

Before moving further, we define some additional notation.
For any random variable $X$, let $\operatorname{CI}^+(\mathbb{E}[X], \delta)$, $\operatorname{CI}_-(\mathbb{E}[X], \delta)$, and $\operatorname{CI}^+_-(\mathbb{E}[X], \delta)$ represent only upper, only lower, and both upper and lower $(1-\delta)$-confidence bounds  for $\mathbb{E}[X]$, respectively. %, that can fail with at most $\delta$ probability.
That is, $\Pr(\operatorname{CI}^+(\mathbb{E}[X], \delta) \geq \mathbb{E}[X]) \geq 1 - \delta$, $\Pr(\operatorname{CI}_-(\mathbb{E}[X], \delta) \leq \mathbb{E}[X]) \geq 1 - \delta$, etc. 
For brevity, We will sometimes suppress CI's dependency on $\delta$.

With the above notation, we now establish a high-confidence bound on \eqref{eqn:reform}.
Recall that \eqref{eqn:reform} consists of one positive term $\mathbb{E}[\rho G^2]$ and a negative term $-\mathbb{E}[\rho G]^2$.
Therefore, given a confidence interval for both of these terms, the high-confidence upper bound for \eqref{eqn:reform} would be the high-confidence upper bound of $\mathbb{E}[\rho G^2]$ minus the high-confidence lower bound of $\mathbb{E}[\rho G]^2$, and vice-versa to obtain a high-confidence lower bound on \eqref{eqn:reform}.
That is, let $\delta_1, \delta_2, \delta_3$ and $\delta_4$ be some constants in $(0,0.5]$ such that $\delta/2 = \delta_1+\delta_2 = \delta_3+\delta_4$. 
The lower bound $v^{\texttt{lb}}$ and the upper bound $v^{\texttt{ub}}$ can be expressed as, 
\begin{align}
    v^{\texttt{lb}} &\coloneqq \operatorname{CI}_-\left(\mathbb{E}_\beta[\rho G^2], \delta_1 \right) - \operatorname{CI}^+\left(\mathbb{E}_\beta[\rho G]^2, \delta_2 \right),\label{eqn:lb}\\
    v^{\texttt{ub}} &\coloneqq  \operatorname{CI}^+\left(\mathbb{E}_\beta[\rho G^2], \delta_3 \right) - \operatorname{CI}_-\left(\mathbb{E}_\beta[\rho G]^2, \delta_4 \right).\label{eqn:ub}
\end{align}
For getting the desired CIs for the first terms in the RHS of \eqref{eqn:lb} and \eqref{eqn:ub}, notice that any method for obtaining a CI on the expected return, $\mathbb{E}_\beta[\rho G]$, can also be used to bound $\mathbb{E}_\beta[\rho G']$, where $G' \coloneqq G^2$.

For getting the desired CIs in the second term in the RHS of \eqref{eqn:lb} and \eqref{eqn:ub}, we perform \textit{interval propagation} \cite{jaulin2002interval}.
That is, given a high confidence interval for $\mathbb{E}[\rho G]$, since $\mathbb{E}[\rho G]^2$ is a quadratic function of $\mathbb{E}[\rho G]$, the upper bound for the value of $\mathbb{E}[\rho G]^2$ would be the \textit{maximum} of the squared values of the end-points of the interval for $\mathbb{E}[\rho G]$.
Similarly, the lower bound on $\mathbb{E}[\rho G]^2$ would be $0$ if the signs of upper and lower bounds for $\mathbb{E}[\rho G]$ are different, otherwise it would be the \textit{minimum} of the squared value of the end-points of the interval for $\mathbb{E}[\rho G]$.
An illustration of this concept is presented in Figure \ref{fig:boundprop}.

Using interval propagation, the resulting upper bound is
\begin{equation}\operatorname{CI}^+(\mathbb{E}_\beta[\rho G]^2) = \texttt{max}(\operatorname{CI}_-(\mathbb{E}_\beta[\rho G])^2, \operatorname{CI}^+(\mathbb{E}_\beta[\rho G] )^2),
\end{equation}
and the resulting high-confidence lower bound is %the upper bound  
$\operatorname{CI}_-(\mathbb{E}_\beta[\rho G]^2)=0$ if both $\operatorname{CI}_-(\mathbb{E}_\beta[\rho G]) \leq 0$ and $\operatorname{CI}^+(\mathbb{E}_\beta[\rho G]) \geq 0$, and $\operatorname{CI}_-(\mathbb{E}_\beta[\rho G]^2)=\texttt{min}(\operatorname{CI}^+(\mathbb{E}_\beta[\rho G] )^2, \operatorname{CI}_-(\mathbb{E}_\beta[\rho G] )^2)$ otherwise.
%
%
%
%Similarly, using interval propagation for the lower bound $\operatorname{CI}_-(\mathbb{E}_\beta[\rho G]^2)$, if $\operatorname{CI}_-(\mathbb{E}_\beta[\rho G]) \leq 0 \leq \operatorname{CI}^+(\mathbb{E}_\beta[\rho G])$, then the lower bound
%$\operatorname{CI}_-(\mathbb{E}_\beta[\rho G]^2) = 0$, otherwise $\operatorname{CI}_-(\mathbb{E}_\beta[\rho G]^2) = \texttt{min}(\operatorname{CI}^+(\mathbb{E}_\beta[\rho G] )^2, \operatorname{CI}_-(\mathbb{E}_\beta[\rho G] )^2)$.
%
Notice that these upper and lower high-confidence bounds on $\mathbb E_\beta[\rho G]^2$ can always be reduced to $\operatorname{max}(G^2_\texttt{min},G^2_\texttt{max})$ (the maximum squared return under any policy) when they are larger. 
%
%Further, both $\operatorname{CI}_{-}\left(\mathbb{E}_\beta[\rho G]^2\right)$ and $\operatorname{CI}^{+}\left(\mathbb{E}_\beta[\rho G]^2\right)$ are also bounded above by $\texttt{max}(G_\texttt{min}^2, G_\texttt{max}^2)$ because it is the maximum value the square of any policy's mean return can have.
%
\begin{figure}
    \centering
    \includegraphics[width=0.45\textwidth]{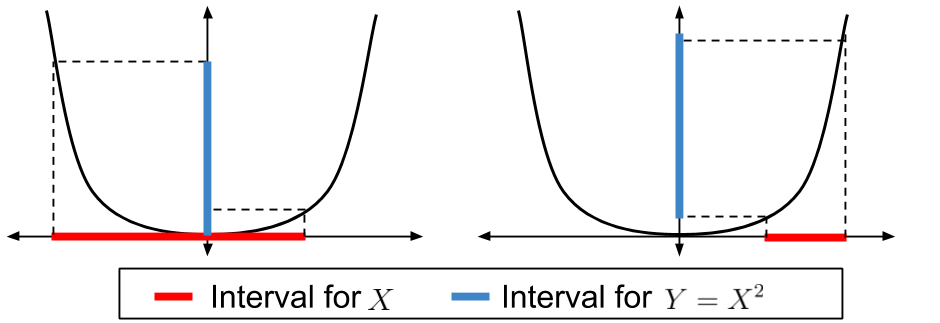}
    \caption{Two separate examples that show how interval propagation can be used to map confidence intervals for $X$ (in red) to confidence intervals for $Y=X^2$ (in blue).
    }
    \label{fig:boundprop}
\end{figure}

%
%

%%%%%%
%%%%%% After fixing this up, I think it's just being redundant. The two sentences below can just be cut.
%%%%%%
% Notice that we did not require double sampling when constructing the high-confidence lower and upper bounds $\operatorname{CI}_-(\mathbb{E}_\beta[\rho G]^2 )$ and $\operatorname{CI}^+(\mathbb{E}_\beta[\rho G]^2)$. % it was not required to split the data in two sets, unlike for constructing $\hat \sigma_n^2$ in \eqref{eqn:proposed}.
% %
% Further, as discussed above, CIs for both $\mathbb{E}_\beta[\rho G^2]$ and $\mathbb{E}_\beta[\rho G]^2$ can be created using any concentration inequality previously used for bounding $\mathbb{E}_\beta[\rho G]$.
% %

In the following theorem, we prove that the resulting confidence interval, $\mathcal C$, has \emph{guaranteed coverage}, i.e., that it holds with probability $1-\delta$. %Leveraging these properties, we formalize a confidence interval $\mathcal C$ in the following theorem.
\begin{thm}[Guaranteed coverage] Under \thref{ass:supp}, if $(\delta_1 + \delta_2 + \delta_3 + \delta_4) \leq \delta$, then for the confidence interval $\mathcal C \coloneqq [v^{\texttt{lb}}, v^{\texttt{ub}}]$,
\begin{align}
    \Pr \left(\sigma^2(\pi) \in \mathcal C\right) \geq 1 - \delta.
\end{align}
\thlabel{thm:HCOVE1}
\end{thm}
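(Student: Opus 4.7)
The approach is a straightforward union bound over four failure events, combined with a verification that the interval-propagation step used to bound $\mathbb{E}_\beta[\rho G]^2$ is deterministically valid whenever the underlying two-sided CI on $\mathbb{E}_\beta[\rho G]$ holds. The starting point is the identity $\sigma^2(\pi) = \mathbb{E}_\beta[\rho G^2] - \mathbb{E}_\beta[\rho G]^2$ from \eqref{eqn:reform}, which lets us decompose the event $\{\sigma^2(\pi) \in \mathcal{C}\}$ into the conjunction of two inequalities, one for each endpoint of $\mathcal{C}$.

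First I would introduce four events:
$E_1 \coloneqq \{\operatorname{CI}_-(\mathbb{E}_\beta[\rho G^2],\delta_1) \leq \mathbb{E}_\beta[\rho G^2]\}$,
$E_3 \coloneqq \{\operatorname{CI}^+(\mathbb{E}_\beta[\rho G^2],\delta_3) \geq \mathbb{E}_\beta[\rho G^2]\}$,
$E_2 \coloneqq \{\operatorname{CI}^+(\mathbb{E}_\beta[\rho G]^2,\delta_2) \geq \mathbb{E}_\beta[\rho G]^2\}$,
$E_4 \coloneqq \{\operatorname{CI}_-(\mathbb{E}_\beta[\rho G]^2,\delta_4) \leq \mathbb{E}_\beta[\rho G]^2\}$.
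By the definition of a one-sided $(1-\delta_i)$-CI, $E_1$ and $E_3$ hold with probability at least $1-\delta_1$ and $1-\delta_3$ respectively. On $E_1 \cap E_2$ the definition of $v^{\texttt{lb}}$ immediately yields $v^{\texttt{lb}} \leq \mathbb{E}_\beta[\rho G^2] - \mathbb{E}_\beta[\rho G]^2 = \sigma^2(\pi)$, and on $E_3 \cap E_4$ the definition of $v^{\texttt{ub}}$ gives $v^{\texttt{ub}} \geq \sigma^2(\pi)$. The conclusion then follows by the union bound: $\Pr(E_1^c \cup E_2^c \cup E_3^c \cup E_4^c) \leq \delta_1 + \delta_2 + \delta_3 + \delta_4 \leq \delta$.

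The only non-routine step, and the one I expect to be the main obstacle, is justifying that $E_2$ and $E_4$ indeed have the stated probability, since they concern bounds on the nonlinear quantity $\mathbb{E}_\beta[\rho G]^2$ that were obtained by propagating a two-sided CI on $\mathbb{E}_\beta[\rho G]$ (itself formed from a $\operatorname{CI}^+_-(\mathbb{E}_\beta[\rho G],\delta_2)$ and a $\operatorname{CI}^+_-(\mathbb{E}_\beta[\rho G],\delta_4)$, respectively). I would argue this deterministically: condition on the two-sided CI on $\mathbb{E}_\beta[\rho G]$ being valid, and then show by a short case analysis on the signs of the endpoints $\ell \coloneqq \operatorname{CI}_-(\mathbb{E}_\beta[\rho G])$ and $u \coloneqq \operatorname{CI}^+(\mathbb{E}_\beta[\rho G])$ that $\mathbb{E}_\beta[\rho G]^2 \in [\operatorname{CI}_-(\mathbb{E}_\beta[\rho G]^2), \operatorname{CI}^+(\mathbb{E}_\beta[\rho G]^2)]$. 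The three cases $\ell \geq 0$, $u \leq 0$, and $\ell < 0 < u$ each reduce to an elementary monotonicity statement about $x \mapsto x^2$ on an interval not crossing $0$ (cases 1--2) or to the observation that $\min_{x \in [\ell,u]} x^2 = 0$ when the interval straddles $0$ (case 3); in every case the formulas stated in the excerpt coincide with $\min$ and $\max$ of $x^2$ on $[\ell, u]$, and hence are valid deterministic bounds.

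Putting the pieces together, the probability that the propagated bound on $\mathbb{E}_\beta[\rho G]^2$ used in $v^{\texttt{lb}}$ fails is at most the probability that the underlying two-sided CI on $\mathbb{E}_\beta[\rho G]$ fails, which is at most $\delta_2$; similarly for $v^{\texttt{ub}}$ with $\delta_4$. Then one final union bound over $E_1, E_2, E_3, E_4$ delivers $\Pr(\sigma^2(\pi) \in \mathcal{C}) \geq 1 - (\delta_1+\delta_2+\delta_3+\delta_4) \geq 1 - \delta$, completing the proof. No distributional assumptions on $\rho$ or $G$ enter beyond what is needed to validate the four underlying CIs, so the argument is agnostic to which concentration inequality is instantiated.
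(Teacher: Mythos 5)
Your proposal is correct and follows essentially the same route as the paper's proof: decompose $\sigma^2(\pi)$ via \eqref{eqn:reform}, observe that each endpoint of $\mathcal C$ is valid on the intersection of the corresponding pair of one-sided CI events, and apply a union bound over the four failure events to get $\delta_1+\delta_2+\delta_3+\delta_4 \leq \delta$. Your explicit case analysis verifying that the interval-propagation step for $\mathbb{E}_\beta[\rho G]^2$ is deterministically valid given the two-sided CI on $\mathbb{E}_\beta[\rho G]$ is a detail the paper states only informally in the main text, so including it is a welcome (but not structurally different) addition.
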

%
% \begin{proof}
%     See Appendix D.
% \end{proof}

\begin{rem}
    \thref{thm:HCOVE1} presents a two-sided interval. If only a lower bound or only an upper bound is required, then it suffices if only $(\delta_1 + \delta_2) \leq \delta$ or $(\delta_3 + \delta_4) \leq \delta$, respectively.
\end{rem}

\begin{rem}
     $\mathcal C$ can always be clipped via taking the intersection with the interval $[0,(G_\texttt{max}-G_\texttt{min})^2/4]$, since the variance will always be within this range (see Popoviciu's inequality for the deterministic upper bound on variance).
%    Note that unlike when estimating the variance, the confidence interval $\mathcal C$ can be clipped such that it is non-negative always.
    %
 %   This does not invalidate the bound  as it is known that the true variance is more than $0$ always.
\end{rem}

\subsection{A Tale of Long-Tails}
One important advantage of \thref{thm:HCOVE1} is that it constructs a CI, $\mathcal C$, for $\sigma^2(\pi)$ using \textit{any} concentration inequality that can be used to get CIs $\operatorname{CI}_+(\mathbb{E}_\beta[\rho G])$ and $\operatorname{CI}_-(\mathbb{E}_\beta[\rho G])$) for $\mu(\pi)$.
Hence, the tightness of $\mathcal C$ scales directly with the tightness of these existing off-policy policy evaluation methods for the expected discounted return. 
%Thereby, the tightness of $ {\mathcal C}$  depends upon the tightness offered by those concentration inequalities when used for getting confidence intervals for $\mathbb{E}[\rho G^2]$ and $\mathbb{E}[\rho G]^2$.
%
However, na\"{i}vely using common concentration inequalities can result in wide and uninformative CIs, as we discuss below. 
Therefore, in this section, we aim to establish a \textit{control-variate} that is designed to produce %specifically to aid the construction of 
tighter CIs for $\sigma^2(\pi)$.

Typically, for a random variable $X \in [a, b]$, the width of the confidence interval for $\mathbb{E}[X]$ obtained using common concentration inequalities, like Hoeffding's \cite{hoeffding1994probability} or an empirical Bernstein inequality \cite{maurer2009empirical}, have a direct dependence on the range, $(b-a)$.
Unfortunately, as shown by \citet{thomas2015high}, IS based estimators may exhibit \textit{extremely} long tail behavior and can have a range in the order of $10^{10}$. 
For example, even if $\forall a \in \mathcal A$ and $\forall s \in \mathcal S$, if $\beta(a|s)>  0.1$, then the maximum possible importance weighted return of a ten timestep long trajectory can be on the order of $(1/0.1)^{10} = 10^{10}$ even when returns are normalized to the $[0,1]$ interval.  
Such a large range causes Hoeffding's inequality and empirical Bernstein inequalities to produce wide and uninformative confidence intervals, especially when the number of samples is not enormous. %trajectory samples is not sufficiently large.

To construct a \textit{lower bound} for $\mathbb{E}[X]$, while being robust to the long tail, \citet{thomas2015high} notice that truncating the upper-tail of $X$ to a constant $c$ can only lower the expected value of $X$, i.e., for $X' \coloneqq \texttt{min}(X, c), \,\, \mathbb{E}[X'] \leq \mathbb{E}[X]$.
Therefore, $X_\texttt{lb}' \coloneqq \operatorname{CI}_-(\mathbb{E}[X'], \delta)$ is also 
a valid lower bound for $\mathbb{E}[X]$ and $\Pr(X_\texttt{lb}' \leq \mathbb{E}[X]) \geq 1- \delta$.
Additionally, truncating allows for significantly shrinking the range from $[a,b]$ to $[a,c]$, thereby effectively leading to a much tighter lower bound when $c$ is chosen appropriately.
%
% Motivated by this observation, when returns are normalized to be \textit{non-negative}, \citet{thomas2015high} present a Winsorization \cite{wilcox2003modern} based technique that lower bounds $\mu(\pi)$ using a range $[0, c]$, where $c$ is a chosen point at which the upper tail of the distribution of the importance-weighted returns $\rho G$ gets trimmed. 
%
For completeness, we review this bound in Appendix F.

While this bound was designed specifically for getting the \textit{lower bounds} required in \eqref{eqn:lb} and \eqref{eqn:ub}, it cannot be na\"{i}vely used to get the \textit{upper bounds}.
As $\mathbb{E}[X']\leq \mathbb{E}[X]$, the upper bound  $X_\texttt{ub}' \coloneqq \operatorname{CI}^+(\mathbb{E}[X'], \delta)$, may not be a valid upper bound for $\mathbb{E}[X]$ and $\Pr(X_\texttt{ub}' \geq \mathbb{E}[X]) \not\geq 1- \delta$.
%on the mean of the distribution resulting after trimming the upper tail may not be an upper bound for the mean of the original distribution.
%
A natural question is then: How can an upper bound be obtained that is robust to the long upper tail?

To answer this question, notice that if instead of the upper tail, the \textit{lower} tail of the distribution was long, then the upper bound constructed after truncating the lower tail would still be valid.
%
% A natural question is then how can such a lower long tail behavior, instead of upper long tail behavior, be explicitly enforced such that valid upper bounds can be obtained?
%
Therefore, we introduce a control-variate $\xi$ which can be used to switch the tails of the distribution of an IS based estimator, such that both upper and lower valid bounds can be obtained using the resulting distribution.
We formalize this in the following theorem.
\begin{thm}
    Let $X$ be either $G$ or $G^2$, then for any $\delta \in (0, 0.5]$ and a fixed constant $\xi$,
    \begin{align}
         \operatorname{CI}^+_-(\mathbb{E}_\beta[\rho X], \delta) = \operatorname{CI}^+_-(\mathbb{E}_\beta[\rho (X - \xi)], \delta) + \xi. 
    \end{align}
    \thlabel{prop:control}
\end{thm}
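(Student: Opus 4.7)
The plan is to reduce the claim to the identity $\mathbb{E}_\beta[\rho] = 1$ (which holds under \thref{ass:supp}) followed by a simple shift-equivariance argument for confidence intervals. First I would observe that, by linearity of expectation,
\begin{align}
  \mathbb{E}_\beta[\rho(X-\xi)] = \mathbb{E}_\beta[\rho X] - \xi\,\mathbb{E}_\beta[\rho].
\end{align}
Under \thref{ass:supp}, a standard importance-sampling calculation gives $\mathbb{E}_\beta[\rho] = \sum_H \beta(H)\,\pi(H)/\beta(H) = \sum_H \pi(H) = 1$, where $\pi(H)$ and $\beta(H)$ denote the probability of trajectory $H$ under $\pi$ and $\beta$ respectively. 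Hence $\mathbb{E}_\beta[\rho(X-\xi)] = \mathbb{E}_\beta[\rho X] - \xi$ exactly, and $\xi$ acts as a pure translation of the target quantity. Note that this argument does not depend on whether $X$ is $G$ or $G^2$; it only uses linearity and that $\rho$ has unit mean under $\beta$.

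Next I would translate this deterministic identity into the statement about confidence bounds. By definition, the upper bound $U := \operatorname{CI}^+(\mathbb{E}_\beta[\rho(X-\xi)], \delta)$ satisfies $\Pr(U \geq \mathbb{E}_\beta[\rho(X-\xi)]) \geq 1 - \delta$, and adding $\xi$ to both sides inside the probability preserves the inequality, yielding
\begin{align}
  \Pr\bigl(U + \xi \geq \mathbb{E}_\beta[\rho X]\bigr) \geq 1 - \delta,
\end{align}
so $U + \xi$ is a valid upper $(1-\delta)$-confidence bound for $\mathbb{E}_\beta[\rho X]$. The identical argument applied to the lower endpoint gives that $\operatorname{CI}_-(\mathbb{E}_\beta[\rho(X-\xi)], \delta) + \xi$ is a valid lower $(1-\delta)$-bound for $\mathbb{E}_\beta[\rho X]$. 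Combining both endpoints yields the claimed equality for the two-sided interval $\operatorname{CI}^+_-$.

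There is no real obstacle here; the result is essentially a sanity check that one can subtract a control variate $\xi$, bound the shifted quantity, and add $\xi$ back without losing coverage. The only place where care is warranted is ensuring $\mathbb{E}_\beta[\rho]=1$ (so that the shift on the expectation side is exactly $\xi$ and not $\xi\,\mathbb{E}_\beta[\rho]$ for some other value); this is exactly what \thref{ass:supp} guarantees, since without the support condition the Radon--Nikodym derivative would not integrate to one under $\beta$. The practical content of the theorem, which this proof plan makes transparent, is that the concentration-inequality-based bounds for $\mathbb{E}_\beta[\rho(X-\xi)]$ may be much tighter than those for $\mathbb{E}_\beta[\rho X]$ once $\xi$ is chosen to flip or shorten the heavy tail, yet the translated bounds remain valid for the original target.
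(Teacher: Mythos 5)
Your proposal is correct and follows essentially the same route as the paper's proof: both rest on the identity $\mathbb{E}_\beta[\rho(X-\xi)] = \mathbb{E}_\beta[\rho X] - \xi$ (using $\mathbb{E}_\beta[\rho]=1$ under \thref{ass:supp}) together with translation-equivariance of confidence bounds. You are slightly more explicit than the paper in unpacking why adding $\xi$ to the interval endpoints preserves coverage, which is a welcome clarification but not a different argument.
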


% \begin{proof}
% See Appendix D.
% \end{proof}
 
 \begin{rem}
    When $\xi$ is set to be the \textit{maximum} value that $X$ can take, then the random variable $\rho(X - \xi)$ will have an upper bound of $0$ and a long lower tail since $\rho \geq 0$ and $(X -\xi) \leq 0$.
    Similarly, when $\xi$ is set to be the \textit{minimum} value that $X$ can take, then the random variable $\rho(X-\xi)$ will have a lower bound of $0$ and a long upper tail. When a two-sided interval is required, two different estimators need to be constructed using the values for $\xi$ discussed above.
 \end{rem}

\thref{prop:control} allows us to control the tail-behavior such that the tight bounds presented by \citet{thomas2015high} can be leveraged to obtain \textit{both}  valid upper and valid lower high-confidence bound.
However, \thref{prop:control} still makes use of the full  trajectory importance ratio $\rho$, which can result in high-variance and inflate the confidence intervals.

To mitigate the above problem as well, we combine the variance reduction property of per-decision and coupled-decision IS offered by \thref{thm:cdis}, and the control over the tail behavior offered by \thref{prop:control}, and present the following theorem (see Appendix F for the complete algorithm). 
\begin{thm} Under \thref{ass:supp}, for any $\delta \in [0. 0.5]$, let $\xi_R \coloneqq \texttt{max}(R_\texttt{min}^2, R_\texttt{max}^2)$ and $\xi_G \coloneqq \texttt{max}(G_\texttt{min}^2, G_\texttt{max}^2)$ then
\begin{align}
    X \coloneqq& \sum_{i=0}^{T} \sum_{j=0}^T \rho\left(0,\texttt{max}(i,j)\right) \gamma^{i+j} \left( R_{(i)} R_{(j)} - \xi_R \right),
    \\
    Y \coloneqq& \sum_{i=0}^T  \rho(0, i)\gamma^i \left( R_{(i)} - R_\texttt{max}\right),
\end{align}
then $\Pr(X \leq 0) = \Pr(Y \leq 0) =1$, and
\begin{align}
    \operatorname{CI}^+\left(\mathbb{E}_\beta[\rho G^2], \delta\right) &= \operatorname{CI}^+\left(\mathbb{E}_\beta[X], \delta\right) + \xi_G,
    \\
    \operatorname{CI}^+\left(\mathbb{E}_\beta[\rho G], \delta\right) &= \operatorname{CI}^+\left(\mathbb{E}_\beta[Y], \delta\right) + G_\texttt{max}.
\end{align}
\end{thm}

% \begin{proof}
%     See Appendix D.
% \end{proof}

\begin{rem}
    For a lower bound on $\mathbb{E}_\beta[\rho G^2]$, notice that $\rho G^2 \geq 0$ always and thus the lower bound on $\mathbb{E}_\beta[X]$, where $\xi_R$ and $\xi_G$ are set to $0$, can be used.
    Lower bound on $\mathbb{E}_\beta[\rho G]$ can be constructed by using the lower bound on $\mathbb{E}_\beta[Y]$, when $R_\texttt{max}$ and $G_\texttt{max}$ are replaced by $R_\texttt{min}$ and $G_\texttt{min}$.
\end{rem}

% \begin{rem}
%   Due to the use of control variate $\xi$ to switch and truncate the tails, the resulting high-confidence bounds may not be symmetric around the mean.
% \end{rem}

\begin{rem}
    If some trajectories have horizon length $t < T$, then they must be appropriately \textit{padded} to ensure that $\forall i \in [t+1, T], \,\, \rho(0,i) = \rho(0,t)$ and $R_{(i)} = 0$, such that in expectation the total amount added/subtracted by the control variate is zero.
\end{rem}

\section{HCOVE using Statistical Bootstrapping}

Bootstrap is a popular non-parametric technique for finding \textit{approximate} confidence intervals \cite{efron1994introduction}.
The core idea of bootstrap is to re-sample the observed data $\mathcal D$ and construct $B$ pseudo-datasets $\{ \mathcal D^*_i\}_{i=1}^B$ in a way such that each $\mathcal D^*_i$  resembles a draw from the true underlying \textit{data generating process}.
With each pseudo-data $\mathcal D_i$, an unbiased pseudo-estimate of a desired sample statistic can be created.
For our problem, this statistic corresponds to $\hat \sigma_n^{2*}$, the estimate of $\sigma^2(\pi)$ obtained using \eqref{eqn:proposed}.
Thereby, leveraging the entire set of pseudo-data $\{ \mathcal D^*_i\}_{i=1}^B$, an empirical distribution for the estimates of the variance $\{\hat \sigma^{2*}_{n,i}\}_{i=1}^B$ can be obtained.
This empirical distribution approximates the true distribution of $\hat \sigma_n^2$ and can thus be leveraged to obtain CIs for $\sigma^2(\pi)$ using the \textit{percentile} method, the \textit{bias-corrected and accelerated} (BCa) method, etc. \cite{diciccio1996bootstrap}.
A drawback of bootstrap is the increased computational cost required for re-sampling and analysing $B$ pseudo data-sets.
Further, the CIs obtained from bootstrap are only approximate, meaning that they can fail with more than $\delta$ probability.
However, the primary advantage of using bootstrap is that it provides much tighter CIs, as compared to the ones obtained using concentration inequalities, and hence can be more informative for certain applications in practice.

Let $\hat{ \mathcal C}$ be the approximate  interval for $\sigma^2(\pi)$, for a given confidence $\delta$, obtained using bootstrap (see Appendix F for the complete algorithm).
Then under the following assumption on the higher-moments of $\hat \sigma_n^2$, we directly leverage the results for bootstrap to obtain an error-rate for $\hat {\mathcal{C}}$. 
\begin{ass}
    The third moment of $\hat \sigma_n^2$ is bounded. That is, $\exists c_1< \infty$ such that $ \mathbb{E}_\beta[(\hat \sigma_n^2 - \mathbb{E}_\beta[\hat \sigma_n^2])^3] < c_1$. 
    \thlabel{ass:bounded}
\end{ass}
\thref{ass:bounded} is a typical requirement for bootstrap methods \citep{efron1994introduction}.
\thref{ass:bounded} can easily be satisfied by commonly used entropy regularized behavior policies that ensure that $\exists c_2 > 0$ such that $\forall a \in \mathcal A, \forall s \in \mathcal S, \,\, \beta(a|s) \geq c_2$.
This would ensure that the importance ratio $\rho \leq 1/(c_2^T)$ , and because $G \leq G_{\texttt{max}}$, $\rho G$ and $\rho G^2$ would also be bounded.
This ensures that $\hat \sigma_n^2$ is bounded, and therefore all its moments are also bounded, as required by \thref{ass:bounded}. 
%
% Though weaker than the guaranteed coverage provided by $\mathcal C$ (see \thref{thm:HCOVE1}), 
% 
We formalize the asymptotic correctness of bootstrap confidence intervals in the following theorem. 

\begin{thm} Under \thref{ass:supp,ass:bounded}, the confidence interval $\hat {\mathcal C}$ has a finite sample error of $O(n^{-1/2})$. That is,
\begin{align}
    \Pr \left(\sigma^2(\pi) \in \hat {\mathcal C}\right) \geq 1 - \delta - O\left(n^{-\frac{1}{2}}\right).
\end{align}
\thlabel{thm:boot}
\end{thm}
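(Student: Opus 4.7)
The plan is to reduce the statement to a standard Edgeworth-expansion argument for the bootstrap, after first rewriting $\hat{\sigma}_n^2$ in a form amenable to such analysis. From equation \eqref{eqn:proposed}, observe that $\hat{\sigma}_n^2$ is a smooth (polynomial) function of three sample means computed from three independent sources: (i) the mean of $\rho_i G_i^2$ over the full dataset $\mathcal D$, (ii) the mean of $\rho_i G_i$ over $\mathcal D_1$, and (iii) the mean of $\rho_i G_i$ over $\mathcal D_2$. Writing $\hat{\sigma}_n^2 = f(\bar{A}_n, \bar{B}_{n,1}, \bar{B}_{n,2})$ for the smooth function $f(a,b_1,b_2) = a - b_1 b_2$, I can treat $\hat{\sigma}_n^2$ as a statistic of ``smooth function of means'' type, which is exactly the class of statistics for which higher-order bootstrap correctness results are known (cf. Hall, 1988, 1992).

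Next, I would invoke \thref{ass:bounded} to justify the required moment condition. The bounded third absolute moment of $\hat{\sigma}_n^2$ (together with the underlying boundedness of $\rho G$ and $\rho G^2$, which also follows from the same discussion) ensures that the standardized statistic $\sqrt{n}(\hat{\sigma}_n^2 - \sigma^2(\pi))/\tau$ admits an Edgeworth expansion of the form $\Phi(x) + n^{-1/2} p_1(x) \phi(x) + o(n^{-1/2})$, where $\tau^2$ is the asymptotic variance of $\hat{\sigma}_n^2$ and $p_1$ is a polynomial depending on the first three cumulants. The bootstrap analogue satisfies the same expansion with the cumulants replaced by their plug-in estimates, and standard arguments give that the difference between the two cumulant polynomials is $O_p(n^{-1/2})$. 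Comparing the two expansions term by term yields a coverage error of $O(n^{-1/2})$ for the one-sided percentile (or BCa) interval, and the same rate for the two-sided interval $\hat{\mathcal C}$ by a union bound. Combining with the unbiasedness from \thref{thm:unbiased} (so the bootstrap is centered at the correct quantity asymptotically) gives the desired conclusion.

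The main technical obstacle is the data-splitting step: the three sample means involve samples of sizes $n$, $n/2$, and $n/2$, and the two halves are used in a product. Standard Edgeworth results are most cleanly stated for a single i.i.d.\ sample, so I would need to express the joint distribution of $(\bar{A}_n, \bar{B}_{n,1}, \bar{B}_{n,2})$ as a vector mean over $n/2$ i.i.d.\ triples (formed by pairing trajectories from $\mathcal D_1$ with those from $\mathcal D_2$ and recording the relevant per-trajectory quantities). Once recast this way, the multivariate Edgeworth expansion of Bhattacharya and Ghosh (1978) applies, and the delta-method applied to the smooth function $f$ preserves the $O(n^{-1/2})$ rate. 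A secondary, minor issue is verifying Cramér's condition on the underlying joint distribution, which is immediate under the boundedness assumptions of the paper because $\rho G$ and $\rho G^2$ are bounded and the sampling distribution has a nondegenerate absolutely continuous part inherited from the behavior policy; if one prefers to avoid this, the cruder Berry--Esseen-style bound already gives $O(n^{-1/2})$ directly, which is all that is claimed.
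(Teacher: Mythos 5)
Your argument is essentially correct but takes a genuinely different route from the paper. The paper's proof recasts $\hat\sigma_n^2$ as a functional $F$ of the empirical distribution $d^{\mathcal D}$ over trajectories, verifies that this functional is Hadamard differentiable (by computing the derivative along perturbations $d^{\mathcal D}+\epsilon_n P_n$ and checking linearity in $P_n$), and then directly cites the finite-sample coverage error of the percentile bootstrap for Hadamard-differentiable functionals from \citet{efron1994introduction}, following the template of \citet{kostrikov2020statistical} for the mean. You instead place $\hat\sigma_n^2$ in the ``smooth function of vector means'' class via $f(a,b_1,b_2)=a-b_1b_2$ and run the Edgeworth/Berry--Esseen machinery of Bhattacharya--Ghosh and Hall. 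The two approaches buy different things: the paper's is shorter because all the analytic work is delegated to a cited theorem, at the cost of leaving the origin of the $n^{-1/2}$ rate implicit; yours makes the source of the rate explicit and is more self-contained, and your pairing device for handling the $\mathcal D_1/\mathcal D_2$ split is a real contribution the paper elides (note, though, that your opening claim that the three sample means are ``independent sources'' is not right, since $\bar A_n$ is computed over all of $\mathcal D=\mathcal D_1\cup\mathcal D_2$; your subsequent vector-mean recasting is what actually repairs this).

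Two caveats. First, your assertion that Cram\'er's condition holds because the sampling distribution ``has a nondegenerate absolutely continuous part'' is false in the paper's setting: states and actions are finite, so $\rho G$ and $\rho G^2$ are finitely supported and Cram\'er's condition fails. You correctly anticipate this by falling back on the Berry--Esseen-level argument, which needs only the third-moment bound of \thref{ass:bounded} and still delivers $O(n^{-1/2})$ -- so the conclusion survives, but you should drop the absolute-continuity claim rather than present it as an alternative justification. Second, the appeal to \thref{thm:unbiased} for ``centering'' is not needed at this order: the $O(n^{-1/2})$ coverage rate is governed by the normal approximation error, not by the $O(n^{-1})$-or-smaller bias terms, so that sentence can be removed without loss.
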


\begin{rem}
    Other variants of bootstrap (Bootstrap-t, BCa, etc.) can also be used, which typically offer \textit{higher order refinement} by reducing the finite sample error-rate to $O(n^{-1})$ \cite{diciccio1996bootstrap}.
\end{rem}
%
% \begin{proof}
%     See Appendix E.
% \end{proof}

\section{Related Work}

\begin{figure*}[!ht]
    \centering
    \includegraphics[width=0.33\textwidth]{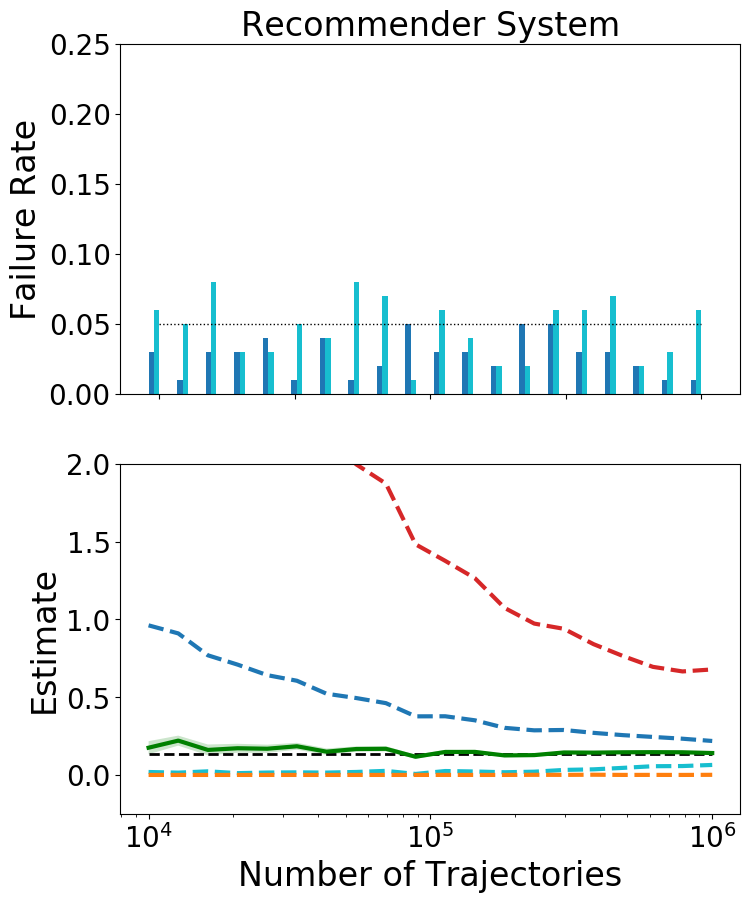} 
    % \hspace{20pt}
    \includegraphics[width=0.33\textwidth]{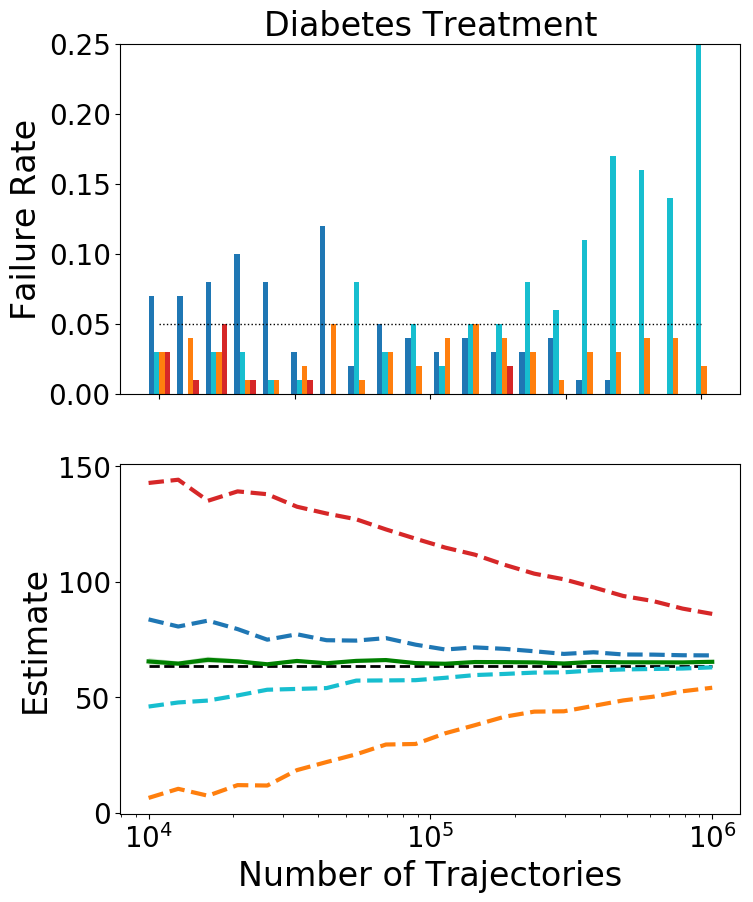}
    \includegraphics[width=0.33\textwidth]{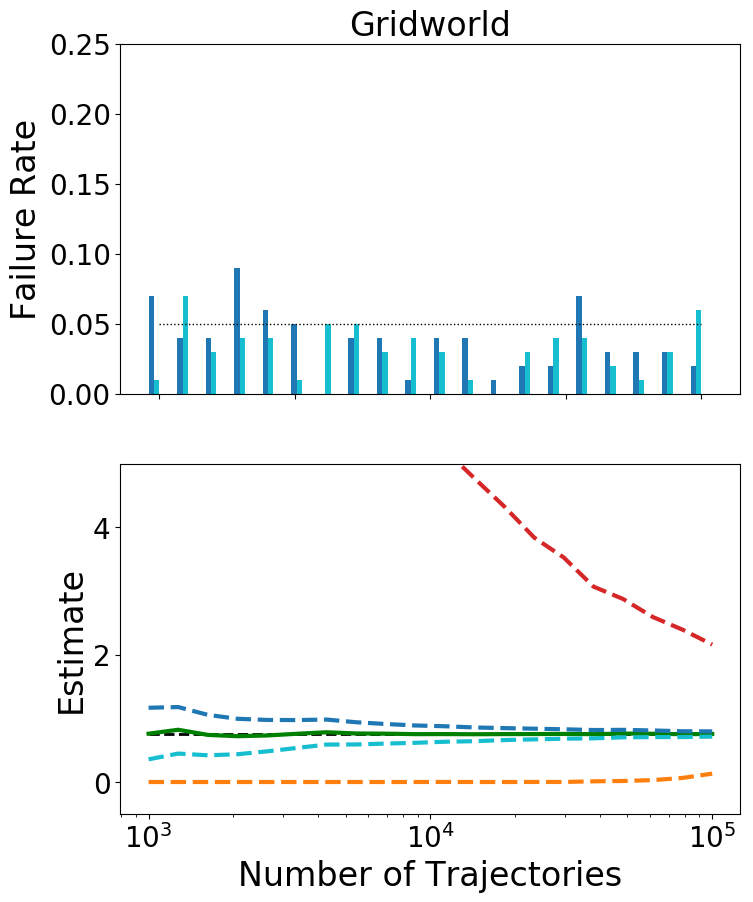}
    \\
    \includegraphics[width=0.65\textwidth]{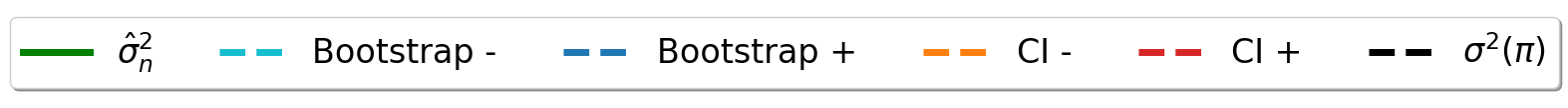}
    \caption{Experimental results using $100$ trials. (Top) Empirically observed fraction (out of 100 trials) for which the computed confidence interval did not include the actual variance,
    for the given number of trajectories (plotted on the shared horizontal axis), for the proposed upper and lower high-confidence bounds that were constructed using concentration inequalities (labeled as: $\operatorname{CI}+$, and $\operatorname{CI}-$) and using bootstrap (labeled as: Bootstrap $+$, and Bootstrap $-$).
    The color of the bars refer to the legend, and these bars should ideally be below the line representing the confidence level $\delta = 0.05$.
    (Bottom) The dashed, colored, lines represent the value of the respective high-confidence bounds, constructed with confidence level $1-\delta$ each.
    % , for a varying number of trajectories. 
    %
    The green line represents the value of our proposed estimator $\hat \sigma_n^2$ and the shaded area around it (almost negligible) corresponds to the standard error.
    Black dashed line represents the true variance, $\sigma^2(\pi)$.
    The unbiased and consistent property of $\hat \sigma_n^2$ can be visualized by comparing it with $\sigma^2(\pi)$.
    Notice that as the bootstrap confidence interval $\hat{\mathcal C}$ is only approximate, it can fail with more that $\delta$ probability. 
    In comparison, confidence interval $\mathcal C$ obtained using concentration inequalities provide guaranteed coverage.
    However, as clear from the plots, $\mathcal C$ can be conservative, while $\hat{\mathcal{C}}$ provides a tighter interval.
    }
    \label{fig:exps}
\end{figure*}

When samples are from the distribution whose variance needs to be estimated, then under the assumption that the distribution is normal, the $\chi^2$ distribution can be used for providing CIs for the variance.
Effects of non-normality on tests of significance were first analyzed by \citet{pearson1929distribution} and has led to a large body of literature on variance tests \cite{pearson1931analysis, box1953non, levene1960robust}.
% SnedecorC89}.
%
Various modifications to $\chi^2$ tests have also been proposed to be robust against samples from non-normal distributions \cite{subrahmaniam1966some,garcia2006chi,pan1999levene, lim1996comparison}. 
% based on perturbed Edgeworth expansions were suggested by \citet{subrahmaniam1966some}.
%
% Such methods were further evaluated and developed by \citet{garcia2006chi}.
%
% Levene’s test \citep{levene1960robust} and its modifications \citep{pan1999levene} are another very effective and robust set of tests which can use data generated from a non-normal distribution \citep{lim1996comparison}. 
%
The statistical bootstrap approach used in this paper to obtain bounds on the variance is closest to the bootstrap test developed by \citet{shao1990bootstrap}.
However, all of these methods are analogous to on-policy variance analysis.

In the context of RL, \citet{sobel1982variance} first introduced Bellman operators for the second moment and combined it with the first moment to compute the variance.
Temporal difference (TD) style algorithms have been subsequently developed for estimating the variance of returns \citep{tamar2016learning,la2013actor,white2016greedy,sherstan2018comparing}.
However, such TD methods might suffer from potential instabilities when used with function approximators and off-policy data \citep{SuttonBarto2}.
Policy gradient style algorithms have also been developed for finding policies that optimize variance related objectives  \citep{di2012policy,tamar2013variance},
however, these are limited to the on-policy setting.
We are not aware of any work in the RL literature that provides unbiased and consistent off-policy variance estimators, nor high-confidence bounds for thereon.

Outside RL, variants of off-policy (or counterfactual) estimation using importance sampling (or \textit{inverse propensity estimator}   \citep{horvitz1952generalization}) is common in econometrics \citep{hoover2011counterfactuals,stock2015introduction} and causal inference \cite{pearl2009causality}.
%
% An early version of the IS estimator is the classic estimator (also known as the inverse propensity estimator).
%
While these works have mostly focused on mean estimation, counterfactual probability density or quantiles of potential outcomes can also be estimated \citep{dinardo1995labor, melly2006estimation,chernozhukov2013inference,donald2014estimation}.
%
% Another approach for estimating the distribution of treatment effects via multi-stage regression was developed by \citet{melly2006estimation} and \citet{chernozhukov2013inference}.
%
% Further advances to such methods were done by \citet{}.
%
% Recently \cite{shalit16learning} adapted a similar approach for learning deep models for counterfactual inference.
%
Such distribution estimation methods can possibly also be used to estimate off-policy variance; however it is unclear how to obtain unbiased estimates of the variance from an unbiased estimate of the distribution.
Instead, focusing directly on the variance can be more data-efficient and can also lead to unbiased estimators.
Further, these works neither leverage any MDP structure to reduce variance resulting from IS, nor do they provide any methods that provide high-confidence bounds on the variance.
In the RL setting, the problem of high variance in IS is exacerbated as sequential interaction leads to multiplicative importance ratios, thereby requiring additional consideration for long tails to obtain tight bounds.

\section{Experimental Study}

Inspired by real-world applications where OVE and HCOVE can be useful, we  validate our proposed estimators empirically on two domains motivated by real-world applications.
Here, we only provide a brief description about the experimental setup and the main results.
Appendix G contains additional experimental details.\\
% , \todo{and other ablation studies on a Gridworld for assessing naive estimators ($\hat \sigma_n^!$, and $\hat \sigma_n^{!!}$), and variance reduction obtained using CDIS.}

\noindent
\textbf{Diabetes treatment: } 
This domain is based on an open-source implementation \citep{simglucose} of the FDA approved Type-1 Diabetes Mellitus simulator (T1DMS) \citep{man2014uva} for treatment of Type-1 Diabetes, where the objective is to control an insulin pump to regulate the blood-glucose level of a patient.
High-confidence estimation of the variance of a controller's outcome, before deployment, can be informative when assessing potential harm to the patient that may be caused by the controller. \\

\noindent
\textbf{Recommender system: }
This domain simulates the problem of providing online recommendations based on customer interests, where it is often useful to obtain high-confidence estimates for the variance of customer's experience, before actually deploying the system, to limit financial loss.\\

\noindent\textbf{Gridworld: }
We also consider a standard $4\times4$ Gridworld with stochastic transitions.
There are eight discrete actions corresponding to up, down, left, right, and the four diagonal movements.

%
% For the purpose of our experiments, we first estimated a nearly optimal policy $\pi$ using an actor-critic algorithm \cite{SuttonBarto2}.
%
% Then using a random policy $\pi^\text{rand}$, the behavior policy $\beta$ was constructed by mixing $\pi$ and $\pi^\text{rand}$ in equal ratios (more details can be found in Appendix G).
%
%
Given trajectories collected using  a behavior policy $\beta$, in Figure \ref{fig:exps} we provide the trend of our estimator $\hat \sigma_n^2$ for an evaluation policy $\pi$, and the confidence intervals $\mathcal C$ and $\hat {\mathcal C}$ as the number of trajectories increase (more details on how $\pi$ and $\beta$ were constructed can be found in Appendix G).
As established in \thref{thm:unbiased} and \thref{thm:consistent}, $\hat \sigma_n^2$ can be seen to be both an unbiased and a consistent estimator of $\sigma^2(\pi)$.
Similarly, as established in \thref{thm:HCOVE1}, the $(1-\delta)$-confidence interval $\mathcal C$ provides guaranteed coverage.
In comparison, as established in \thref{thm:boot}, bootstrap bounds are approximate and can fail more than $\delta$ fraction of the time.
However, bootstrap bounds can still be useful in many applications as they provide tighter intervals.

\section{Conclusion}
In this work, we addressed an understudied problem of estimating and bounding $\sigma^2(\pi)$ using only off-policy data.
We took the first steps towards developing a model-free, off-policy, unbiased, and consistent estimator of $\sigma^2(\pi)$ using a simple double-sampling trick. We then showed how bound propagation using concentration inequalities, or statistical bootstrap, can be used to obtain CIs for $\sigma^2(\pi)$.
Finally, empirical results were provided to support the established theoretical results.

\clearpage
\section{Broader Impact}

\paragraph{Note to a wider audience:} Methods developed in this work can be beneficial for researchers and practitioners working with applications that require reliability guarantees, especially before the proposed system/policy is even deployed.
It is worth noting that if the failure rate $\delta$ is set to be too low, then our bounds can result in overly conservative intervals. 
Further, for settings where the probabilities from behavior policies are not available, and are instead estimated, $\hat \sigma_n$ might be biased.
Consequently, for applications that require hard-guarantees, or have a batch of sampled actions without their sampling probabilities, our methods are not applicable.

\paragraph{Future  research directions:}  
While we took some measures to mitigate the variance of $\hat \sigma_n$, IS can still result in high variance.
Recent off-policy \textit{mean} estimation methods show how the Markov structure of an MDP \cite{liu2018breaking, xie2019towards,rowland2020conditional} or an estimate of the model of an MDP \cite{jiang2016doubly,thomas2016data} can be further leveraged for variance reduction.
Alternatively, if the entire behavior policy is available, and not just the probabilities for the sampled actions, then multi-importance sampling \citep{sbert2018multiple} can be leveraged to relax \thref{ass:supp} and also get tighter bounds on the mean return \citep{papini2019optimistic,metelli2020importance}.
Extending these methods for OVE and HCOVE remains an interesting future direction.

%
%     \item Talk about how it might be possible to create CIs using biased estimators, like for WIS \cite{}. Defer it for future research
%     %
%     \item Variance reduction techniques? Control variates? Doubly-robust? WIS? \cite{thomas2016data,jiang2016doubly}
%     %
%     \item Decomposing HCOVE in terms of HCOPE also allows leveraging Conditional IS, DualDice etc. that exploit the Markovian structure of the MDP to avoid exponential variance.
%     %
%     % \item Open problem: Characterizing tightness of the bounds.
% \end{itemize}

\bibliography{mybib}

\begin{thebibliography}{60}
\providecommand{\natexlab}[1]{#1}
\providecommand{\url}[1]{\texttt{#1}}
\providecommand{\urlprefix}{URL }
\expandafter\ifx\csname urlstyle\endcsname\relax
  \providecommand{\doi}[1]{doi:\discretionary{}{}{}#1}\else
  \providecommand{\doi}{doi:\discretionary{}{}{}\begingroup
  \urlstyle{rm}\Url}\fi

\bibitem[{Anderson(1965)}]{anderson1965negative}
Anderson, R. 1965.
\newblock Negative variance estimates.
\newblock \emph{Technometrics} 7(1): 75--76.

\bibitem[{Bastani(2014)}]{bastani2014model}
Bastani, M. 2014.
\newblock Model-free intelligent diabetes management using machine learning.
\newblock \emph{M.S. Thesis, University of Alberta} .

\bibitem[{Bellemare, Dabney, and Munos(2017)}]{bellemare2017distributional}
Bellemare, M.~G.; Dabney, W.; and Munos, R. 2017.
\newblock A distributional perspective on reinforcement learning.
\newblock \emph{arXiv preprint arXiv:1707.06887} .

\bibitem[{Bottou et~al.(2013)Bottou, Peters, Qui{\~n}onero-Candela, Charles,
  Chickering, Portugaly, Ray, Simard, and Snelson}]{bottou2013counterfactual}
Bottou, L.; Peters, J.; Qui{\~n}onero-Candela, J.; Charles, D.~X.; Chickering,
  D.~M.; Portugaly, E.; Ray, D.; Simard, P.; and Snelson, E. 2013.
\newblock Counterfactual reasoning and learning systems: The example of
  computational advertising.
\newblock \emph{The Journal of Machine Learning Research} 14(1): 3207--3260.

\bibitem[{Box(1953)}]{box1953non}
Box, G.~E. 1953.
\newblock Non-normality and tests on variances.
\newblock \emph{Biometrika} 40(3/4): 318--335.

\bibitem[{Chernozhukov, Fern{\'a}ndez-Val, and
  Melly(2013)}]{chernozhukov2013inference}
Chernozhukov, V.; Fern{\'a}ndez-Val, I.; and Melly, B. 2013.
\newblock Inference on counterfactual distributions.
\newblock \emph{Econometrica} 81(6): 2205--2268.

\bibitem[{Dabney et~al.(2017)Dabney, Rowland, Bellemare, and
  Munos}]{dabney2017distributional}
Dabney, W.; Rowland, M.; Bellemare, M.~G.; and Munos, R. 2017.
\newblock Distributional reinforcement learning with quantile regression.
\newblock \emph{arXiv preprint arXiv:1710.10044} .

\bibitem[{Di~Castro, Tamar, and Mannor(2012)}]{di2012policy}
Di~Castro, D.; Tamar, A.; and Mannor, S. 2012.
\newblock Policy gradients with variance related risk criteria.
\newblock \emph{arXiv preprint arXiv:1206.6404} .

\bibitem[{DiCiccio and Efron(1996)}]{diciccio1996bootstrap}
DiCiccio, T.~J.; and Efron, B. 1996.
\newblock Bootstrap confidence intervals.
\newblock \emph{Statistical Science} 189--212.

\bibitem[{DiNardo, Fortin, and Lemieux(1995)}]{dinardo1995labor}
DiNardo, J.; Fortin, N.~M.; and Lemieux, T. 1995.
\newblock Labor market institutions and the distribution of wages, 1973-1992: A
  semiparametric approach.
\newblock Technical report, National Bureau of Economic Research.

\bibitem[{Donald and Hsu(2014)}]{donald2014estimation}
Donald, S.~G.; and Hsu, Y.-C. 2014.
\newblock Estimation and inference for distribution functions and quantile
  functions in treatment effect models.
\newblock \emph{Journal of Econometrics} 178: 383--397.

\bibitem[{Efron and Tibshirani(1994)}]{efron1994introduction}
Efron, B.; and Tibshirani, R.~J. 1994.
\newblock \emph{An introduction to the Bootstrap}.
\newblock CRC press.

\bibitem[{Garc{\'\i}a-P{\'e}rez(2006)}]{garcia2006chi}
Garc{\'\i}a-P{\'e}rez, A. 2006.
\newblock Chi-square tests under models close to the normal distribution.
\newblock \emph{Metrika} 63(3): 343--354.

\bibitem[{Guo, Thomas, and Brunskill(2017)}]{guo2017using}
Guo, Z.; Thomas, P.~S.; and Brunskill, E. 2017.
\newblock Using options and covariance testing for long horizon off-policy
  policy evaluation.
\newblock In \emph{Advances in Neural Information Processing Systems},
  2492--2501.

\bibitem[{Hanna, Stone, and Niekum(2017)}]{Hanna2017Bootstrapping}
Hanna, J.~P.; Stone, P.; and Niekum, S. 2017.
\newblock Bootstrapping with models: Confidence intervals for off-policy
  evaluation.
\newblock In \emph{Proceedings of the 16th International Conference on
  Autonomous Agents and Multiagent Systems}.

\bibitem[{Hoeffding(1994)}]{hoeffding1994probability}
Hoeffding, W. 1994.
\newblock Probability inequalities for sums of bounded random variables.
\newblock In \emph{The Collected Works of Wassily Hoeffding}, 409--426.
  Springer.

\bibitem[{Hoover(2011)}]{hoover2011counterfactuals}
Hoover, K.~D. 2011.
\newblock \emph{Counterfactuals and causal structure}.
\newblock Oxford University Press Oxford.

\bibitem[{Horvitz and Thompson(1952)}]{horvitz1952generalization}
Horvitz, D.~G.; and Thompson, D.~J. 1952.
\newblock A generalization of sampling without replacement from a finite
  universe.
\newblock \emph{Journal of the American statistical Association} 47(260):
  663--685.

\bibitem[{Jaulin, Braems, and Walter(2002)}]{jaulin2002interval}
Jaulin, L.; Braems, I.; and Walter, E. 2002.
\newblock Interval methods for nonlinear identification and robust control.
\newblock In \emph{Proceedings of the 41st IEEE Conference on Decision and
  Control, 2002.}, volume~4, 4676--4681. IEEE.

\bibitem[{Jiang and Li(2016)}]{jiang2016doubly}
Jiang, N.; and Li, L. 2016.
\newblock Doubly robust off-policy value evaluation for reinforcement learning.
\newblock In \emph{International Conference on Machine Learning}, 652--661.
  PMLR.

\bibitem[{Kostrikov and Nachum(2020)}]{kostrikov2020statistical}
Kostrikov, I.; and Nachum, O. 2020.
\newblock Statistical bootstrapping for uncertainty estimation in off-policy
  Evaluation.
\newblock \emph{arXiv preprint arXiv:2007.13609} .

\bibitem[{Kuindersma, Grupen, and Barto(2013)}]{kuindersma2013variable}
Kuindersma, S.~R.; Grupen, R.~A.; and Barto, A.~G. 2013.
\newblock Variable risk control via stochastic optimization.
\newblock \emph{The International Journal of Robotics Research} 32(7):
  806--825.

\bibitem[{Kuzborskij et~al.(2020)Kuzborskij, Vernade, Gy{\"o}rgy, and
  Szepesv{\'a}ri}]{kuzborskij2020confident}
Kuzborskij, I.; Vernade, C.; Gy{\"o}rgy, A.; and Szepesv{\'a}ri, C. 2020.
\newblock Confident off-policy evaluation and selection through self-normalized
  importance weighting.
\newblock \emph{arXiv preprint arXiv:2006.10460} .

\bibitem[{La and Ghavamzadeh(2013)}]{la2013actor}
La, P.; and Ghavamzadeh, M. 2013.
\newblock Actor-critic algorithms for risk-sensitive MDPs.
\newblock \emph{Advances in Neural Information Processing Systems} 26:
  252--260.

\bibitem[{Levene(1960)}]{levene1960robust}
Levene, H. 1960.
\newblock Robust tests for equality of variances. Contributions to probability
  and statistics In Olkin I, Ed.

\bibitem[{Lim and Loh(1996)}]{lim1996comparison}
Lim, T.-S.; and Loh, W.-Y. 1996.
\newblock A comparison of tests of equality of variances.
\newblock \emph{Computational Statistics \& Data Analysis} 22(3): 287--301.

\bibitem[{Liu et~al.(2018)Liu, Li, Tang, and Zhou}]{liu2018breaking}
Liu, Q.; Li, L.; Tang, Z.; and Zhou, D. 2018.
\newblock Breaking the curse of horizon: Infinite-horizon off-policy
  estimation.
\newblock In \emph{Advances in Neural Information Processing Systems},
  5356--5366.

\bibitem[{Man et~al.(2014)Man, Micheletto, Lv, Breton, Kovatchev, and
  Cobelli}]{man2014uva}
Man, C.~D.; Micheletto, F.; Lv, D.; Breton, M.; Kovatchev, B.; and Cobelli, C.
  2014.
\newblock The {UVA/PADOVA} type 1 diabetes simulator: {N}ew features.
\newblock \emph{Journal of Diabetes Science and Technology} 8(1): 26--34.

\bibitem[{Maurer and Pontil(2009)}]{maurer2009empirical}
Maurer, A.; and Pontil, M. 2009.
\newblock Empirical Bernstein bounds and sample variance penalization.
\newblock \emph{arXiv preprint arXiv:0907.3740} .

\bibitem[{McHugh and Mielke(1968)}]{mchugh1968negative}
McHugh, R.~B.; and Mielke, P.~W. 1968.
\newblock Negative variance estimates and statistical dependence in nested
  sampling.
\newblock \emph{Journal of the American Statistical Association} 63(323):
  1000--1003.

\bibitem[{Melly(2006)}]{melly2006estimation}
Melly, B. 2006.
\newblock Estimation of counterfactual distributions using quantile regression
  .

\bibitem[{Metelli et~al.(2020)Metelli, Papini, Montali, and
  Restelli}]{metelli2020importance}
Metelli, A.~M.; Papini, M.; Montali, N.; and Restelli, M. 2020.
\newblock Importance Sampling Techniques for Policy Optimization.
\newblock \emph{Journal of Machine Learning Research} 21(141): 1--75.

\bibitem[{Montgomery(2007)}]{montgomery2007introduction}
Montgomery, D.~C. 2007.
\newblock \emph{Introduction to statistical quality control}.
\newblock John Wiley \& Sons.

\bibitem[{Nelder(1954)}]{nelder1954interpretation}
Nelder, J. 1954.
\newblock The interpretation of negative components of variance.
\newblock \emph{Biometrika} 41(3/4): 544--548.

\bibitem[{Pan(1999)}]{pan1999levene}
Pan, G. 1999.
\newblock On a Levene type test for equality of two variances.
\newblock \emph{Journal of Statistical Computation and Simulation} 63(1):
  59--71.

\bibitem[{Papini et~al.(2019)Papini, Metelli, Lupo, and
  Restelli}]{papini2019optimistic}
Papini, M.; Metelli, A.~M.; Lupo, L.; and Restelli, M. 2019.
\newblock Optimistic Policy Optimization via Multiple Importance Sampling.
\newblock In \emph{36th International Conference on Machine Learning},
  volume~97, 4989--4999.

\bibitem[{Pearl(2009)}]{pearl2009causality}
Pearl, J. 2009.
\newblock \emph{Causality}.
\newblock Cambridge University Press.

\bibitem[{Pearson(1931)}]{pearson1931analysis}
Pearson, E.~S. 1931.
\newblock The analysis of variance in cases of non-normal variation.
\newblock \emph{Biometrika} 114--133.

\bibitem[{Pearson and Adyanthaya(1929)}]{pearson1929distribution}
Pearson, E.~S.; and Adyanthaya, N. 1929.
\newblock The distribution of frequency constants in small samples from
  non-normal symmetrical and skew populations.
\newblock \emph{Biometrika} 21(1/4): 259--286.

\bibitem[{Precup(2000)}]{precup2000eligibility}
Precup, D. 2000.
\newblock Eligibility traces for off-policy policy evaluation.
\newblock \emph{Computer Science Department Faculty Publication Series} 80.

\bibitem[{Rowland et~al.(2020)Rowland, Harutyunyan, Hasselt, Borsa, Schaul,
  Munos, and Dabney}]{rowland2020conditional}
Rowland, M.; Harutyunyan, A.; Hasselt, H.; Borsa, D.; Schaul, T.; Munos, R.;
  and Dabney, W. 2020.
\newblock Conditional importance sampling for off-policy learning.
\newblock In \emph{International Conference on Artificial Intelligence and
  Statistics}, 45--55. PMLR.

\bibitem[{Sakaguchi and Takano(2004)}]{sakaguchi2004reliability}
Sakaguchi, Y.; and Takano, M. 2004.
\newblock Reliability of internal prediction/estimation and its application. I.
  Adaptive action selection reflecting reliability of value function.
\newblock \emph{Neural Networks} 17(7): 935--952.

\bibitem[{Sato, Kimura, and Kobayashi(2001)}]{sato2001td}
Sato, M.; Kimura, H.; and Kobayashi, S. 2001.
\newblock TD algorithm for the variance of return and mean-variance
  reinforcement learning.
\newblock \emph{Transactions of the Japanese Society for Artificial
  Intelligence} 16(3): 353--362.

\bibitem[{Sbert, Havran, and Szirmay-Kalos(2018)}]{sbert2018multiple}
Sbert, M.; Havran, V.; and Szirmay-Kalos, L. 2018.
\newblock Multiple importance sampling revisited: breaking the bounds.
\newblock \emph{EURASIP Journal on Advances in Signal Processing} 2018(1): 15.

\bibitem[{Shao(1990)}]{shao1990bootstrap}
Shao, J. 1990.
\newblock Bootstrap estimation of the asymptotic variances of statistical
  functionals.
\newblock \emph{Annals of the Institute of Statistical Mathematics} 42(4):
  737--752.

\bibitem[{Sherstan et~al.(2018)Sherstan, Ashley, Bennett, Young, White, White,
  and Sutton}]{sherstan2018comparing}
Sherstan, C.; Ashley, D.~R.; Bennett, B.; Young, K.; White, A.; White, M.; and
  Sutton, R.~S. 2018.
\newblock Comparing direct and indirect temporal-difference methods for
  estimating the variance of the return.
\newblock In \emph{UAI}, 63--72.

\bibitem[{Sobel(1982)}]{sobel1982variance}
Sobel, M.~J. 1982.
\newblock The variance of discounted Markov decision processes.
\newblock \emph{Journal of Applied Probability} 19(4): 794--802.

\bibitem[{Stock and Watson(2015)}]{stock2015introduction}
Stock, J.~H.; and Watson, M.~W. 2015.
\newblock \emph{Introduction to Econometrics}.

\bibitem[{Subrahmaniam(1966)}]{subrahmaniam1966some}
Subrahmaniam, K. 1966.
\newblock Some contributions to the theory of non-normality-I (univariate
  case).
\newblock \emph{Sankhy{\=a}: The Indian Journal of Statistics, Series A}
  389--406.

\bibitem[{Sutton and Barto(2018)}]{SuttonBarto2}
Sutton, R.~S.; and Barto, A.~G. 2018.
\newblock \emph{Reinforcement learning: An introduction}.
\newblock Cambridge, MA: MIT Press, 2 edition.

\bibitem[{Tabachnick and Fidell(2007)}]{tabachnick2007experimental}
Tabachnick, B.~G.; and Fidell, L.~S. 2007.
\newblock \emph{Experimental Designs Using ANOVA}.
\newblock Thomson/Brooks/Cole Belmont, CA.

\bibitem[{Tamar, Di~Castro, and Mannor(2016)}]{tamar2016learning}
Tamar, A.; Di~Castro, D.; and Mannor, S. 2016.
\newblock Learning the variance of the reward-to-go.
\newblock \emph{The Journal of Machine Learning Research} 17(1): 361--396.

\bibitem[{Tamar and Mannor(2013)}]{tamar2013variance}
Tamar, A.; and Mannor, S. 2013.
\newblock Variance adjusted actor critic algorithms.
\newblock \emph{arXiv preprint arXiv:1310.3697} .

\bibitem[{Teevan et~al.(2009)Teevan, Dumais, Liebling, and
  Horvitz}]{teevan2009using}
Teevan, J.~B.; Dumais, S.~T.; Liebling, D.~J.; and Horvitz, E.~J. 2009.
\newblock Using variation in user interest to enhance the search experience.
\newblock US Patent App. 12/163,561.

\bibitem[{Thomas and Brunskill(2016)}]{thomas2016data}
Thomas, P.; and Brunskill, E. 2016.
\newblock Data-efficient off-policy policy evaluation for reinforcement
  learning.
\newblock In \emph{International Conference on Machine Learning}, 2139--2148.

\bibitem[{Thomas, Theocharous, and Ghavamzadeh(2015)}]{thomas2015high}
Thomas, P.~S.; Theocharous, G.; and Ghavamzadeh, M. 2015.
\newblock High-confidence off-policy evaluation.
\newblock In \emph{Twenty-Ninth AAAI Conference on Artificial Intelligence}.

\bibitem[{Wasserman(2006)}]{wasserman2006all}
Wasserman, L. 2006.
\newblock \emph{All of nonparametric statistics}.
\newblock Springer Science \& Business Media.

\bibitem[{White and White(2016)}]{white2016greedy}
White, M.; and White, A. 2016.
\newblock A greedy approach to adapting the trace parameter for temporal
  difference learning.
\newblock \emph{arXiv preprint arXiv:1607.00446} .

\bibitem[{Xie(2019)}]{simglucose}
Xie, J. 2019.
\newblock \emph{Simglucose v0.2.1 (2018)}.
\newblock \urlprefix\url{https://github.com/jxx123/simglucose}.

\bibitem[{Xie, Ma, and Wang(2019)}]{xie2019towards}
Xie, T.; Ma, Y.; and Wang, Y.-X. 2019.
\newblock Towards optimal off-policy evaluation for reinforcement learning with
  marginalized importance sampling.
\newblock In \emph{Advances in Neural Information Processing Systems},
  9668--9678.

\end{thebibliography}

\clearpage
\appendix
\onecolumn

\setcounter{lemma}{0}
\setcounter{thm}{0}
\setcounter{cor}{0}
\setcounter{prop}{0}

\section*{\centering High-Confidence Off-Policy (or Counterfactual) Variance Estimation\\ (Supplementary Material)}

\section{A: Proofs for the Na\"{i}ve Estimator $\hat \sigma_n^{2!!}$}
\label{apx:na\"{i}ve1}
\begin{prop}
    Under \thref{ass:supp}, $\hat \sigma_n^{2!!}$ may be a biased estimator of $\sigma^2(\pi)$.
    That is, it is possible that $\mathbb{E}_\beta[\hat \sigma_n^{2!!}] \neq \sigma^2(\pi)$.
\end{prop}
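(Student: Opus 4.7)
The plan is a two-part strategy: first derive a closed form for the bias, then exhibit a concrete instance where it is nonzero. Since the trajectories in $\mathcal D$ are independent by \thref{ass:supp}, the variates $\{\rho_i G_i\}_{i=1}^n$ are i.i.d.\ under $\beta$, and $\hat \sigma_n^{2!!}$ is exactly their Bessel-corrected sample variance. By the standard unbiasedness of the sample variance applied to i.i.d.\ random variables, this immediately gives $\mathbb{E}_\beta[\hat \sigma_n^{2!!}] = \mathbb{V}_\beta[\rho G]$, which reduces the task to comparing $\mathbb{V}_\beta[\rho G]$ with $\sigma^2(\pi) = \mathbb{V}_\pi[G]$.

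Next, I would use the identity $\mathbb{E}_\beta[\rho G] = \mu(\pi)$ (which holds under \thref{ass:supp}) to write the bias as
\[
\mathbb{E}_\beta[\hat \sigma_n^{2!!}] - \sigma^2(\pi) \;=\; \bigl(\mathbb{E}_\beta[\rho^2 G^2] - \mu(\pi)^2\bigr) - \bigl(\mathbb{E}_\beta[\rho G^2] - \mu(\pi)^2\bigr) \;=\; \mathbb{E}_\beta\!\left[\rho(\rho-1)G^2\right],
\]
pinpointing the appearance of $\rho^2$ (rather than $\rho$) in front of $G^2$ as the structural source of the bias, and aligning with the motivating discussion in the main text.

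To convert this "generically nonzero" expression into a concrete witness, I would construct a minimal counterexample: a single-state, two-action bandit ($T=1$) with uniform behavior $\beta(a_1)=\beta(a_2)=1/2$, deterministic evaluation $\pi(a_1)=1$, and deterministic rewards $R(a_1)=1$, $R(a_2)=0$. Here $\sigma^2(\pi)=0$ because $\pi$ always yields return $1$, whereas $\rho G$ takes the values $2$ and $0$ each with probability $1/2$ under $\beta$, giving $\mathbb{V}_\beta[\rho G] = 1 \neq 0$. Hence $\mathbb{E}_\beta[\hat \sigma_n^{2!!}] \neq \sigma^2(\pi)$.

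The hard part is essentially nonexistent: \thref{ass:supp} is satisfied trivially since $\beta$ has full support, and the Bessel-corrected sample variance identity is textbook. The only real care required is to pick a counterexample simple enough that the relevant low-order moments can be read off by inspection while still formally respecting the finite-horizon MDP setup of the preliminaries.
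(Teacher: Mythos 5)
Your proof is correct, and it uses the same counterexample as the paper (a one-step MDP with two actions yielding rewards $1$ and $0$, a deterministic evaluation policy that always takes the rewarding action so that $\sigma^2(\pi)=0$, and a uniform behavior policy), but the computation is organized differently. The paper verifies the bias by brute force at $n=2$: it enumerates the four equally likely action pairs $(a,a),(a,b),(b,a),(b,b)$, evaluates $\hat\sigma_n^{2!!}$ on each, and averages to get $\mathbb{E}_\beta[\hat\sigma_n^{2!!}]=1\neq 0$. You instead invoke the textbook fact that the Bessel-corrected sample variance of the i.i.d.\ variates $\{\rho_i G_i\}$ is unbiased for $\mathbb{V}_\beta[\rho G]$, reducing the problem to the one-sample computation $\mathbb{V}_\beta[\rho G]=2-1^2=1$, which agrees with the paper's $n=2$ enumeration but holds for every $n\geq 2$ at once. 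Your intermediate identity
\begin{align}
\mathbb{E}_\beta[\hat\sigma_n^{2!!}]-\sigma^2(\pi)=\mathbb{E}_\beta\!\left[\rho(\rho-1)G^2\right]
\end{align}
is a genuine bonus: it isolates the self-coupled $\rho^2$ term as the exact source of the bias, formalizing the heuristic discussion in the paper's main text, and it makes clear that the bias is independent of $n$ (so the same example also foreshadows the inconsistency result). The only nit is that the unbiasedness of the sample variance requires $n\geq 2$, which you should state explicitly; with that caveat the argument is complete.
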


\begin{proof}
    We prove this using a counter-example.
    Consider the  MDP shown in Figure \ref{fig:counter}.
    \begin{figure}[h]
    \centering
    \includegraphics[width=0.2\textwidth]{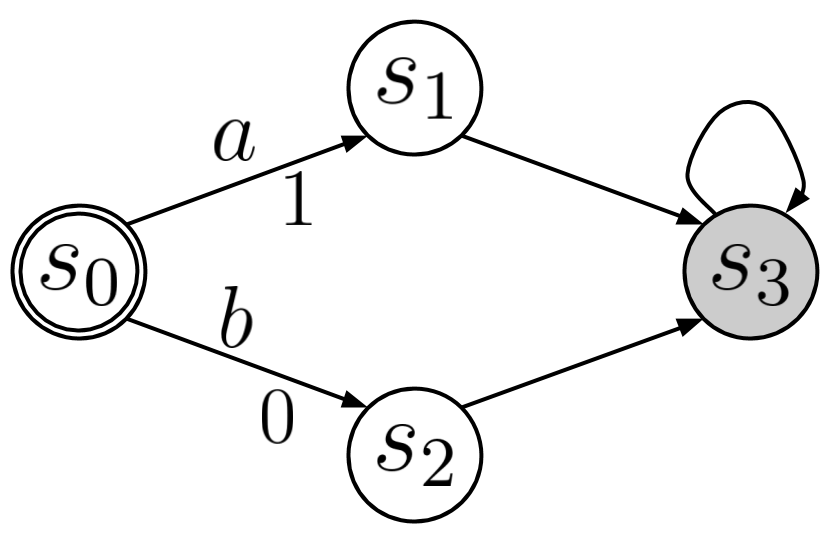}
    \caption{In this MDP, $S_0$ is the starting state and $S_3$ is the terminal/absorbing state.
    From $S_0$ there are two available actions: $a$ and $b$, which yield a reward of $1$ and $0$ respectively.
    All transitions are deterministic and  $\gamma=1$.
    }
    \label{fig:counter}
\end{figure}

    For the purpose of a counter-example, we now describe the evaluation policy $\pi$ and behavior policy $\beta$.
    Let $\pi$ be a  policy that always selects action $a$, i.e, $\pi(a|S_0) = 1$ and $\pi(b|S_0)=0$.
    Since $\pi$  is deterministic and action $a$ yields a reward of $1$ always, the variance of returns observed under $\pi$ is $0$.
    Let $\beta$ be the behavior policy which selects both actions with equal probability, i.e., $\beta(a|S_0) = \beta(b|S_0) = 0.5$

    Now, to show that $\hat \sigma_n^{2!!}$ can be a biased estimator, we explicitly compute  $\mathbb{E}[\hat \sigma_n^{2!!}]$ for the above setting, when $n=2$.
    The set of possible actions chosen by $\beta$ when $n=2$  can be $\{(a,a), (a,b), (b,a), (b,b)\}$, each of which is equally likely and occurs with   probability of $1/4$.
    Before computing variance using each of these possible outcomes, recall from \eqref{eqn:naive1}, 
    \begin{align}
        \hat \sigma_n^{2!!} &= \frac{1}{n-1}\sum_{i=1}^n \left(\rho_i G_i - \frac{1}{n} \sum_{j=1}^n \rho_j G_j \right)^2.
    \end{align}
For the case where sampled actions are $(a,a)$, the importance ratio are $(2,2)$ and
\begin{align}
    \hat \sigma_n^{2!!} = \frac{2}{2-1}\left(2\times1 - \frac{1}{2}(2\times1 + 2\times1)\right)^2 = 0.
\end{align}

Similarly, for the cases where sampled actions are $(a,b)$ or $(b,a)$, the importance ratios are $(2,0)$ or $(0,2)$ respectively, and
\begin{align}
    \hat \sigma_n^{2!!} = \frac{(2\times1 - \frac{1}{2}(2\times1 + 0))^2 + (0 - \frac{1}{2}(0 + 2\times1))^2}{2-1} = 1 + 1 = 2.
\end{align}
For the cases where sampled actions are $(b,b)$ the importance ratios are $(0,0)$ respectively, and $\hat \sigma_n^{2!!} = 0$.
Therefore, the expected value of $\hat \sigma_n^{2!!} = \frac{1}{4}(0+2+2+0) = 1 \neq 0$, and it is a biased estimator.

\end{proof}

\begin{prop}
    Under \thref{ass:supp}, $\hat \sigma_n^{2!!}$ may not be a consistent estimator of $\sigma^2(\pi)$.
    That is, it is not always the case that $\hat \sigma_n^{2!!} \overset{\text{a.s}}{\longrightarrow} \sigma^2(\pi)$.
\end{prop}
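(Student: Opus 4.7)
The plan is to reuse the two-action counter-example MDP from the previous proof (Figure \ref{fig:counter}) together with the same $\pi$ and $\beta$, and to identify the almost-sure limit of $\hat \sigma_n^{2!!}$ via the strong law of large numbers. Rewriting the Bessel-corrected sample variance in the equivalent form
\begin{equation}
    \hat \sigma_n^{2!!} \;=\; \frac{n}{n-1}\left( \frac{1}{n}\sum_{i=1}^n (\rho_i G_i)^2 \;-\; \Bigl(\frac{1}{n}\sum_{i=1}^n \rho_i G_i\Bigr)^{\!2}\right)
\end{equation}
makes the analysis transparent: the trajectories in $\mathcal D$ are i.i.d.\ under $\beta$, so $\rho_i G_i$ is an i.i.d.\ sequence with finite first two moments (the returns and importance ratios are both bounded in this finite MDP). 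By the strong law of large numbers applied to $\rho_i G_i$ and to $(\rho_i G_i)^2$, together with the continuous mapping theorem,
\begin{equation}
    \hat \sigma_n^{2!!} \;\overset{\text{a.s.}}{\longrightarrow}\; \mathbb{E}_\beta[(\rho G)^2] \,-\, \mathbb{E}_\beta[\rho G]^2 \;=\; \mathbb{V}_\beta(\rho G).
\end{equation}

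It then suffices to show that for the counter-example this limit is strictly positive while $\sigma^2(\pi)=0$. Under $\beta$ one of two equally likely trajectories occurs: action $a$ with $\rho=2$ and $G=1$, or action $b$ with $\rho=0$ and $G=0$. Thus $\rho G$ is Bernoulli-like, taking the value $2$ with probability $1/2$ and $0$ with probability $1/2$, so $\mathbb{E}_\beta[\rho G] = 1 = \mu(\pi)$ (as expected of IS) but $\mathbb{E}_\beta[(\rho G)^2] = 2$. Hence the a.s.\ limit of $\hat\sigma_n^{2!!}$ equals $2-1=1$, whereas $\pi$ is deterministic and yields $\sigma^2(\pi)=0$. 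This contradicts almost-sure convergence to $\sigma^2(\pi)$ and completes the argument.

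I do not anticipate any substantive obstacle: the only thing to be careful about is justifying the use of SLLN on $(\rho_i G_i)^2$, which follows immediately because both $\rho_i$ and $G_i$ are bounded in this finite-horizon, finite-MDP counter-example, giving $\mathbb{E}_\beta[(\rho G)^2]<\infty$. The conceptual content of the proof is exactly the reason $\hat\sigma_n^{2!!}$ fails: plain IS corrects the distribution only in the first moment, so the empirical variance of $\{\rho_i G_i\}$ converges to $\mathbb{V}_\beta(\rho G)$ rather than to $\mathbb{V}_\pi(G)$, and these two quantities need not coincide.
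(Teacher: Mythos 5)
Your proof is correct and follows essentially the same route as the paper: apply the strong law of large numbers (together with Slutsky's theorem / the continuous mapping theorem) to identify the almost-sure limit of $\hat \sigma_n^{2!!}$ as $\mathbb{E}_\beta[(\rho G)^2] - \mathbb{E}_\beta[\rho G]^2 = \mathbb{V}_\beta(\rho G)$, which need not equal $\mathbb{V}_\pi(G)$. If anything, your write-up is slightly more complete than the paper's, which stops at asserting that this limit ``may not be equal'' to $\sigma^2(\pi)$, whereas you instantiate the two-action counter-example and verify that the limit equals $1$ while $\sigma^2(\pi)=0$.
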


\begin{proof} We begin by expanding \eqref{eqn:naive1},
    
    \begin{align}
        \lim_{n \rightarrow \infty} \hat \sigma_n^{2!!} &= \lim_{n \rightarrow \infty} \frac{1}{n-1}\sum_{i=1}^n \left(\rho_i G_i - \frac{1}{n} \sum_{j=1}^n \rho_j G_j \right)^2
        \\
        &= \lim_{n \rightarrow \infty} \frac{1}{n-1}\sum_{i=1}^n \left(\rho_i^2 G_i^2 - \frac{2}{n} (\rho_i G_i) \sum_{j=1}^n \rho_j G_j + \left(\frac{1}{n} \sum_{j=1}^n \rho_j G_j \right)^2 \right)
        \\
        &= \lim_{n \rightarrow \infty} \left( \frac{1}{n-1}\sum_{i=1}^n \rho_i^2 G_i^2   - 2\left( \frac{1}{n-1}\sum_{i=1}^n \rho_i G_i\right) \left(\frac{1}{n}\sum_{j=1}^n \rho_j G_j \right) + \frac{1}{n-1}\sum_{i=1}^n\left(\frac{1}{n} \sum_{j=1}^n \rho_j G_j \right)^2 \right). \label{eqn:temp1}
    \end{align}
Notice, (a) $1/(n-1) = (1/n)(n/(n-1))$ and in the limit $n/(n-1) \rightarrow 1$, therefore $1/(n-1) \rightarrow 1/n$,  and (b) using Kolmogorov's strong law of large numbers, $\frac{1}{n} \sum_{j=1}^n \rho_j G_j \overset{\text{a.s.}}{\rightarrow} \mathbb{E}_\beta[\rho G] = \mathbb{E}_\pi[G]$, and $\frac{1}{n} \sum_{i=1}^n \rho_i^2 G_i^2 \overset{\text{a.s.}}{\rightarrow} \mathbb{E}_\beta[\rho^2 G^2] = \mathbb{E}_\pi[\rho G^2]$.
Therefore using Slutsky's theorem, \eqref{eqn:temp1} can be simplified to,
\begin{align}
    \lim_{n \rightarrow \infty} \hat \sigma_n^{2!!} &\overset{\text{a.s.}}{\longrightarrow}   \mathbb{E}_\pi\left[ \rho G^2 \right]   - \mathbb{E}_\pi[G]^2. \label{eqn:temp2}
\end{align}
Since $\sigma^2(\pi) = \mathbb{V}_\pi(G) =  \mathbb{E}_\pi\left[G^2 \right]   - \mathbb{E}_\pi[G]^2$, \eqref{eqn:temp2} may not be equal to $\sigma^2(\pi)$.

\end{proof}

\section{B: Proofs for the Na\"{i}ve Estimator $\hat \sigma_n^{2!}$}
\label{apx:na\"{i}ve2}

% %
% \begin{prop}
%     Under \thref{ass:supp}, $$\mathbb{E}_\beta[\rho (G - \mathbb{E}_\beta[\rho G])^2] = \mathbb{E}_\pi[(G - \mathbb{E}_\pi[G])^2].$$
% \end{prop}

% \begin{proof}
%     \begin{align}
%         \mathbb{E}_\beta[\rho (G - \mathbb{E}_\beta[\rho G])^2] &= 

%     \end{align}    
% \end{proof}

\begin{prop}
    Under \thref{ass:supp}, $\hat \sigma_n^{2!}$ may be a biased estimator of $\sigma^2(\pi)$.
    That is, it is possible that $\mathbb{E}_\beta[\hat \sigma_n^{2!}] \neq \sigma^2(\pi)$.
\end{prop}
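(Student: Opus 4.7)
The plan is to establish this via the same counterexample used for $\hat \sigma_n^{2!!}$ in Appendix A, since the counterexample setting is minimal and already exhibits the essential pathology: deterministic $\pi$ with $\sigma^2(\pi)=0$ but $\rho$ that can be $0$ or $2$ under the uniform behavior policy $\beta$.

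I would take the same two-action MDP from Figure~\ref{fig:counter} with $\pi(a|S_0)=1$ (so $\sigma^2(\pi)=0$) and $\beta(a|S_0)=\beta(b|S_0)=0.5$, and compute $\mathbb{E}_\beta[\hat \sigma_n^{2!}]$ directly for the smallest informative case $n=2$. The four equally-likely action pairs $(a,a), (a,b), (b,a), (b,b)$ correspond to importance-ratio vectors $(2,2), (2,0), (0,2), (0,0)$ and return vectors $(1,1), (1,0), (0,1), (0,0)$, and for each I would plug into
\begin{align}
\hat \sigma_n^{2!} = \frac{1}{n-1}\sum_{i=1}^n \rho_i\left(G_i - \frac{1}{n}\sum_{j=1}^n \rho_j G_j\right)^2.
\end{align}
A short calculation gives $4$ for $(a,a)$ (since $\rho_i(G_i-2)^2 = 2\cdot 1$ twice) and $0$ for each of the other three cases (either because $G_i-\bar{\rho G}=0$ as in $(a,b)$ and $(b,a)$, or because all $\rho_i=0$ as in $(b,b)$). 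Averaging over the four equally-likely outcomes yields $\mathbb{E}_\beta[\hat \sigma_n^{2!}]=1 \neq 0 = \sigma^2(\pi)$, which is the desired inequality.

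There is no real obstacle here; the only subtlety worth highlighting is making sure that for the $(a,b)$ and $(b,a)$ cases the residual $G_i - \tfrac{1}{n}\sum_j \rho_j G_j$ happens to vanish exactly when paired with a nonzero $\rho_i$, so that the contribution is $0$ rather than merely small. This is what keeps the arithmetic clean and makes the bias value come out to exactly $1$. Since we only need existence of one counterexample, showing a strict inequality on a single MDP with $n=2$ is sufficient to complete the proof.
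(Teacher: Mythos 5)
Your proposal is correct and follows essentially the same route as the paper's Appendix~B proof: the identical two-action counterexample with $n=2$, yielding $\hat\sigma_n^{2!}=4$ on the $(a,a)$ outcome and $0$ on the other three, hence $\mathbb{E}_\beta[\hat\sigma_n^{2!}]=1\neq 0=\sigma^2(\pi)$. All the arithmetic checks out, including the observation that the residual vanishes exactly in the mixed-action cases.
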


\begin{proof}
This proof uses the same counter-example presented in the proof of \thref{prop:naive1biased}.
Recall from \eqref{eqn:naive2} that, 
    \begin{align}
        \hat \sigma_n^{2!} &= \frac{1}{n-1}\sum_{i=1}^n  \rho_i \left(G_i - \frac{1}{n} \sum_{j=1}^n \rho_j G_j \right)^2
    \end{align}
For the case where sampled actions are $(a,a)$, the importance ratios are $(2,2)$ and
\begin{align}
    \hat \sigma_n^{2!} = \frac{2}{2-1}\left(2\times \left(1 - \frac{1}{2}(2\times1 + 2\times1) \right)^2 \right) = 4.
\end{align}

Similarly, for the cases where sampled actions are $(a,b)$ or $(b,a)$, the importance ratios are $(2,0)$ or $(0,2)$ respectively, and
\begin{align}
    \hat \sigma_n^{2!} = \frac{(2\times(1 - \frac{1}{2}(2\times1 + 0))^2) + (0\times(0 - \frac{1}{2}(2\times1 + 0))^2)}{2-1} = 0 + 0 = 0.
\end{align}
For the cases where sampled actions are $(b,b)$ the importance ratios are $(0,0)$ respectively, and $\hat \sigma_n^{2!} = 0$.
Therefore, the expected value of $\hat \sigma_n^{2!} = \frac{1}{4}(4+0+0+0) = 1 \neq 0$, and it is a biased estimator.
\end{proof}

\begin{prop}
    Under \thref{ass:supp}, $\hat \sigma_n^{2!}$ is a consistent estimator of $\sigma^2(\pi)$.
    That is, $\hat \sigma_n^{2!} \overset{\text{a.s.}}{\longrightarrow} \sigma^2(\pi)$.
\end{prop}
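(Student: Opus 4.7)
The plan is to expand the square in the definition of $\hat \sigma_n^{2!}$, rewrite every sum as a sample mean, and then apply the strong law of large numbers together with the continuous mapping theorem (or equivalently, Slutsky's theorem) term by term. This parallels the strategy used in the proof of \thref{prop:naive1inconsistent} immediately above, but the extra $\rho_i$ factor out front will make the limits line up correctly with $\sigma^2(\pi)$ rather than overshoot.

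Concretely, I would first expand
\begin{align}
\hat \sigma_n^{2!} = \frac{n}{n-1}\!\left(\frac{1}{n}\sum_{i=1}^n \rho_i G_i^2\right)
- \frac{2n}{n-1}\!\left(\frac{1}{n}\sum_{i=1}^n \rho_i G_i\right)\!\!\left(\frac{1}{n}\sum_{j=1}^n \rho_j G_j\right)
+ \frac{n}{n-1}\!\left(\frac{1}{n}\sum_{i=1}^n \rho_i\right)\!\!\left(\frac{1}{n}\sum_{j=1}^n \rho_j G_j\right)^{\!2}.
\end{align}
The key observation is that the factors involving $i$ in the second and third terms can be pulled out of the $i$-sum because the inner quantity $\frac{1}{n}\sum_j \rho_j G_j$ does not depend on $i$; this is what converts $\sum_i \rho_i (\cdot)^2$ into $\left(\sum_i \rho_i\right)\cdot(\cdot)^2$ and makes each summand into a standard empirical mean.

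Next, under \thref{ass:supp} the trajectories (hence the summands $\rho_i G_i^2$, $\rho_i G_i$, and $\rho_i$) are i.i.d., and a standard change-of-measure argument gives $\mathbb{E}_\beta[\rho G^2]=\mathbb{E}_\pi[G^2]$, $\mathbb{E}_\beta[\rho G]=\mathbb{E}_\pi[G]$, and $\mathbb{E}_\beta[\rho]=1$. Applying Kolmogorov's SLLN to each of the three empirical means yields almost-sure convergence to these limits. Since $n/(n-1)\to 1$ and the map $(x,y,z)\mapsto x - 2yz + z y^2$ (after identifying $y=z=\frac{1}{n}\sum_j \rho_j G_j$) is continuous, the continuous mapping theorem gives
\begin{align}
\hat \sigma_n^{2!} \overset{\text{a.s.}}{\longrightarrow} \mathbb{E}_\pi[G^2] - 2\,\mathbb{E}_\pi[G]^2 + 1\cdot \mathbb{E}_\pi[G]^2 = \mathbb{E}_\pi[G^2] - \mathbb{E}_\pi[G]^2 = \sigma^2(\pi),
\end{align}
which is the desired conclusion.

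There is no real obstacle here: the only subtle point is making sure that when the $\rho_i$ is factored out in the third term, it produces the empirical mean of $\rho_i$ (which converges to $1$ rather than something that would leave a lingering $\rho$ in the limit), and that one uses the $\mathbb{P}_\beta$-a.s.\ form of the SLLN so that the conclusion is almost sure (not merely in probability). Existence of the expectations needed for SLLN is immediate because $G$ is bounded in $[G_\texttt{min},G_\texttt{max}]$ and \thref{ass:supp} guarantees $\mathbb{E}_\beta[\rho]=1<\infty$, so $\mathbb{E}_\beta[\rho G]$ and $\mathbb{E}_\beta[\rho G^2]$ are finite as well.
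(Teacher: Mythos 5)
Your proposal is correct and follows essentially the same route as the paper's own proof: expand the square, identify the three empirical means $\frac{1}{n}\sum_i\rho_i G_i^2$, $\frac{1}{n}\sum_i\rho_i G_i$, and $\frac{1}{n}\sum_i\rho_i$, apply Kolmogorov's strong law of large numbers to each (using $\mathbb{E}_\beta[\rho]=1$ for the third), and combine via Slutsky's theorem (equivalently, the continuous mapping theorem) to obtain $\mathbb{E}_\pi[G^2]-\mathbb{E}_\pi[G]^2=\sigma^2(\pi)$. The only blemish is a notational slip in describing the continuous map as $(x,y,z)\mapsto x-2yz+zy^2$ with ``$y=z$''; the displayed limit that follows is nonetheless correct.
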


\begin{proof} We begin by expanding \eqref{eqn:naive1},
    
    \begin{align}
        \lim_{n \rightarrow \infty} \hat \sigma_n^{2!!} &= \lim_{n \rightarrow \infty} \frac{1}{n-1}\sum_{i=1}^n\rho_i  \left( G_i - \frac{1}{n} \sum_{j=1}^n \rho_j G_j \right)^2
        \\
        &= \lim_{n \rightarrow \infty} \frac{1}{n-1}\sum_{i=1}^n \left(\rho_i G_i^2 - \frac{2}{n} (\rho_i G_i) \sum_{j=1}^n \rho_j G_j + \rho_i \left(\frac{1}{n} \sum_{j=1}^n \rho_j G_j \right)^2 \right)
        \\
        &= \lim_{n \rightarrow \infty} \left( \frac{1}{n-1}\sum_{i=1}^n \rho_i G_i^2   - 2\left( \frac{1}{n-1}\sum_{i=1}^n \rho_i G_i\right) \left(\frac{1}{n}\sum_{j=1}^n \rho_j G_j \right) + \left(\frac{1}{n-1}\sum_{i=1}^n \rho_i \right) \left(\frac{1}{n} \sum_{j=1}^n \rho_j G_j \right)^2 \right). \label{eqn:temp3}
    \end{align}
Notice, (a) $1/(n-1) = (1/n)(n/(n-1))$ and in the limit $n/(n-1) \rightarrow 1$, therefore $1/(n-1) \rightarrow 1/n$,  and (b) using Kolmogorov's strong law of large numbers, $\frac{1}{n} \sum_{j=1}^n \rho_i G_i \overset{\text{a.s.}}{\rightarrow} \mathbb{E}_\beta[\rho G] = \mathbb{E}_\pi[G]$, further $\frac{1}{n} \sum_{j=1}^n \rho_i G_i^2 \overset{\text{a.s.}}{\rightarrow} \mathbb{E}_\beta[\rho G^2] = \mathbb{E}_\pi[G^2]$, and $\frac{1}{n}\sum_{i=1}^n\rho_i \overset{\text{a.s.}}{\rightarrow} \mathbb{E}_\beta[\rho] = 1$.
Therefore using Slutsky's theorem, \eqref{eqn:temp3} can be simplified to,
\begin{align}
    \lim_{n \rightarrow \infty} \hat \sigma_n^{2!!} &\overset{\text{a.s.}}{\longrightarrow}    \mathbb{E}_\pi\left[G_i^2 \right]   - \mathbb{E}_\pi[G]^2
    \\
    &= \mathbb{V}_\pi(G) \\
    &= \sigma^2(\pi).
\end{align}
\end{proof}

\section{C: Proofs for the Proposed Estimator $\hat \sigma_n^2$}
\label{apx:proposed}
\begin{thm}
    Under \thref{ass:supp}, $\hat \sigma_n^2$ is an unbiased estimator of $\sigma^2(\pi)$.
    That is, $\mathbb{E}_\beta[\hat \sigma_n^2] = \sigma^2(\pi)$.
\end{thm}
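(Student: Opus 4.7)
The plan is to apply linearity of expectation to the definition of $\hat\sigma_n^2$ and then handle the two terms separately, reducing each to a standard importance sampling identity.

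First I would take expectations to get
\begin{align}
\mathbb{E}_\beta[\hat\sigma_n^2] = \frac{1}{n}\sum_{i=1}^n \mathbb{E}_\beta[\rho_i G_i^2] - \mathbb{E}_\beta\!\left[\left(\tfrac{1}{|\mathcal{D}_1|}\sum_{i\in\mathcal{D}_1}\rho_i G_i\right)\!\left(\tfrac{1}{|\mathcal{D}_2|}\sum_{j\in\mathcal{D}_2}\rho_j G_j\right)\right].
\end{align}
For the first term, a standard change of measure under \thref{ass:supp} gives $\mathbb{E}_\beta[\rho_i G_i^2] = \mathbb{E}_\pi[G^2]$ for every $i$, so the average collapses to $\mathbb{E}_\pi[G^2]$.

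The second term is where the double-sampling construction does its work. Because $\mathcal{D}_1$ and $\mathcal{D}_2$ are disjoint subsets of trajectories, and \thref{ass:supp} asserts that all trajectories in $\mathcal{D}$ are independent, the two inner sums are functions of disjoint collections of independent random variables, and are therefore independent. I would invoke the factorization $\mathbb{E}[UV]=\mathbb{E}[U]\,\mathbb{E}[V]$ for independent $U,V$ to split the expectation into a product, and then apply linearity inside each factor together with the standard identity $\mathbb{E}_\beta[\rho G] = \mathbb{E}_\pi[G] = \mu(\pi)$ to each sample mean. This yields $\mu(\pi)\cdot\mu(\pi) = \mu(\pi)^2$.

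Putting the two pieces together gives $\mathbb{E}_\beta[\hat\sigma_n^2] = \mathbb{E}_\pi[G^2] - \mu(\pi)^2$, which by \eqref{eqn:reform} is exactly $\sigma^2(\pi)$. The key conceptual step—and really the only nontrivial one—is recognizing that the factorization of the expected product requires genuine independence of the two inner sample means, which is precisely what the split into $\mathcal{D}_1$ and $\mathcal{D}_2$ guarantees. A na\"ive plug-in estimator that used the same trajectories in both factors would instead produce terms of the form $\mathbb{E}_\beta[\rho_i^2 G_i^2]\neq \mathbb{E}_\pi[G]^2$, reintroducing the self-coupled importance ratios discussed earlier; the data-split cleanly avoids this and is what makes the argument go through.
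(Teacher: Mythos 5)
Your proof is correct and follows essentially the same route as the paper's: linearity of expectation, factorization of the expected product via the independence of the two disjoint sub-samples $\mathcal{D}_1$ and $\mathcal{D}_2$, and the standard importance-sampling change of measure $\mathbb{E}_\beta[\rho G^2] = \mathbb{E}_\pi[G^2]$ and $\mathbb{E}_\beta[\rho G] = \mathbb{E}_\pi[G]$. Your closing remark correctly identifies the independence-based factorization as the crux that the data split enables, which is exactly the point the paper's derivation relies on.
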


\begin{proof}

\begin{align}
    \mathbb{E}_\beta[\hat \sigma_n^2] &= \mathbb{E}_\beta\left[\frac{1}{n}\sum_{i=1}^{n} \rho_i G_i^2  - \left(\frac{1}{|\mathcal D_1|}\sum_{i=1}^{|\mathcal D_1|}\rho_i G_i \right) \left(\frac{1}{|\mathcal D_2|}\sum_{i=1}^{|\mathcal D_2|}\rho_i G_i \right) \right]
    % \\
    % &= \mathbb{E}_\beta\left[\frac{1}{n}\sum_{i=1}^{n} \rho_i G_i^2 \right]  - \mathbb{E}_\beta\left[\left(\frac{1}{|\mathcal D_1|}\sum_{i=1}^{|\mathcal D_1|}\rho_i G_i \right) \mathbb{E}_\beta\left[\left(\frac{1}{|\mathcal D_2|}\sum_{i=1}^{|\mathcal D_2|}\rho_i G_i \right) \middle| \left(\frac{1}{|\mathcal D_1|}\sum_{i=1}^{|\mathcal D_1|}\rho_i G_i \right) \right] \right]
    \\
    &= \mathbb{E}_\beta\left[\frac{1}{n}\sum_{i=1}^{n} \rho_i G_i^2 \right]  - \mathbb{E}_\beta\left[\left(\frac{1}{|\mathcal D_1|}\sum_{i=1}^{|\mathcal D_1|}\rho_i G_i \right)\right] \mathbb{E}_\beta\left[\left(\frac{1}{|\mathcal D_2|}\sum_{i=1}^{|\mathcal D_2|}\rho_i G_i \right)  \right]
    \\
    &= \mathbb{E}_\beta\left[\rho G^2 \right]  - \mathbb{E}_\beta\left[\rho G \right] \mathbb{E}_\beta\left[\rho G  \right]
    \\
    &= \mathbb{E}_\pi\left[ G^2 \right]  - \mathbb{E}_\pi\left[ G \right] \mathbb{E}_\pi\left[G  \right]
    \\
    &= \mathbb{V}_\pi [G] 
    \\
    &= \sigma^2(\pi).
\end{align}

\paragraph{Example of negative variance estimate: } For completeness, we also work out the expected estimate of $\hat \sigma_n^2$ for the counter-example used to show $\hat \sigma_n^{2!!}$ and $\hat \sigma_n^{2!}$ are biased. This example also shows that the $\hat \sigma_n$ can be negative. 
For the case where sampled actions are $(a,a)$, the importance ratio are $(2,2)$ and $\hat \sigma_n$ is negative,
\begin{align}
    \hat \sigma_n^2 = \frac{1}{2}(2\times 1 + 2\times 1) - (2 \times1) (2\times1) = -2.
\end{align}
Similarly, for the cases where sampled actions are $(a,b)$ or $(b,a)$, the importance ratios are $(2,0)$ or $(0,2)$ respectively, and
\begin{align}
    \hat \sigma_n^2 = \frac{1}{2}(2\times 1 + 0) - (2\times1)\times (0) = 1.
\end{align}
For the cases where sampled actions are $(b,b)$, the importance ratios are $(0,0)$ respectively, and $\hat \sigma_n^2 = 0$.
Therefore, the expected value of $\hat \sigma_n^2 = \frac{1}{4}(-2+1+1+0) = 0$, as required.

\end{proof}

\begin{thm}
    Under \thref{ass:supp}, $\hat \sigma_n^2$ is a consistent estimator of $\sigma^2(\pi)$.
    That is, $\hat \sigma_n^2 \overset{\text{a.s.}}{\longrightarrow} \sigma^2(\pi)$.
\end{thm}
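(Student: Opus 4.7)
The plan is to decompose $\hat \sigma_n^2$ into three independent sample averages and apply the strong law of large numbers (SLLN) to each, then glue the pieces together with Slutsky's theorem (continuous mapping) to obtain almost-sure convergence to $\mathbb{E}_\pi[G^2] - \mu(\pi)^2 = \sigma^2(\pi)$.

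First I would verify the integrability preconditions needed for Kolmogorov's SLLN. Because $G \in [G_{\min}, G_{\max}]$ is bounded, and because $\mathbb{E}_\beta[\rho] = 1$ under \thref{ass:supp}, we immediately get $\mathbb{E}_\beta[|\rho G|] \le G_{\max} \,\mathbb{E}_\beta[\rho] < \infty$ and $\mathbb{E}_\beta[\rho G^2] \le G_{\max}^2 < \infty$. Since trajectories in $\mathcal D$ are i.i.d.\ under \thref{ass:supp}, the summands $\rho_i G_i^2$ and $\rho_i G_i$ are i.i.d.\ with finite first moment, which is all SLLN requires.

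Next, writing $n_1 \coloneqq |\mathcal D_1|$ and $n_2 \coloneqq |\mathcal D_2|$ with $n_1 + n_2 = n$ and $n_1, n_2 \to \infty$ as $n \to \infty$ (since they are of equal size), SLLN gives
\begin{align}
\frac{1}{n}\sum_{i=1}^{n} \rho_i G_i^2 &\overset{\text{a.s.}}{\longrightarrow} \mathbb{E}_\beta[\rho G^2] = \mathbb{E}_\pi[G^2], \\
\frac{1}{n_1}\sum_{i=1}^{n_1} \rho_i G_i &\overset{\text{a.s.}}{\longrightarrow} \mathbb{E}_\beta[\rho G] = \mu(\pi), \\
\frac{1}{n_2}\sum_{i=1}^{n_2} \rho_i G_i &\overset{\text{a.s.}}{\longrightarrow} \mathbb{E}_\beta[\rho G] = \mu(\pi),
\end{align}
where the expectations $\mathbb{E}_\beta[\rho G^k] = \mathbb{E}_\pi[G^k]$ follow from the standard IS identity under \thref{ass:supp}.

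Finally I would invoke the continuous mapping theorem (or Slutsky's theorem) applied to the map $(a, b, c) \mapsto a - bc$, which is continuous on $\mathbb{R}^3$. Almost-sure convergence is preserved under continuous maps, so
\begin{align}
\hat \sigma_n^2 \overset{\text{a.s.}}{\longrightarrow} \mathbb{E}_\pi[G^2] - \mu(\pi)\cdot\mu(\pi) = \mathbb{V}_\pi[G] = \sigma^2(\pi),
\end{align}
which is exactly the desired claim. There is no real obstacle here: the only mildly delicate point is noting that the two sample means on the split halves are actually independent of each other (which is what made the unbiasedness argument work), but for consistency we merely need each to converge separately, and then the product of a.s.\ limits is the a.s.\ limit of the product. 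The proof is therefore essentially a two-line application of SLLN plus continuous mapping once integrability has been checked via the boundedness of $G$ and the IS identity $\mathbb{E}_\beta[\rho] = 1$.
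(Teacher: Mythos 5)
Your proposal is correct and follows essentially the same route as the paper's own proof: apply the strong law of large numbers to each of the three sample averages and combine the almost-sure limits via Slutsky's theorem (continuous mapping), yielding $\mathbb{E}_\beta[\rho G^2] - \mathbb{E}_\beta[\rho G]^2 = \sigma^2(\pi)$. Your explicit check of the integrability preconditions via boundedness of $G$ and $\mathbb{E}_\beta[\rho]=1$ is a small addition the paper leaves implicit, but the argument is otherwise identical.
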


\begin{proof}

\begin{align}
    \lim_{n \rightarrow \infty}  \hat \sigma_n^2 &= \lim_{n \rightarrow \infty}  \left[\frac{1}{n}\sum_{i=1}^{n} \rho_i G_i^2  - \left(\frac{1}{|\mathcal D_1|}\sum_{i=1}^{|\mathcal D_1|}\rho_i G_i \right) \left(\frac{1}{|\mathcal D_2|}\sum_{i=1}^{|\mathcal D_2|}\rho_i G_i \right) \right]. \label{eqn:temp4}
\end{align}
Now using Kolmogorov's strong law of large numbers and Slutsky's theorem, \eqref{eqn:temp4} can be simplified to
\begin{align}
    \lim_{n \rightarrow \infty} &= \left[\lim_{n \rightarrow \infty}  \frac{1}{n}\sum_{i=1}^{n} \rho_i G_i^2 \right]  - \left[\lim_{n \rightarrow \infty} \left(\frac{1}{|\mathcal D_1|}\sum_{i=1}^{|\mathcal D_1|}\rho_i G_i \right)\right] \left[\lim_{n \rightarrow \infty} \left(\frac{1}{|\mathcal D_2|}\sum_{i=1}^{|\mathcal D_2|}\rho_i G_i \right)  \right]
    \\
    &\overset{\text{a.s.}}{\longrightarrow} \mathbb{E}_\beta\left[\rho G^2 \right]  - \mathbb{E}_\beta\left[\rho G \right] \mathbb{E}_\beta\left[\rho G  \right]
    \\
    &= \mathbb{E}_\pi\left[ G^2 \right]  - \mathbb{E}_\pi\left[ G \right] \mathbb{E}_\pi\left[G  \right]
    \\
    &= \mathbb{V}_\pi [G] 
    \\
    &= \sigma^2(\pi).
\end{align}

\end{proof}

\begin{thm}
Under \thref{ass:supp}, 
\begin{align}
    \mathbb{E}_\beta\left[\rho G^2 \right] = \mathbb{E}_\beta\left[\sum_{i=0}^{T} \sum_{j=0}^T \rho\left(0,\texttt{max}(i,j)\right) \gamma^{i+j} R_{(i)} R_{(j)}\right].
\end{align}
\end{thm}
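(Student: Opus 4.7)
The plan is to expand $G^2$ into a double sum of reward products and then apply, term by term, the standard per-decision argument that trailing importance ratios average to one.

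First I would write $G^2 = \bigl(\sum_{i=0}^T \gamma^i R_{(i)}\bigr)^2 = \sum_{i=0}^T \sum_{j=0}^T \gamma^{i+j} R_{(i)} R_{(j)}$, so that by linearity
\begin{align}
\mathbb{E}_\beta[\rho G^2] = \sum_{i=0}^T \sum_{j=0}^T \gamma^{i+j}\, \mathbb{E}_\beta\bigl[\rho\, R_{(i)} R_{(j)}\bigr].
\end{align}
It then suffices to show, for every fixed pair $(i,j)$, that $\mathbb{E}_\beta[\rho\, R_{(i)} R_{(j)}] = \mathbb{E}_\beta[\rho(0,\max(i,j))\, R_{(i)} R_{(j)}]$, after which resumming over $(i,j)$ recovers the claimed identity.

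For a fixed pair $(i,j)$, let $k := \max(i,j)$ and factor the full-trajectory ratio as $\rho = \rho(0,k)\,\rho(k+1,T)$ (with the convention that $\rho(k+1,T) = 1$ when $k=T$). Since $R_{(i)} R_{(j)}$ is a deterministic function of the history $H_{(0):(k)}$, I would condition on this history and pull the known factors out:
\begin{align}
\mathbb{E}_\beta[\rho R_{(i)} R_{(j)}] = \mathbb{E}_\beta\Bigl[\rho(0,k)\, R_{(i)} R_{(j)}\, \mathbb{E}_\beta\bigl[\rho(k+1,T) \mid H_{(0):(k)}\bigr]\Bigr].
\end{align}
The remaining step is the key lemma: under Assumption~\ref{ass:supp}, $\mathbb{E}_\beta[\rho(k+1,T) \mid H_{(0):(k)}] = 1$ almost surely. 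This is the familiar PDIS identity and follows by iterated conditioning, unrolling one timestep at a time: for each $t > k$, $\mathbb{E}_\beta[\pi(A_{(t)}|S_{(t)})/\beta(A_{(t)}|S_{(t)}) \mid S_{(t)}] = \sum_a \beta(a|S_{(t)}) \cdot \pi(a|S_{(t)})/\beta(a|S_{(t)}) = \sum_a \pi(a|S_{(t)}) = 1$, where the support assumption ensures the ratio is well-defined whenever $\pi(a|S_{(t)}) > 0$. The MDP Markov structure then lets this collapse all the way down to $1$.

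The main obstacle is essentially bookkeeping rather than any deep idea: carefully handling the edge case $k = T$ (where $\rho(k+1,T)$ is an empty product), and verifying that the nested conditional expectation really does collapse the post-$k$ ratios one factor at a time using the tower property. Once the per-pair identity is established, summing over $(i,j)$ and using linearity immediately yields the theorem.
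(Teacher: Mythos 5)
Your proposal is correct, and it reaches the same key insight as the paper --- that the importance ratios beyond timestep $\max(i,j)$ can be dropped for the term $\gamma^{i+j}R_{(i)}R_{(j)}$ --- but by a different mechanism. The paper's proof splits $G^2$ into diagonal ($R_{(i)}^2$) and off-diagonal ($R_{(i)}R_{(j)}$, $j>i$) parts, then for each term performs an explicit double change of measure: it rewrites $\mathbb{E}_\beta[\rho(0,T)\,\cdot\,]$ as a sum over trajectories weighted by $\Pr(h\mid\pi)$, marginalizes out the suffix of the trajectory beyond $\max(i,j)$ using $\sum_{h'}\Pr(h'\mid h,\pi)=1$, and then converts the remaining prefix back to the $\beta$ measure via $\rho(0,\max(i,j))$. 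You instead stay in the $\beta$ measure throughout, treat the full symmetric double sum at once, and establish the per-pair identity via the factorization $\rho=\rho(0,k)\rho(k{+}1,T)$ together with the tower-property lemma $\mathbb{E}_\beta[\rho(k{+}1,T)\mid H_{(0):(k)}]=1$. These encode the same underlying fact (the trailing likelihood ratio is a mean-one martingale increment), but your route is arguably cleaner: it avoids the back-and-forth measure change and isolates the reusable lemma that also underlies ordinary PDIS. One small point of care: in your single-step computation $\sum_a \beta(a|s)\cdot\pi(a|s)/\beta(a|s)$, the sum ranges only over actions with $\beta(a|s)>0$, and it equals $1$ precisely because \thref{ass:supp} guarantees $\pi$ places no mass on actions outside $\beta$'s support --- this is the exact place the assumption is used, slightly more than just "the ratio is well-defined." With that clarification and the $k=T$ empty-product convention you already note, the argument is complete.
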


\begin{proof}

\begin{align}
    \mathbb{E}_\beta[\rho G^2] &= \mathbb{E}_\beta\left[\rho(0,T) \left(\sum_{i=0}^T \gamma^{i} R_{(i)} \right)^2 \right]
    \\
    &= \mathbb{E}_\beta\left[\rho(0,T) \left(\sum_{i=0}^T \gamma^{2i} R_{(i)}^2 + 2 \sum_{i=0}^{T} \sum_{j=i+1}^T \gamma^{i+j} R_{(i)} R_{(j)} \right) \right]
    \\
    &= \left( \sum_{i=0}^T \mathbb{E}_\beta\left[\rho(0,T)\gamma^{2i} R_i^2\right] \right)  + \left( 2 \sum_{i=0}^{T} \sum_{j=i+1}^T \mathbb{E}_\beta\left[\rho(0,T) \gamma^{i+j} R_{(i)} R_{(j)}  \right]\right)
    \\
    &=  \sum_{i=0}^T \mathbb{E}_\beta\left[\rho(0,T)\gamma^{2i} R_{(i)}^2\right]  + 2 \sum_{i=0}^{T} \sum_{j=i+1}^T \mathbb{E}_\beta\left[\rho(0,T) \gamma^{i+j} R_{(i)} R_{(j)}  \right]
    \\
    &= \left( \sum_{i=0}^T \sum_{h \in \mathcal H_{0:T}^\beta} \Pr(h|\beta)\rho(0,T)\gamma^{2i} R_{(i)}^2 \right) + \left( 2 \sum_{i=0}^{T} \sum_{j=i+1}^T \sum_{h \in \mathcal H_{0:T}^\beta} \Pr(h|\beta)\rho(0,T) \gamma^{i+j} R_{(i)} R_{(j)} \right)
    \\
    &\overset{(a)}{=}  \left(\sum_{i=0}^T \sum_{h \in \mathcal H_{0:T}^\pi} \Pr(h|\pi)\gamma^{2i} R_{(i)}^2 \right) + \left(2 \sum_{i=0}^{T} \sum_{j=i+1}^T \sum_{h \in \mathcal H_{0:T}^\pi} \Pr(h|\pi) \gamma^{i+j} R_{(i)} R_{(j)} \right)
    \\
    &= \left( \sum_{i=0}^T \sum_{h \in \mathcal H_{0:i}^\pi} \Pr(h|\pi)\gamma^{2i} R_{(i)}^2 \sum_{h' \in \mathcal H_{i+1:t}^\pi} \Pr(h'|h,\pi) \right) + \left(2 \sum_{i=0}^{T} \sum_{j=i+1}^T \sum_{h \in \mathcal H_{0:j}^\pi} \Pr(h|\pi) \gamma^{i+j} R_{(i)} R_{(j)} \sum_{h' \in \mathcal H_{j+1:t}^\pi} \Pr(h'|h,\pi) \right)
    \\
    &= \left( \sum_{i=0}^T \sum_{h \in \mathcal H_{0:i}^\pi} \Pr(h|\pi)\gamma^{2i} R_{(i)}^2  \right) + \left(2 \sum_{i=0}^{T} \sum_{j=i+1}^T \sum_{h \in \mathcal H_{0:j}^\pi} \Pr(h|\pi) \gamma^{i+j} R_{(i)} R_{(j)} \right)
    \\
    &\overset{(b)}{=} \left( \sum_{i=0}^T \sum_{h \in \mathcal H_{0:i}^\beta} \Pr(h|\beta) \rho(0, i)\gamma^{2i} R_{(i)}^2  \right) + \left(2 \sum_{i=0}^{T} \sum_{j=i+1}^T \sum_{h \in \mathcal H_{0:j}^\beta} \Pr(h|\beta)\rho(0,j) \gamma^{i+j} R_{(i)} R_{(j)}  \right)
    \\
    &= \left( \sum_{i=0}^T \mathbb{E}_\beta \left[ \rho(0, i)\gamma^{2i} R_{(i)}^2  \right] \right) + \left(2 \sum_{i=0}^{T} \sum_{j=i+1}^T \mathbb{E}_\beta \left[\rho(0,j) \gamma^{i+j} R_{(i)} R_{(j)}  \right]\right)
    \\
    &=\mathbb{E}_\beta \left[ \sum_{i=0}^T  \rho(0, i)\gamma^{2i} R_{(i)}^2  + 2 \sum_{i=0}^{T} \sum_{j=i+1}^T \rho(0,j) \gamma^{i+j} R_{(i)} R_{(j)}  \right]
    \\
    &= \mathbb{E}_\beta\left[\sum_{i=0}^{T} \sum_{j=0}^T \rho\left(0,\texttt{max}(i,j)\right) \gamma^{i+j} R_{(i)} R_{(j)}\right],
\end{align}
where steps (a) and (b) follow due to \thref{ass:supp}.
\end{proof}

\section{Proofs for HCOVE using Concentration Inequalities}

\begin{thm} Under \thref{ass:supp}, if $(\delta_1 + \delta_2 + \delta_3 + \delta_4) \leq \delta$, then for the confidence interval $\mathcal C \coloneqq [v^\texttt{lb}, v^\texttt{ub}]$,
\begin{align}
    \Pr \left(\sigma^2(\pi) \in \mathcal C\right) \geq 1 - \delta.
\end{align}
\end{thm}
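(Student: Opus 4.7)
The plan is a direct union-bound argument that leverages the reformulation of variance in equation (9), $\sigma^2(\pi) = \mathbb{E}_\beta[\rho G^2] - \mathbb{E}_\beta[\rho G]^2$, together with the assumption that each of the four one-sided confidence bounds appearing in the definitions of $v^{\texttt{lb}}$ and $v^{\texttt{ub}}$ is individually valid at its stated confidence level.

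First I would define the four ``good'' events corresponding to each CI holding:
\begin{align}
    E_1 &\coloneqq \{\operatorname{CI}_-(\mathbb{E}_\beta[\rho G^2],\delta_1) \le \mathbb{E}_\beta[\rho G^2]\},\\
    E_2 &\coloneqq \{\operatorname{CI}^+(\mathbb{E}_\beta[\rho G]^2,\delta_2) \ge \mathbb{E}_\beta[\rho G]^2\},\\
    E_3 &\coloneqq \{\operatorname{CI}^+(\mathbb{E}_\beta[\rho G^2],\delta_3) \ge \mathbb{E}_\beta[\rho G^2]\},\\
    E_4 &\coloneqq \{\operatorname{CI}_-(\mathbb{E}_\beta[\rho G]^2,\delta_4) \le \mathbb{E}_\beta[\rho G]^2\}.
\end{align}
By the very definition of $\operatorname{CI}^+$ and $\operatorname{CI}_-$ recalled in the paper, $\Pr(E_i^c) \le \delta_i$, so a union bound gives $\Pr(E_1 \cap E_2 \cap E_3 \cap E_4) \ge 1 - (\delta_1+\delta_2+\delta_3+\delta_4) \ge 1-\delta$.

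Next I would show that on the intersection of these four events, both $v^{\texttt{lb}} \le \sigma^2(\pi)$ and $v^{\texttt{ub}} \ge \sigma^2(\pi)$. For the lower bound, on $E_1 \cap E_2$ we have
\begin{align}
    v^{\texttt{lb}} = \operatorname{CI}_-(\mathbb{E}_\beta[\rho G^2],\delta_1) - \operatorname{CI}^+(\mathbb{E}_\beta[\rho G]^2,\delta_2) \le \mathbb{E}_\beta[\rho G^2] - \mathbb{E}_\beta[\rho G]^2 = \sigma^2(\pi),
\end{align}
and symmetrically on $E_3 \cap E_4$, $v^{\texttt{ub}} \ge \mathbb{E}_\beta[\rho G^2] - \mathbb{E}_\beta[\rho G]^2 = \sigma^2(\pi)$. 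Combining, on $E_1 \cap E_2 \cap E_3 \cap E_4$ we have $\sigma^2(\pi) \in [v^{\texttt{lb}}, v^{\texttt{ub}}] = \mathcal{C}$, which yields $\Pr(\sigma^2(\pi) \in \mathcal{C}) \ge 1-\delta$.

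The one step that deserves care, and which I expect to be the main obstacle, is justifying that the propagated bounds $\operatorname{CI}^+(\mathbb{E}_\beta[\rho G]^2,\delta_2)$ and $\operatorname{CI}_-(\mathbb{E}_\beta[\rho G]^2,\delta_4)$ really are valid one-sided CIs at the claimed levels, since these are obtained by interval propagation of bounds on $\mathbb{E}_\beta[\rho G]$ through the quadratic map $y \mapsto y^2$. Here I would argue that whenever the two-sided interval for $\mathbb{E}_\beta[\rho G]$ contains the true mean, monotonicity of $y \mapsto y^2$ on each side of zero implies $\mathbb{E}_\beta[\rho G]^2$ lies between the maximum and (signed) minimum of the squared endpoints, exactly as described in the paragraph containing Figure \ref{fig:boundprop}. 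Thus the failure probability of the propagated bound is no larger than the failure probability of the underlying one-sided bound on $\mathbb{E}_\beta[\rho G]$, and the argument above goes through; the final statement then follows immediately from the union bound.
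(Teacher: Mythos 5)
Your proposal is correct and follows essentially the same route as the paper's proof: a union bound over the events that the component one-sided confidence bounds hold, combined with the monotone decomposition $\sigma^2(\pi) = \mathbb{E}_\beta[\rho G^2] - \mathbb{E}_\beta[\rho G]^2$. The only cosmetic difference is that you take a single union bound over all four events while the paper bounds the lower- and upper-endpoint failure probabilities separately (each via a two-event union bound) and then combines them; your explicit check that the interval-propagated bounds on $\mathbb{E}_\beta[\rho G]^2$ are valid one-sided CIs is a point the paper's proof leaves implicit.
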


\begin{proof}
For brevity, let $X \coloneqq \mathbb{E}_\beta[\rho G^2], \text{and}\,\, Y \coloneqq \mathbb{E}_\beta[\rho G]^2$.

\begin{align}
    \Pr(v^\texttt{lb} \leq \sigma^2(\pi)) 
    &= \Pr \left(\operatorname{CI}_-(X, \delta_1) - \operatorname{CI}^+(Y^2, \delta_2) \leq X - Y \right)
    \\
    % &= \Pr\left((\operatorname{CI}_-(X, \delta_1) - X) + (Y - \operatorname{CI}^+(Y^2, \delta_2)) \leq 0\right)
    % \\
    &\geq \Pr\left((\operatorname{CI}_-(X, \delta_1) \leq X) \cap (Y \leq \operatorname{CI}^+(Y^2, \delta_2)  ) \right)
    \\
    &= 1 -  \Pr\left(((\operatorname{CI}_-(X, \delta_1)  \leq X) \cap (Y  \leq  \operatorname{CI}^+(Y^2, \delta_2) ))^c\right)
    \\
    &= 1 -  \Pr\left((\operatorname{CI}_-(X, \delta_1)  \leq X)^c \cup (Y  \leq  \operatorname{CI}^+(Y^2, \delta_2) )^c \right)
    \\
    &= 1 -  \Pr\left((\operatorname{CI}_-(X, \delta_1)  > X) \cup (Y  >  \operatorname{CI}^+(Y^2, \delta_2) ) \right)
    \\
    &= 1 - \left(\delta_1 + \delta_2 \right),
\end{align}
where superscript of $c$ represents complement.
Similarly, it can be shown that $\Pr(v^\texttt{ub}\geq \sigma^2(\pi)) \geq 1 - (\delta_3 + \delta_4)$.
Therefore, the maximum probability of failure, i.e., either $v^\texttt{ub}< \sigma^2(\pi)$ or $v^\texttt{lb} > \sigma^2(\pi)$, is less than $(\delta_1 + \delta_2 + \delta_3 + \delta_4)$, which is not greater than $\delta$.
\end{proof}

\begin{thm}
    Let $X$ be either $G$ or $G^2$, then for any $\delta \in [0, 0.5]$ and a fixed constant $\xi$,
    \begin{align}
         \operatorname{CI}^+_-(\mathbb{E}_\beta[\rho X], \delta) = \operatorname{CI}^+_-(\mathbb{E}_\beta[\rho (X - \xi)], \delta) + \xi. 
    \end{align}
\end{thm}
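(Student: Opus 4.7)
The plan is to reduce the claim to the elementary identity $\mathbb{E}_\beta[\rho(X-\xi)] = \mathbb{E}_\beta[\rho X] - \xi$ and then to observe that the event ``interval covers the mean'' is preserved under a deterministic translation by the known constant $\xi$.

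First, I would establish the key equality. Under \thref{ass:supp}, importance weights satisfy $\mathbb{E}_\beta[\rho] = 1$ (this is the standard unbiasedness of the likelihood ratio, which follows from summing $\beta(a|s)\pi(a|s)/\beta(a|s)$ over the support of $\beta$ and using that the support of $\pi$ is contained in that of $\beta$). Since $\xi$ is a constant independent of the data, linearity of expectation yields
\begin{align}
    \mathbb{E}_\beta[\rho(X-\xi)] \;=\; \mathbb{E}_\beta[\rho X] \;-\; \xi\,\mathbb{E}_\beta[\rho] \;=\; \mathbb{E}_\beta[\rho X] - \xi.
\end{align}

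Next, I would convert this mean-shift into a CI-shift. Let $L \coloneqq \operatorname{CI}_-(\mathbb{E}_\beta[\rho(X-\xi)], \delta)$ be any valid lower confidence bound, i.e., $\Pr(L \leq \mathbb{E}_\beta[\rho(X-\xi)]) \geq 1-\delta$. Substituting the identity above, this probability equals $\Pr(L + \xi \leq \mathbb{E}_\beta[\rho X])$, so $L+\xi$ is a valid $(1-\delta)$-lower bound for $\mathbb{E}_\beta[\rho X]$, hence $\operatorname{CI}_-(\mathbb{E}_\beta[\rho X], \delta) = \operatorname{CI}_-(\mathbb{E}_\beta[\rho(X-\xi)], \delta) + \xi$. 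The identical argument, with inequalities flipped, handles the upper bound; combining the two events via a union bound (or simply applying each direction to the respective one-sided bound built into $\operatorname{CI}^+_-$) yields the two-sided statement.

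There is essentially no obstacle here: the proof is a one-line consequence of $\mathbb{E}_\beta[\rho]=1$ together with the translation-equivariance of coverage events. The only subtlety worth flagging in the writeup is conceptual rather than technical---the statement's usefulness comes not from any new bound, but from the fact that the random variable $\rho(X-\xi)$ can have dramatically different range and tail behavior than $\rho X$ (e.g., bounded above by $0$ when $\xi = \max X$), so a plug-in use of Hoeffding's or an empirical Bernstein inequality on the shifted variable can be far tighter. I would add a brief remark to that effect to motivate why the trivial identity is worth stating as a theorem.
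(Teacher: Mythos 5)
Your proposal is correct and follows essentially the same route as the paper: both proofs reduce to the identity $\mathbb{E}_\beta[\rho(X-\xi)]=\mathbb{E}_\beta[\rho X]-\xi$ (via $\mathbb{E}_\beta[\rho]=1$ under \thref{ass:supp}) and then translate the confidence interval by the constant $\xi$. If anything, your coverage-event argument makes explicit the step the paper treats as a purely notational manipulation of the $\operatorname{CI}$ operator, so no gap remains.
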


\begin{proof}

\begin{align}
    \operatorname{CI}^+_-(\mathbb{E}_\beta[\rho X], \delta) &= \operatorname{CI}^+_-(\mathbb{E}_\beta[\rho X - \rho \xi + \rho \xi], \delta)
    \\
    %  &= \operatorname{CI}^+(\mathbb{E}_\beta[\rho (X - \xi)] + \mathbb{E}_\beta[\rho \xi], \delta)
    %  \\
     &\overset{(a)}{=} \operatorname{CI}^+_-(\mathbb{E}_\beta[\rho (X - \xi)] + \mathbb{E}_\beta[\rho] \xi, \delta)
     \\
     &\overset{(b)}{=} \operatorname{CI}^+_-(\mathbb{E}_\beta[\rho (X - \xi)] + \xi, \delta)
     \\
     &\overset{(c)}{=} \operatorname{CI}^+_-(\mathbb{E}_\beta[\rho (X - \xi)], \delta) + \xi,
\end{align}
where (a) and (c) follow because $\xi$ is a fixed constant, and $(b)$ follows because $\mathbb{E}_\beta[\rho] = 1$.
\end{proof}

\begin{thm} Under \thref{ass:supp}, for any $\delta \in [0. 0.5]$, let $\xi_R \coloneqq \texttt{max}(R_\texttt{min}^2, R_\texttt{max}^2)$ and $\xi_G \coloneqq \texttt{max}(G_\texttt{min}^2, G_\texttt{max}^2)$ then
\begin{align}
    X \coloneqq& \sum_{i=0}^{T} \sum_{j=0}^T \rho\left(0,\texttt{max}(i,j)\right) \gamma^{i+j} \left( R_{(i)} R_{(j)} - \xi_R \right),
    \\
    Y \coloneqq& \sum_{i=0}^T  \rho(0, i)\gamma^i \left( R_{(i)} - R_\texttt{max}\right),
\end{align}
then $\Pr(X \leq 0) = \Pr(Y \leq 0) =1$, and
\begin{align}
    \operatorname{CI}^+\left(\mathbb{E}_\beta[\rho G^2], \delta\right) &= \operatorname{CI}^+\left(\mathbb{E}_\beta[X], \delta\right) + \xi_G,
    \\
    \operatorname{CI}^+\left(\mathbb{E}_\beta[\rho G], \delta\right) &= \operatorname{CI}^+\left(\mathbb{E}_\beta[Y], \delta\right) + G_\texttt{max}.
\end{align}
\end{thm}

\begin{proof}
Let $c \coloneqq (1 -\gamma^T)/(1- \gamma)$.
Then $\mathbb{E}_\beta[x]$ is.
\begin{align}
    \mathbb{E}_\beta[X] &= \mathbb{E}_\beta \left[ \sum_{i=0}^{T} \sum_{j=0}^T \rho\left(0,\texttt{max}(i,j)\right) \gamma^{i+j} \left( R_{(i)} R_{(j)} \right) \right] - \mathbb{E}_\beta \left[ \sum_{i=0}^{T} \sum_{j=0}^T \rho\left(0,\texttt{max}(i,j)\right) \gamma^{i+j} \left(\xi_R \right) \right] 
    \\
     &\overset{(a)}{=} \mathbb{E}_\beta \left[\rho G^2 \right] - \xi_R  \sum_{i=0}^{T} \sum_{j=0}^T \gamma^{i+j} \mathbb{E}_\beta \left[\rho\left(0,\texttt{max}(i,j)\right)   \right]
     \\
     &\overset{(b)}{=} \mathbb{E}_\beta \left[\rho G^2 \right] - \xi_R  \sum_{i=0}^{T} \gamma^i \sum_{j=0}^T \gamma^{j}  
     \\
     &= \mathbb{E}_\beta \left[\rho G^2 \right] - c^2 \xi_R, \label{eqn:qw12}  
\end{align}
where (a) follows from \thref{thm:cdis} and (b) follows because $\mathbb{E}_\beta \left[\rho\left(0,\texttt{max}(i,j)\right)\right] = 1$.
Notice that as $c^2\xi_R$ is equivalent to $\xi_G$, therefore substituting it into \eqref{eqn:qw12} gives,
\begin{align}
    \operatorname{CI}^+\left(\mathbb{E}_\beta[X], \delta\right) + \xi_G &= \operatorname{CI}^+\left(\mathbb{E}_\beta[\rho G^2] - \xi_G, \delta\right) + \xi_G,
    \\
    &= \operatorname{CI}^+\left(\mathbb{E}_\beta[\rho G^2], \delta\right).
\end{align}
Similarly, 
\begin{align}
     \mathbb{E}_\beta[Y] &= \mathbb{E}_\beta\left[\sum_{i=0}^T  \rho(0, i)\gamma^i  R_{(i)}  \right] - \mathbb{E}_\beta\left[\sum_{i=0}^T  \rho(0, i)\gamma^i  R_\texttt{max} \right]
     \\
     &= \mathbb{E}_\beta\left[\rho G \right] - R_\texttt{max} \sum_{i=0}^T \gamma^i  \mathbb{E}_\beta\left[\rho(0, i) \right]
     \\
     &\overset{(c)}{=} \mathbb{E}_\beta\left[\rho G \right] - cR_\texttt{max},
\end{align}
where (c) follows because $\mathbb{E}_\beta\left[\rho(0, i) \right] = 1$. Further, as $cR_\texttt{max} = G_\texttt{max}$, 
\begin{align}
     \operatorname{CI}^+\left(\mathbb{E}_\beta[Y], \delta\right) + G_\texttt{max} &= \operatorname{CI}^+\left(\mathbb{E}_\beta\left[\rho G \right] - G_\texttt{max}, \delta\right) + G_\texttt{max}.
     \\
     &=  \operatorname{CI}^+\left(\mathbb{E}_\beta\left[\rho G \right], \delta\right).
\end{align}
\end{proof}

\section{E. Proofs for HCOVE using Statistical Bootstrapping}
\begin{thm} Under \thref{ass:supp,ass:bounded}, the confidence interval $\hat {\mathcal C}$ has a finite sample error of $O(n^{-1/2})$. That is,
\begin{align}
    \Pr \left(\sigma^2(\pi) \in \hat {\mathcal C}\right) \geq 1 - \delta - O\left(n^{-1/2}\right).
\end{align}
\end{thm}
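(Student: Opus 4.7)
The plan is to reduce the claim to the classical result that percentile bootstrap intervals for a smooth function of sample means have coverage error $O(n^{-1/2})$ whenever the relevant third moments are finite (Singh 1981; Bickel and Freedman 1981). First I would rewrite the estimator in the form $\hat\sigma_n^2 = f(\bar W_n, \bar U_{n/2}, \bar V_{n/2})$, where $W_i := \rho_i G_i^2$ and $U_i, V_i$ are the terms $\rho_i G_i$ from the two halves $\mathcal D_1$ and $\mathcal D_2$, and $f(a,b,c) := a - bc$. Because $f$ is smooth (in fact polynomial) and the three averages are based on i.i.d.\ samples, $\hat\sigma_n^2$ lies squarely in the Hall-Singh ``smooth function of means'' framework.

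Next, I would apply a multivariate Berry-Esseen bound to the vector $(\bar W_n, \bar U_{n/2}, \bar V_{n/2})$ and combine it with a first-order delta expansion of $f$ around $(\mathbb E_\beta[\rho G^2], \mathbb E_\beta[\rho G], \mathbb E_\beta[\rho G])$ to establish
\begin{align}
\sup_{x} \left| \Pr\!\left(\sqrt{n}(\hat\sigma_n^2 - \sigma^2(\pi)) \le x\right) - \Phi_\tau(x) \right| = O(n^{-1/2}),
\end{align}
where $\Phi_\tau$ is the CDF of a centered normal with variance $\tau^2$ determined by $\nabla f$ at the population means and by $\mathrm{Cov}(W, U, V)$. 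The remainder in the delta expansion is absorbed into $O(n^{-1/2})$ under the moment hypothesis. I would then invoke Singh's theorem to get the analogous rate for the bootstrap law,
\begin{align}
\sup_{x} \left| \Pr\!\left(\sqrt{n}(\hat\sigma_{n}^{2\star} - \hat\sigma_n^2) \le x \,\big|\, \mathcal D\right) - \Phi_\tau(x) \right| = O_p(n^{-1/2}).
\end{align}
A triangle inequality gives $\sup_x |F_n(x) - F_n^\star(x)| = O_p(n^{-1/2})$, and translating this Kolmogorov-metric bound into a statement about the two-sided percentile interval $\hat{\mathcal C}$ yields the desired coverage error.

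The main obstacle will be the mismatch between Assumption~A2, which bounds the third moment of the aggregate $\hat\sigma_n^2$, and the conditions normally required by Singh/Bickel-Freedman, which are stated on the underlying summands $W_i, U_i, V_i$. I would handle this by showing that, because $f$ is a polynomial of degree two in three means of independent terms and because $G \in [G_\texttt{min}, G_\texttt{max}]$, a bound on $\mathbb E_\beta[(\hat\sigma_n^2 - \mathbb E_\beta[\hat\sigma_n^2])^3]$ together with Assumption~1 transfers (via the Marcinkiewicz-Zygmund-type expansion of the product term $\bar U \bar V$) to a uniform bound on $\mathbb E_\beta[|W|^3]$, $\mathbb E_\beta[|U|^3]$, $\mathbb E_\beta[|V|^3]$, with explicit constants. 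A small amount of book-keeping is needed to verify that the data-split does not degrade the rate: since $|\mathcal D_1| = |\mathcal D_2| = n/2$ and the two halves are independent, the joint Berry-Esseen rate remains $O(n^{-1/2})$, and the same is true for the bootstrap reweighting, completing the argument.
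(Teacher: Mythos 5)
Your proposal is essentially correct, but it takes a genuinely different route from the paper. The paper's proof (Appendix~E) follows the functional-delta-method tradition: it views $\hat\sigma_n^2$ as a functional $F(d^{\mathcal D})$ of the empirical distribution over trajectories, verifies that $F$ is Hadamard differentiable (the Gateaux-type difference quotient is linear in the perturbation $P_n$ because the functional is a quadratic polynomial in the empirical measure), and then invokes the general finite-sample coverage result for the bootstrap from Efron and Tibshirani, mirroring the argument Kostrikov and Nachum used for the off-policy \emph{mean}. You instead place $\hat\sigma_n^2 = \bar W_n - \bar U_{n/2}\bar V_{n/2}$ in the smooth-function-of-means framework and run a multivariate Berry--Esseen bound, a delta expansion, and Singh's theorem for the bootstrap law, closing with a Kolmogorov-metric triangle inequality. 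Both are standard reductions to known bootstrap theory and both deliver the claimed $O(n^{-1/2})$; your route is more explicit about where the rate comes from (the Berry--Esseen constant and the variance $\tau^2$ from $\nabla f$ and $\mathrm{Cov}(W,U,V)$), while the paper's route is shorter and generalizes more easily to other functionals of the trajectory distribution. One point to watch: your plan to \emph{derive} third-moment bounds on the summands $W,U,V$ from Assumption~A2 (which bounds the third central moment of the aggregate $\hat\sigma_n^2$) goes in the awkward direction, since the usual implication runs from summands to aggregate; it is defensible here because the third central moment of a sample mean of i.i.d.\ terms scales as $n^{-2}$ times that of a single term (so a uniform-in-$n$ bound is essentially equivalent to a bound at $n=2$), and because the paper itself justifies A2 only via the bounded-$\rho$ case in which all moments of $W,U,V$ are trivially finite---but you should state that equivalence explicitly rather than appeal to a Marcinkiewicz--Zygmund expansion, and you should note (as the paper implicitly does) that the data split into independent halves of size $n/2$ changes only constants, not the rate.
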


\begin{proof}
    This proof directly leverages the finite-sample coverage error result by \citet{efron1994introduction}.
    A similar technique has been used by \citet{kostrikov2020statistical} for establishing the finite-sample coverage error of the CIs for the the mean return $\mu(\pi)$ using off-policy data.
    Our result is inspired by theirs and establishes  finite-sample coverage error of the CIs for the variance of returns, $\sigma^2(\pi)$.
    Before proceeding, we first define some additional notation and then review \textit{Hadamard differentiability} \cite{wasserman2006all}, which is a key property for establishing the validity of bootstrap.

    For brevity, let $\mathcal H \coloneqq \mathcal H_{(0):(T)}$.
    For a trajectory $x \in \mathcal H$, let $\rho(x)$ be the importance ratio of the entire trajectory, $g_1(x) \coloneqq G$ be the return, and $g_2(x) \coloneqq G^2$ be the return squared.
    Considering a finite set of possible trajectories $\mathcal H$, for a given set of trajectories $\mathcal D$, let the empirical distribution over the trajectories be,
    \begin{align}
        d^{\mathcal D}(x) \coloneqq \frac{1}{|\mathcal D|} \sum_{h \in \mathcal D} \delta_{\{h=x\}}.
    \end{align}

    \noindent
    \textbf{Hadamard Differentiability: } Suppose $F$ is a functional mapping distributions $\mathcal P$  over trajectories to $\mathbb R$.
    Denote $\mathcal P_L$ as the linear space generated by $\mathcal P$.
    The functional $F$ is said to be Hadamard differentiable at $d^{\mathcal D} \in \mathcal P$ if there exists a linear functional $L_{\mathcal D}$ on $\mathcal P_L$ such that for any $\epsilon_n \rightarrow 0$ and $\{ P,P_1,P_2,P_3,...\} \subset \mathcal P_L$ such that $\lVert P_n - P\rVert_\infty\rightarrow 0$ and $d^{\mathcal D} + \epsilon_n P_n \in \mathcal P$,
    \begin{align}
        \lim_{n\rightarrow \infty} \left\lvert  \frac{F(d^{\mathcal D} + \epsilon_n P_n) - F(d^{\mathcal D})}{\epsilon_n} - L_{\mathcal D}(P) \right\rvert = 0.
    \end{align}
    
    In the following, we directly leverage the finite sample coverage error rate established for bootstrap \cite{efron1994introduction} by considering the functional $F$ to be our estimator $\hat \sigma_n^2$ and showing that it is Hadamard differentiable for all $d^{\mathcal D}$.
    To make the dependence of $d^{\mathcal D}$ explicit, we write $\hat \sigma_n^2(d^{\mathcal D})$ instead of $\hat \sigma_n^2$.
    Now using \eqref{eqn:proposed},
    \begin{align}
        \hat \sigma_n^2(d^{\mathcal D} + \epsilon_n P_n) &= \sum_{x \in \mathcal H}(d^{\mathcal D} + \epsilon_n P_n)(x)\rho(x) g_2(x) - \left(\sum_{x \in \mathcal H}(d^{\mathcal D_1} + \epsilon_n P_n)(x) \rho(x) g_1(x) \right)\left(\sum_{y \in \mathcal H}(d^{\mathcal D_2} + \epsilon_n P_n)(y) \rho(y) g_1(y) \right) , \label{eqn:bottperb}
        \\
        \hat \sigma_n^2(d^{\mathcal D}) &= \sum_{x \in \mathcal H}d^{\mathcal D}(x)\rho(x) g_2(x) - \left(\sum_{x \in \mathcal H}d^{\mathcal D_1}(x) \rho(x) g_1(x) \right)\left(\sum_{y \in \mathcal H}d^{\mathcal D_2}(y) \rho(y) g_1(y) \right) . \label{eqn:bootnoperb}
    \end{align}
    Using \eqref{eqn:bottperb} and \eqref{eqn:bootnoperb}, as $n \rightarrow \infty$ then $\epsilon_n \rightarrow 0$,
    \begin{align}
        \lim_{\epsilon_n \rightarrow 0}\frac{\hat \sigma_n^2(d^{\mathcal D} + \epsilon_n P_n)  - \hat \sigma_n^2(d^{\mathcal D}) }{\epsilon_n} =& \sum_{x \in \mathcal H} P_n(x)\rho(x) g_2(x)
        \\ &- \sum_{x \in \mathcal H}\sum_{y \in \mathcal H} \rho(x) g_1(x) \rho(y) g_1(y) \left[ d^{\mathcal D_1}(x)P_n(y) + d^{\mathcal D_2}(y)P_n(x) + \underbrace{\epsilon_n P_n(x)P_n(y)}_{=0 \text{ as  } \epsilon_n \rightarrow 0}   \right]. \label{eqn:bootlast}
    \end{align}
    It can be seen that \eqref{eqn:bootlast} is linear in $P_n$, so there exists a linear functional $L_{\mathcal D}$ on $P_L$ such that $\hat \sigma_n^2(d^{\mathcal D})$ is Hadamard differnetiable. 
\end{proof}

\section{F. Algorithms}

In this section we present the algorithms to obtain high-confidence bounds for $\sigma^2(\pi)$.
Algorithms \ref{alg:hcoveciu}--\ref{alg:hcovecil} provide lower and upper bounds using concentration inequalities.
Algorithm \ref{alg:hcoveboot} provides lower and upper bounds using statistical bootstrapping.
In the following, we briefly review the concentration inequality established by \citet{thomas2015high}, which we also use in Algorithms \ref{alg:hcoveciu}--\ref{alg:hcovecil}.

\begin{thm}[\cite{thomas2015high}]
Let $\{X_i\}_{i=1}^n$  be $n$ independent real-valued  bounded  random  variables  such  that  for  each $i \in \{1,...,n\}$, we  have $\Pr(0 \leq X_i)  =  1$ ,$\mathbb{E}[X_i] \leq \mu$,  and the  fixed  real-valued  threshold $c_i > 0$.  Let $\delta >0$ and $Y_i \coloneqq  \texttt{min}(X_i,c_i)$. 
\begin{align}
  \operatorname{CI}^-\left(\{X_i\}_{i=1}^n, \delta \right) \coloneqq \left( \sum_{i=1}^n \frac{1}{|c_i|} \right)^{-1}\!\!\! \sum_{i=1}^n \frac{Y_i}{|c_i|} - \left( \sum_{i=1}^n \frac{1}{|c_i|} \right)^{-1} \!\! \frac{7n \ln(2/\delta)}{3(n-1)} - \left( \sum_{i=1}^n \frac{1}{|c_i|} \right)^{-1} \!\!\!\sqrt{ \frac{\ln(2/\delta)}{n-1} \sum_{i,j=1}^n \left(\frac{Y_i}{|c_i|} - \frac{Y_j}{|c_j|} \right)^2   }. 
  \label{eqn:cilower}
\end{align}
Then with probability at least $1 - \delta$, we have $\mu \geq  \operatorname{CI}^-\left(\{X_i\}_{i=1}^n, \delta \right)$.
\thlabel{thm:thomas}
\end{thm}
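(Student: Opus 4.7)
}

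The plan is to reduce to an empirical Bernstein inequality applied to suitably normalized, truncated variables. First I would rescale by defining $Z_i \coloneqq Y_i/|c_i|$. Because $0 \leq X_i$ almost surely and $Y_i = \texttt{min}(X_i, c_i) \in [0, c_i]$, each $Z_i$ lies in $[0,1]$ almost surely. Moreover, since truncation to a nonnegative threshold can only decrease a nonnegative random variable's expectation, $\mathbb{E}[Y_i] \leq \mathbb{E}[X_i] \leq \mu$, so $\mathbb{E}[Z_i] \leq \mu/|c_i|$. The $Z_i$ are independent (as measurable functions of independent $X_i$) and uniformly bounded in $[0,1]$, which is precisely the setting required for an empirical Bernstein bound.

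Next I would invoke the empirical Bernstein inequality of Maurer and Pontil (adapted to the independent-but-not-identically-distributed case, which is immediate from their proof via the standard Hoeffding/Efron--Stein machinery on bounded independent random variables). This yields, with probability at least $1-\delta$,
\begin{align}
\frac{1}{n}\sum_{i=1}^n \mathbb{E}[Z_i] \;\geq\; \frac{1}{n}\sum_{i=1}^n Z_i \;-\; \sqrt{\frac{2 V_n \ln(2/\delta)}{n}} \;-\; \frac{7 \ln(2/\delta)}{3(n-1)},
\end{align}
where $V_n \coloneqq \frac{1}{n-1}\sum_{i}(Z_i - \bar Z)^2$ is the Bessel-corrected sample variance of the $Z_i$.

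Then I would convert the sample-variance term into the pairwise form appearing in the statement, using the identity $\sum_{i=1}^n (Z_i - \bar Z)^2 = \frac{1}{2n}\sum_{i,j=1}^n (Z_i - Z_j)^2$. This gives $2nV_n = \frac{1}{n-1}\sum_{i,j}(Z_i - Z_j)^2$, and hence $\sqrt{2 V_n \ln(2/\delta)/n}$ equals $\frac{1}{n}\sqrt{\frac{\ln(2/\delta)}{n-1}\sum_{i,j}(Y_i/|c_i| - Y_j/|c_j|)^2}$. Combining with the left-hand side bound $\frac{1}{n}\sum_i \mathbb{E}[Z_i] \leq \frac{\mu}{n}\sum_i 1/|c_i|$ and multiplying through by $n\bigl(\sum_i 1/|c_i|\bigr)^{-1}$ produces exactly the three-term lower bound on $\mu$ stated in \eqref{eqn:cilower}.

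The main obstacle will be justifying the application of the empirical Bernstein bound in the non-iid setting: standard references state it for iid samples, and one must verify that the self-bounding and log-moment-generating-function arguments transfer to independent $Z_i \in [0,1]$ with possibly distinct laws. Once that is in hand, the remaining steps are routine algebra (the variance identity and the rescaling by $\sum_i 1/|c_i|$); the only subtlety worth flagging is that the inequality $\mathbb{E}[Z_i] \leq \mu/|c_i|$ is one-sided, which is exactly what is needed for a lower confidence bound and is why this approach does not automatically yield a matching upper bound.
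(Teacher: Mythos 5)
The paper does not prove this theorem: it is imported verbatim from \citet{thomas2015high}, and the appendix explicitly refers the reader to that work for the proof. So there is no in-paper argument to compare against; what can be said is that your route is essentially the one taken in the cited source. Normalizing $Z_i \coloneqq Y_i/|c_i| \in [0,1]$, using the one-sided truncation inequality $\mathbb{E}[Y_i] \le \mathbb{E}[X_i] \le \mu$ (so $\mathbb{E}[Z_i] \le \mu/|c_i|$), applying an empirical Bernstein bound to the independent but non-identically distributed $Z_i$, and rescaling by $\bigl(\sum_i 1/|c_i|\bigr)^{-1}$ is exactly how Thomas et al.\ derive the bound, and your algebra checks out: the identity $\sum_{i,j}(Z_i-Z_j)^2 = 2n\sum_i (Z_i-\bar Z)^2$ converts $\sqrt{2V_n\ln(2/\delta)/n}$ into $\tfrac{1}{n}\sqrt{\tfrac{\ln(2/\delta)}{n-1}\sum_{i,j}(Z_i-Z_j)^2}$, and multiplying through by $n\bigl(\sum_i 1/|c_i|\bigr)^{-1}$ reproduces all three terms of \eqref{eqn:cilower}, including the $7n\ln(2/\delta)/(3(n-1))$ numerator. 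The one gap you flag---extending Maurer--Pontil beyond the iid case---is real but is precisely the content of the original proof: both the Bennett-type moment bound and the self-bounding-function concentration of the sample variance hold for independent $[0,1]$-valued variables with distinct laws, so the extension goes through and your sketch is sound.
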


Similarly, let $\{A_i\}_{i=1}^n$  be $n$ independent real-valued  bounded  random  variables where $\Pr(0 \geq A_i)  =  1$ and $\mathbb{E}[A_i] \geq \mu$,  then for a fixed  real-valued  threshold $c_i < 0$ and $B_i \coloneqq  \texttt{max}(A_i,c_i)$, the expected value $\mathbb{E}[B_i] \geq \mathbb{E}[A_i]$. Therefore, an upper bound on $\mathbb{E}[B_i]$ is also an upper bound on $\mathbb{E}[A_i]$. Consequently, to get an upper bound on $\mathbb{E}[B_i]$ we flip the bound in \thref{thm:thomas} (i.e., let $Y_i \coloneqq - B_i$ in \eqref{eqn:cilower} and then negate the resulting bound, since $\forall \nu, \, \Pr(\mathbb{E}[Y_i] \geq \nu) = \Pr(-\mathbb{E}[B_i] \geq \nu) =  \Pr(\mathbb{E}[B_i] \leq -\nu)$). 
%
% Similarly, by flipping the bound in \thref{thm:thomas}, when $\Pr(0 \geq X_i)  =  1$, $\mathbb{E}[X_i] \geq \mu$,  and the  fixed  real-valued  threshold $c_i < 0$, $Y_i \coloneqq  \texttt{max}(X_i,c_i)$, let

\begin{align}
  \operatorname{CI}^+\left(\{A_i\}_{i=1}^n, \delta \right) \coloneqq \left( \sum_{i=1}^n \frac{1}{|c_i|} \right)^{-1} \!\!\! \sum_{i=1}^n \frac{B_i}{|c_i|} + \left( \sum_{i=1}^n \frac{1}{|c_i|} \right)^{-1} \!\! \frac{7n \ln(2/\delta)}{3(n-1)} + \left( \sum_{i=1}^n \frac{1}{|c_i|} \right)^{-1} \!\!\!\sqrt{ \frac{\ln(2/\delta)}{n-1} \sum_{i,j=1}^n \left(\frac{B_i}{|c_i|} - \frac{B_j}{|c_j|} \right)^2   }. \label{eqn:ciupper}
\end{align}
Then with probability at least $1 - \delta$, we have $\mu \leq  \operatorname{CI}^+\left(\{A_i\}_{i=1}^n, \delta \right)$.
In \eqref{eqn:cilower}, $c_i$'s help in truncating the upper tail of the distribution, and in \eqref{eqn:ciupper}, $c_i$'s help in truncating the lower tail of the distribution.
Further, note the use of absolute values for $c_i$'s in our presentation of the bounds by \citet{thomas2015high}; while in \eqref{eqn:ciupper} this is redundant as $c_i > 0$, in \eqref{eqn:cilower} this is important to prevent the change in sign of the random variable when normalized using $c_i$'s as in this equation $c_i < 0$. 
%
% These $C_i$'s are then used to divide the random variables in order to normalize their ranges. 
%
%
For simplicity, \citet{thomas2015high} suggest setting a common $c$ for all $c_i$'s.
Further, since the value of $c$ should be chosen independent of that data being analyzed, they suggest partitioning the data into two sets $\mathcal D_\text{pre}$ and $\mathcal D_\text{post}$ in the ratio $1/20 :19/20$ and searching the value of $c$ that optimizes the bound on the data from $\mathcal D_\text{pre}$.
The value of this $c$ is then used to get the desired bounds using data from $\mathcal D_\text{post}$.
We refer the readers to the work by \citet{thomas2015high} for more details.

	\IncMargin{1em}
	\begin{algorithm2e}[t]
		\textbf{Input:} {Dataset $\mathcal D$}, Confidence level $1 - \delta$  
		\\
	    \vspace{5pt}
	    \nonl \textcolor[rgb]{0.5,0.5,0.5}{\# Upper Bound on $\mathbb{E}_\beta[\rho G^2]$}
	    \\
		$\xi_R = \texttt{max}(R_\texttt{min}^2, R_\texttt{max}^2)$ and $\xi_G = \texttt{max}(G_\texttt{min}^2, G_\texttt{max}^2)$
        \\
        $X_i = \sum_{j=0}^{T} \sum_{k=0}^T \rho_i\left(0,\texttt{max}(j,k)\right) \gamma^{j+k} \left( R_{i(j)} R_{i(k)} - \xi_R \right)$
        \\
        $X^\texttt{ub} = \operatorname{CI}^+\left(\{X_i\}_{i=1}^{|\mathcal D|}, \delta/2 \right) + \xi_G$
        \\
        
	    \vspace{5pt}
	    \nonl \textcolor[rgb]{0.5,0.5,0.5}{\# Upper and Lower Bounds on $\mathbb{E}_\beta[\rho G]$}
	    \\
        $Y^-_i = \sum_{j=0}^T  \rho_i(0, j)\gamma^j \left( R_{i(j)} - R_\texttt{min}\right)$,
        \\
        $Y^+_i = \sum_{j=0}^T  \rho_i(0, j)\gamma^j \left( R_{i(j)} - R_\texttt{max}\right)$,
        \\
        $Y^\texttt{lb} = \operatorname{CI}^-\left(\{Y^-_i\}_{i=1}^{|\mathcal D|}, \delta/4\right) + G_\texttt{min}$.
        \\
        $Y^\texttt{ub} = \operatorname{CI}^+\left(\{Y^+_i\}_{i=1}^{|\mathcal D|}, \delta/4\right) + G_\texttt{max}$.
        \\
        
	    \vspace{5pt}
	    \nonl \textcolor[rgb]{0.5,0.5,0.5}{\# Lower Bound on $\mathbb{E}_\beta[\rho G]^2$ using bound propagation}
	    \\
        \eIf{$Y^\texttt{lb} \leq 0 \leq Y^\texttt{ub} $}{$Z^\texttt{lb}  = 0$}{$Z^\texttt{lb} = \texttt{min}((Y^\texttt{lb})^2, (Y^\texttt{ub} )^2)$}
		\textbf{Return} $X^\texttt{ub} - Z^\texttt{lb} $
		\caption{HCOVE - upper bound (Concentration Inequality)}
	    \label{alg:hcoveciu}
	\end{algorithm2e}
	\DecMargin{1em}

	\IncMargin{1em}
	\begin{algorithm2e}[t]
		\textbf{Input:} {Dataset $\mathcal D$}, Confidence level $1 - \delta$  
		\\
	    \vspace{5pt}
	    \nonl \textcolor[rgb]{0.5,0.5,0.5}{\# Lower Bound on $\mathbb{E}_\beta[\rho G^2]$}
	    \\ 
	    \nonl \textcolor[rgb]{0.5,0.5,0.5}{(Control variate is not needed as $\rho G^2$ is always positive.)}
	    \\
		%
% 		$\xi_R = \texttt{max}(R_\texttt{min}^2, R_\texttt{max}^2)$ and $\xi_G = \texttt{max}(G_\texttt{min}^2, G_\texttt{max}^2)$
        % \\
        %
        $X_i = \sum_{j=0}^{T} \sum_{k=0}^T \rho_i\left(0,\texttt{max}(j,k)\right) \gamma^{j+k} \left( R_{i(j)} R_{i(k)} \right)$
        \\
        $X^\texttt{lb} = \operatorname{CI}^-\left(\{X_i\}_{i=1}^{|\mathcal D|}, \delta/2 \right) $
        \\
        
	    \vspace{5pt}
	    \nonl \textcolor[rgb]{0.5,0.5,0.5}{\# Upper and Lower Bounds on $\mathbb{E}_\beta[\rho G]$}
	    \\
        $Y^-_i = \sum_{j=0}^T  \rho_i(0, j)\gamma^j \left( R_{i(j)} - R_\texttt{min}\right)$,
        \\
        $Y^+_i = \sum_{j=0}^T  \rho_i(0, j)\gamma^j \left( R_{i(j)} - R_\texttt{max}\right)$,
        \\
        $Y^\texttt{lb} = \operatorname{CI}^-\left(\{Y^-_i\}_{i=1}^{|\mathcal D|}, \delta/4\right) + G_\texttt{min}$.
        \\
        $Y^\texttt{ub} = \operatorname{CI}^+\left(\{Y^+_i\}_{i=1}^{|\mathcal D|}, \delta/4\right) + G_\texttt{max}$.
        \\
        
	    \vspace{5pt}
	    \nonl \textcolor[rgb]{0.5,0.5,0.5}{\# Upper Bound on $\mathbb{E}_\beta[\rho G]^2$ using bound propagation}
	    \\
	    $Z^\texttt{ub} = \texttt{max}((Y^\texttt{lb})^2, (Y^\texttt{ub} )^2)$
		\\
		\textbf{Return} $X^\texttt{lb} - Z^\texttt{ub} $
		\caption{HCOVE - lower bound (Concentration Inequality)}
	    \label{alg:hcovecil}
	\end{algorithm2e}
	\DecMargin{1em}

	\IncMargin{1em}
	\begin{algorithm2e}[t]
		% \SetAlgoLined
		\textbf{Input:} {Dataset $\mathcal D$}, Confidence level $1 - \delta$ \\
		Compute variance estimate $\hat \sigma_n^2 =$ \textbf{Algorithm 1}($\mathcal D$)
		\\
		Compute $B$ bootstrapped datasets $\{\mathcal D_i^*\}_{i=1}^B$
		\\
		Compute bootstrapped variance estimates $\{\hat \sigma^{2*}_{n,i}\}_{i=1}^B$ using \textbf{Algorithm 1} for $\{\mathcal D_i^*\}_{i=1}^B$.
		\\
		Compute $\delta/2$ and $1 - \delta/2$ quantiles $z_{\delta/2}, z_{1-\delta/2}$ of $\{\hat \sigma^{2*}_{n,i} - \hat \sigma_n^2\}_{i=1}^B$
		\\
		\textbf{Return} $\hat {\mathcal C} \coloneqq [\hat \sigma_n^2 - z_{1-\delta/2}, \hat \sigma_n^2 - z_{\delta/2}]$
		\caption{HCOVE - upper and lower bounds (Bootstrap)}
	\label{alg:hcoveboot}
	\end{algorithm2e}
	\DecMargin{1em}

\section{G. Empirical Details}

In this section, we discuss domain details and how $\pi$ and $\beta$ were selected for both the domains.

\paragraph{Recommender System: }
Online recommendation systems are popular for  tutorials, movies, advertisements, etc.
In all these settings it may be beneficial to assess the variability in the customer's experience once the new system/policy is deployed.
To abstract such settings, we create a simulated domain where the interest of the user for a finite set of items is represented using the reward for the corresponding item.

We use an actor-critic algorithm \cite{SuttonBarto2} to find a near optimal policy policy $\pi$, which we use as the evaluation policy.
Let $\pi^\texttt{rand}$ be a random policy with uniform distribution over the actions (items).
Then for an $\alpha = 0.5$, we define the behavior policy $\beta(a|s) \coloneqq \alpha \pi(a|s) + (1-\alpha) \pi^\texttt{rand}(a|s)$ for all states and actions.
\paragraph{Gridworld: }
We also consider a standard $4\times4$ Gridworld with stochastic transitions.
There are eight discrete actions corresponding to up, down, left, right, and the four diagonal movements.
Behavior and the evaluation policy for this domain were obtained in a similar way as discussed for the recommender system domain.

\paragraph{Diabetes Treatment: } 
This domain is modeled using an open-source implementation \citep{simglucose} of the U.S. Food and Drug Administration (FDA) approved Type-1 Diabetes Mellitus simulator (T1DMS) \citep{man2014uva} for the treatment of Type-1 diabetes.
An episode corresponds to a day, and each step of an episode corresponds to a minute in an \textit{in-silico} patient's body and is governed by a continuous time non-linear ordinary differential equation (ODE) \citep{man2014uva}.

To control the insulin injection, which is required for regulating the blood glucose level, we use a parameterized policy based on the amount of insulin that a person with diabetes is instructed to inject prior to eating a meal \citep{bastani2014model}:
\begin{align}
    \text{injection} = \frac{\text{current blood glucose} - \text{target blood glucose}}{CF} + \frac{\text{meal size}}{CR},
\end{align}
where `current blood glucose' is the estimate of the person's current blood glucose level, `target blood glucose' is the desired blood glucose, `meal size' is the estimate of the size of the meal the patient is about to eat, and $CR \in [CR_\texttt{min}, CR_\texttt{max}]$ and $CF \in [CF_\texttt{min}, CF_\texttt{max}]$ are two real-valued parameters that must be tuned based on the body parameters to make the treatment effective.

The action distribution for the policy is parameterized using a normal distribution $\mathcal N(\mu, \sigma)$, whose mean $\mu$ is obtained using a sigmoid function (scaled for the desired range), and the standard deviation $\sigma$ is kept fixed.
We use an actor-critic algorithm \cite{SuttonBarto2} to find a near optimal policy $\pi$ having normal distribution $\mathcal N(\mu_\pi, \sigma)$ , which we use as the evaluation policy.
Let $\pi^\texttt{rand}$ be a random policy parameterized using a normal distribution $\mathcal N(\mu_\texttt{rand}, \sigma)$, where $\mu_\texttt{rand} \coloneqq [ (CR_\texttt{max} - CR_\texttt{min})/2, (CF_\texttt{max} - CF_\texttt{min})/2] $.
Then for an $\alpha = 0.5$, we parameterize the behavior policy $\beta$ using a normal distirbution $\mathcal N(\mu_\beta, \sigma)$, where $\mu_\beta \coloneqq \alpha \mu_\pi + (1-\alpha) \mu_\texttt{rand}$.

\subsection{Additional Experimental Results}

In Figure \ref{fig:estims}, we present comparison of the two naive estimators, and the proposed estimators (with and without CDIS) on three domains.
For the recommender system, the horizon length is one and hence our estimator with and without CDIS behave exactly the same.
We see a similar behavior for the diabetes treatment, since the output of the policy corresponds to the parameters of another  insulin controlling policy \citep{bastani2014model}, which makes the horizon effectively of length one.
The variance reduction benefit of CDIS can be observed in the Gridworld setting, which has a longer horizon.

The output of the naive estimator $\hat \sigma^{2!!}_n$ was outside the limits of the plotted y-axis for all the domains and hence it is not visible.
The output of the other naive estimator $\hat \sigma^{2!}_n$ is nearly unbiased for the recommender systems and the Gridworld domains, both of which have discrete actions.
For the diabetes treatment domain, it can be observed that $\hat \sigma^{2!}_n$ results in biased estimates when the number of samples are small.

\begin{figure}
    \centering
    \includegraphics[width=0.32\textwidth]{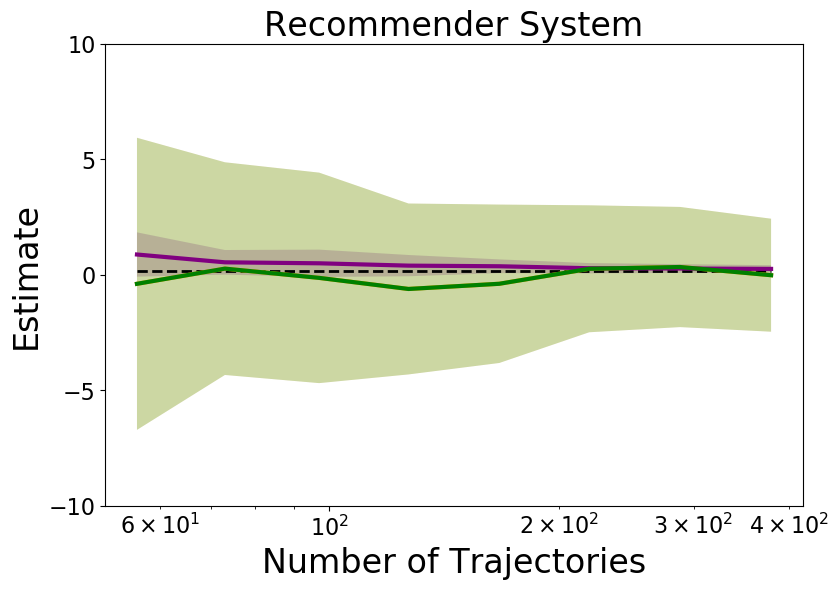}
    \includegraphics[width=0.32\textwidth]{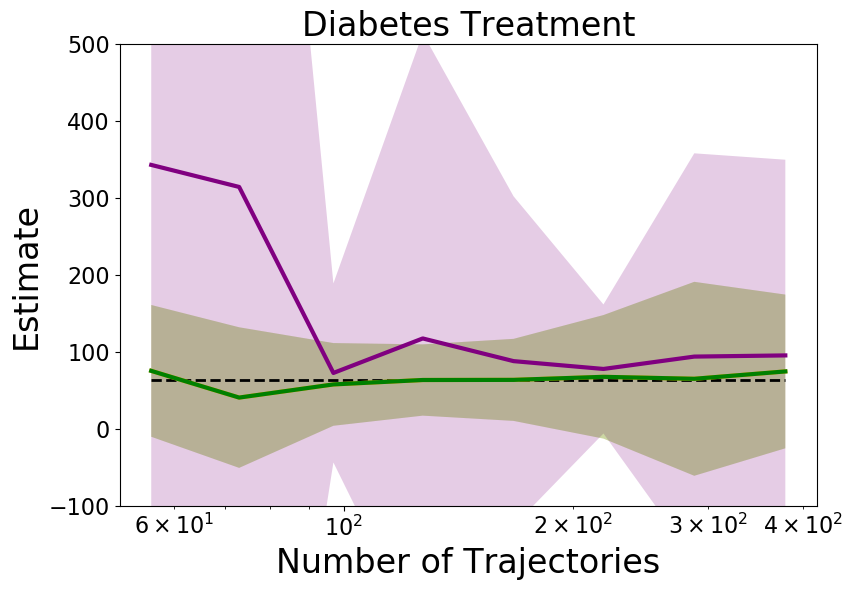}
    \includegraphics[width=0.32\textwidth]{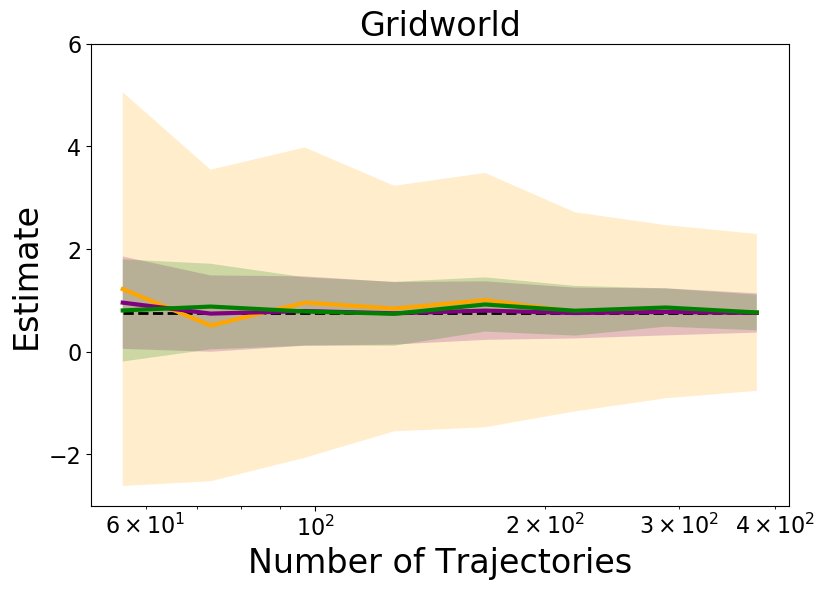}
    \\
    \includegraphics[width=0.6\textwidth]{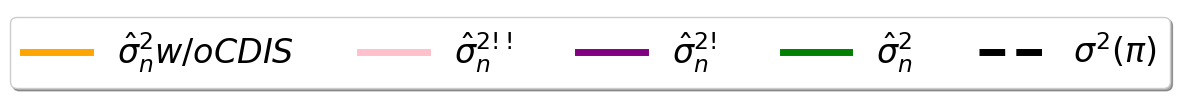}
    \caption{Comparison of different estimators on three domains. 
    The estimate value for $\sigma^{2!!}$ was outside the limits of the y-axis for all the domains and hence is not visible. 
    For the recommender system and the diabetes treatment domain, proposed estimators $\hat \sigma_n^2$, with and without coupled-decision importance sampling (CDIS), overlap.
    Results were averaged over 100 trials and the shaded regions correspond to one standard deviation.}
    \label{fig:estims}
\end{figure}

\end{document}